\newtheorem{assumption}{Assumption}
\newtheorem{theorem}{Theorem}
\newtheorem{lemma}{Lemma}
\newcommand{\ntar}[0]{{n_\textnormal{tar}}}
\newcommand{\agenttrj}[0]{{\tau_\textnormal{a}}}
\newcommand{\agenttrjdot}[0]{{\dot{\tau}_\textnormal{a}}}
\newcommand{\Qagt}[0]{{\mathcal{Q}_\textnormal{a}}}
\newcommand{\qtari}[0]{{q_{\textnormal{tar},i}}}
\newcommand{\qagtzero}[0]{{q_{\textnormal{a},0}}}
\newcommand{\qagt}[0]{{q_{\textnormal{a}}}}
\newcommand{\TargetSpace}[0]{{\mathcal{Q}_{\textnormal{tar}}}}
\newcolumntype{?}{!{\vrule width 1pt}}
\begin{document}

\title{Parallel, Asymptotically Optimal Algorithms for Moving Target Traveling Salesman Problems}

\author{Anoop Bhat$^{1}$, Geordan Gutow$^{2}$, Bhaskar Vundurthy$^{1}$,\\Zhongqiang Ren$^{3}$, Sivakumar Rathinam$^{4}$, and Howie Choset$^{1}$
\thanks{$^{1}$Robotics Institute at Carnegie Mellon University, 5000 Forbes Ave., Pittsburgh, PA 15213, USA. Emails: \{agbhat,
ggutow, pvundurt, choset\}@andrew.cmu.edu}%
\thanks{$^{2}$Mechanical and Aerospace Engineering at Michigan Technological University, Houghton, MI 49931. Email: gmgutow@mtu.edu}%
\thanks{$^{3}$Global College at Shanghai Jiao Tong University, Shanghai, China. Email: zhongqiang.ren@sjtu.edu.cn}%
\thanks{$^{4}$Department of Mechanical Engineering and Department of Computer Science and Engineering at Texas A\&M University, College Station, TX 77843. Email: srathinam@tamu.edu}%
}



\maketitle

\IEEEpubidadjcol

\begin{abstract}
The Moving Target Traveling Salesman Problem (MT-TSP) seeks a trajectory that intercepts several moving targets, within a particular time window for each target. When generic nonlinear target trajectories or kinematic constraints on the agent are present, no prior algorithm guarantees convergence to an optimal MT-TSP solution. Therefore, we introduce the Iterated Random Generalized (IRG) TSP framework. The idea behind IRG is to alternate between randomly sampling a set of agent configuration-time points, corresponding to interceptions of targets, and finding a sequence of interception points by solving a generalized TSP (GTSP). This alternation asymptotically converges to the optimum. We introduce two parallel algorithms within the IRG framework. The first algorithm, IRG-PGLNS, solves GTSPs using PGLNS, our parallelized extension of state-of-the-art solver GLNS. The second algorithm, Parallel Communicating GTSPs (PCG), solves GTSPs for several sets of points simultaneously. We present numerical results for three MT-TSP variants: one where intercepting a target only requires coming within a particular distance, another where the agent is a variable-speed Dubins car, and a third where the agent is a robot arm. We show that IRG-PGLNS and PCG converge faster than a baseline based on prior work. We further validate our framework with physical robot experiments.
\end{abstract}

\begin{IEEEkeywords}
Motion planning, traveling salesman problem, combinatorial search, parallelization, Dubins car.
\end{IEEEkeywords}

\section{Introduction}\label{sec:intro}

\begin{figure}
    \centering
    \includegraphics[width=0.49\textwidth]{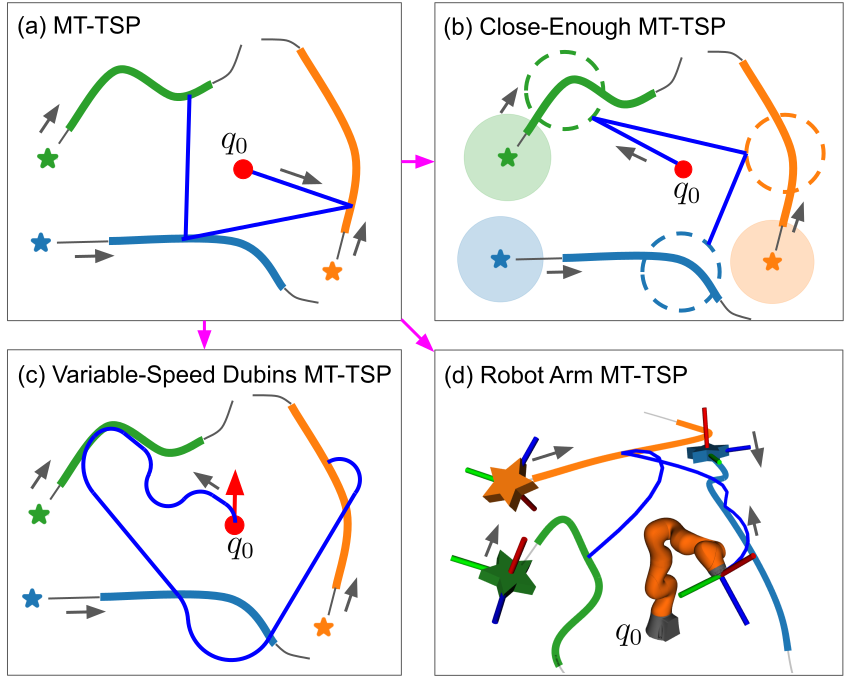}
    \vspace{-0.7cm}
    \caption{MT-TSP and three variants. In all images, targets (stars) move along trajectories with time windows shown in bold colored lines. Targets' locations are shown at $t = 0$. Agent's trajectory (blue) begins at initial configuration $q_0$ and intercepts all targets. We consider Hamiltonian paths, where the agent does not need to return to $q_0$. (a) MT-TSP, where the agent is limited by a maximum speed. (b) Close-Enough MT-TSP, where agent has a maximum speed, and each target is surrounded by a disc of positions where it may be intercepted. Filled discs are centered at the targets' positions at $t = 0$, and dashed, unfilled discs indicate disc locations at moment of interception. (c) Variable-Speed Dubins MT-TSP, where agent has a minimum speed, maximum speed, and a minimum turning radius that increases with speed. (d) Robot Arm MT-TSP. Agent is a 7-DOF arm (Kuka iiwa), where each joint has a speed limit. Intercepting a target requires matching the end-effector's pose with the target's pose. Poses are visualized at $t = 0$ with reference frames.}    
    \label{fig:intro_fig}
    \vspace{-0.6cm}
\end{figure}

The Traveling Salesman Problem (TSP) is a classic optimization problem with broad applications in various fields, including logistics, manufacturing, and robotics \cite{cook2011traveling, gutin2006traveling}. Given a set of targets (often called ``locations" or ``cities") and the cost of travel between each target pair, the TSP seeks a minimum-cost order of targets for an agent to visit. However, several robotic applications require planning to visit moving targets: midair refueling \cite{barnes2004solving}, optimization of fishing routes \cite{granado2024fishing,groba2015solving}, resupplying ships at sea \cite{brown2017scheduling}, surveillance \cite{marlow2007travelling,wang2023moving,de2019experimental}, and intercepting dangerous projectiles \cite{stieber2022DealingWithTime,smith2021assessment,helvig2003moving}. These applications lead to the Moving Target TSP (MT-TSP) \cite{helvig2003moving}, which is more challenging than the TSP, as it seeks to simultaneously optimize not only the visiting order of targets, but also a kinematically feasible agent trajectory that arrives at each target within a specified time window. Moreover, unlike in the TSP, the travel costs between targets are not fixed in the MT-TSP, but rather depend on the location in space and time at which the agent intercepts each target.

\IEEEpubidadjcol

In this paper, we address the MT-TSP, as well as three generalizations, shown in Fig. \ref{fig:intro_fig}:
\begin{itemize}
    \item \textbf{Close-Enough MT-TSP}: the agent needs only to pass through a disc centered on the moving target. This is motivated by scenarios such as wireless data transmission, where the disc represents the communication range.
    \item \textbf{Variable-Speed Dubins MT-TSP}: the agent has a minimum and maximum speed, as well as a minimum turning radius that increases with its speed. This is motivated by applications with ground vehicles and fixed-wing UAVs.
    \item \textbf{Robot Arm MT-TSP}: the agent is a redundant robotic arm that must intercept moving targets (e.g. parts on a conveyor belt) using its end effector.
\end{itemize}
All of these MT-TSP variants generalize the TSP, and thus finding optimal solutions is NP-hard \cite{hammar1999,helvig2003moving}. Furthermore, due to the presence of time windows, deciding whether a feasible solution even exists is NP-hard \cite{savelsbergh1985local}. To our knowledge, no prior work addresses any of the three MT-TSP variants above in the presence of time windows and nonlinear target trajectories. The closest related work is \cite{ding2022memetic}, which addresses the Close-Enough MT-TSP without time windows for a Dubins car. In this paper, we address all three variants in the presence of time windows using a single algorithmic framework.

\IEEEpubidadjcol

We call our framework Iterated Random Generalized (IRG) TSP. Algorithms within the IRG framework operate within one or more parallel processes. Each process performs several iterations, where each iteration constructs a \textit{sample point graph} containing a finite number of space-time points sampled along each target's trajectory. The sample point graph has clusters, where the points corresponding to a particular target form a cluster. An edge connects a point $s$ from one cluster to a point $s'$ in a different cluster if a transition from $s$ to $s'$ is kinematically feasible. For a robot whose only kinematic constraint is a speed limit, kinematically feasible simply means that there is enough time to travel from $s$ to $s'$. 

\IEEEpubidadjcol

After constructing a sample point graph, we solve a Generalized Traveling Salesman Problem (GTSP), which seeks a minimum-cost tour that visits exactly one point from each cluster. In subsequent iterations, we generate a new set of sample points that includes both randomly sampled points and points selected from previous GTSP solutions. We prove that our approach is asymptotically optimal, i.e. it converges asymptotically to an optimal solution through this iterative refinement.

\IEEEpubidadjcol

Algorithms in the IRG framework differ in the number of GTSPs solved simultaneously, and the method of solving the GTSP. Our first presented IRG algorithm, called IRG-PGLNS, solves one GTSP at a time, but uses a parallel GTSP solver, which we call Parallel Generalized Large Neighborhood Search (PGLNS). PGLNS extends the state-of-the-art GTSP solver Generalized Large Neighborhood Search (GLNS) \cite{smith2017glns} using ideas from parallel large neighborhood search \cite{ropke2009palns}. Our second presented IRG algorithm, called Parallel Communicating GTSPs (PCG), solves several GTSPs simultaneously, where each GTSP is solved using GLNS.

\IEEEpubidadjcol


Note that the sample point graphs in the MT-TSP are incomplete graphs, as it is often kinematically infeasible to travel between pairs of sampled points. Our analysis shows that state-of-the-art GTSP solvers, such as GLNS \cite{smith2017glns}, struggle to find feasible solutions on these incomplete graphs. To overcome this challenge, we present a new approach to finding a feasible GTSP solution, adapting pruning methods from the literature on the TSP with time windows \cite{Dumas1995OptimalAlgorithm}. This is critically important for the IRG framework to find its initial feasible solution, upon which it then iterates. Fig. \ref{fig:overall_framework_flowchart} summarizes the IRG framework's contributions.

\begin{figure}
    \centering
    \includegraphics[width=0.48\textwidth]{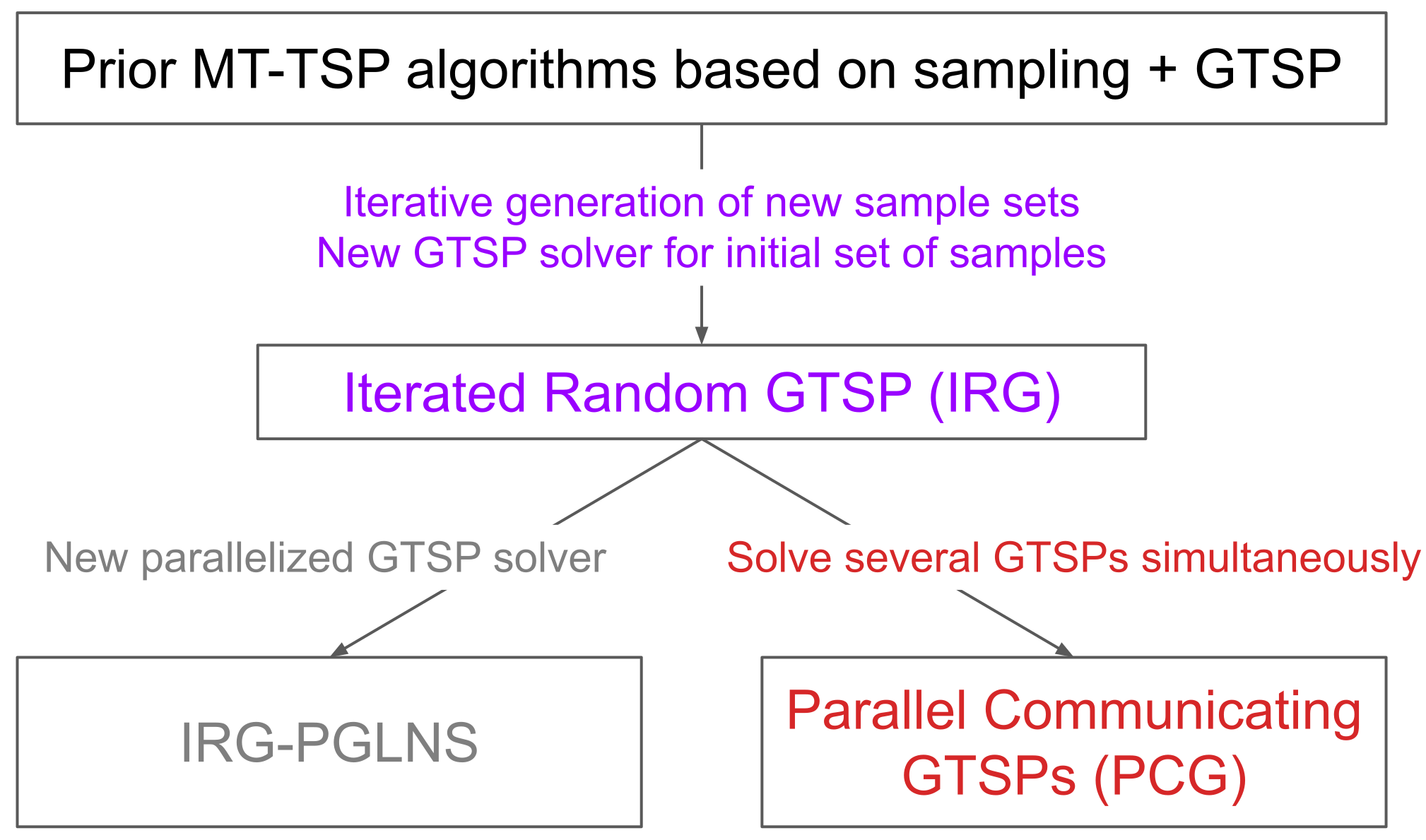}
    \vspace{-0.3cm}
    \caption{Contributions of IRG framework. As discussed in Section \ref{sec:mt_tsp}, prior MT-TSP methods exist that sample trajectories of targets into points, then solve a GTSP \cite{stieber2022DealingWithTime, philip2025CStar, Li2019RendezvousPlanning,mathew2015multirobot}. IRG improves upon these methods by iteratively generating new sample sets, contributing a new GTSP solver for the first set of samples, and accelerating convergence via two parallelization approaches.}
    \label{fig:overall_framework_flowchart}
    \vspace{-0.6cm}
\end{figure}

\IEEEpubidadjcol

We validate our methods with numerical experiments and experiments on a physical robot. The numerical experiments demonstrate that IRG-PGLNS and PCG converge faster than a baseline based on the memetic algorithm from \cite{ding2022memetic}. In addition, we demonstrate the importance of different components of the algorithms via ablation studies. We also show that PCG outperforms IRG-PGLNS for the Close-Enough MT-TSP, and that IRG-PGLNS outperforms PCG for the Variable-Speed Dubins MT-TSP. Finally, we show that our initial solution generation method, common to all IRG algorithms, finds feasible solutions faster than existing GTSP solution methods.

For our robot experiments, we track trajectories planned by the PCG algorithm, as well as baselines and ablations, on a mobile robot. We demonstrate that the executed trajectories from PCG tend to have smaller cost than the executed trajectories from the baselines and ablations.

\IEEEpubidadjcol

\section{Related Work}\label{sec:related_work}
\subsection{MT-TSP}\label{sec:mt_tsp}
Existing optimal algorithms\footnote{We say an algorithm is ``optimal" if it guarantees that it finds an optimal solution, and ``suboptimal" otherwise.} for the MT-TSP, based on mixed-integer programming (MIP), are limited to linear or piecewise-linear trajectories of targets and assume that the agent can move omnidirectionally \cite{philip2024mixedinteger, stieber2022DealingWithTime,stieber2022multiple,philip2025mixed}, while our work does not make these assumptions. \cite{philip2025CStar} lower bounds the optimal cost when the targets follow piecewise-linear trajectories or Dubins curves, and the objective is to minimize the agent's travel time. For finding feasible agent trajectories in the presence of generic nonlinear target trajectories and motion constraints on the agent, suboptimal algorithms exist~\cite{bourjolly2006orbit, stieber2022DealingWithTime, philip2025CStar, Li2019RendezvousPlanning,mathew2015multirobot, wang2023moving,choubey2013moving, de2019experimental,ding2022memetic}. For example, sampling-based algorithms \cite{stieber2022DealingWithTime, philip2025CStar, Li2019RendezvousPlanning,mathew2015multirobot} proceed by sampling the trajectories of targets into points in space-time, then finding a sequence of points to visit by solving a GTSP.
As we describe in the rest of this section, similar sampling-based methods exist for the Close-Enough TSP, Dubins TSP, and TSPs for robot arms. Our work extends these prior sampling-based methods. In particular, we incorporate resampling to achieve asymptotic optimality, as well as parallelization for speed.

Another suboptimal method for the MT-TSP is the memetic algorithm presented in \cite{ding2022memetic}, which addresses a Dubins Close-Enough MT-TSP without time windows. In our experiments, we adapt the method from \cite{ding2022memetic} to handle time windows and use it as a baseline. Finally, \cite{zhang2025two} provides a suboptimal method for a Close-Enough MT-TSP without time windows, but is specialized to linear target trajectories.

\subsection{Close-Enough TSP with Static Targets}\label{sec:close_enough_tsp}
The Close-Enough TSP \cite{gulczynski2006close} seeks the shortest path in space that visits a set of static targets, where visiting a target requires entering a disc centered at the target's location. Optimal algorithms have been developed for the Close-Enough TSP \cite{coutinho2016branch,zhang2023results}. The computational burden of optimal methods has motivated the development of suboptimal methods \cite{carrabs2017improved,carrabs2020adaptive,wang2019steiner,di2022genetic}. Our work on the Close-Enough MT-TSP extends the method from \cite{carrabs2017improved,carrabs2020adaptive}, which samples each target's disc into points in space, then solves a GTSP to find an order of targets to visit and a sampled point for each target. Other suboptimal approaches for the Close-Enough TSP include the variable neighborhood search in \cite{wang2019steiner} and the genetic algorithm in \cite{di2022genetic}.

\subsection{Dubins TSP with Static Targets}\label{sec:dubins_tsp}
The Dubins TSP \cite{savla2005point} seeks the shortest path in space that visits a set of stationary target positions such that the path satisfies a minimum turning radius constraint. The challenge in the Dubins TSP that distinguishes it from the classic TSP is determining the agent's heading angle at each target's position: if the heading at each target is given, then the cost of travel between each pair of targets is given by the length of a Dubins path \cite{dubins1957curves}, and the problem reduces to a classic TSP. While there are no optimal solvers for the Dubins TSP, methods have been developed to lower bound the optimal cost \cite{manyam2018tightly}. A common suboptimal approach is to sample a set of heading angles for each target, then solve a GTSP to select an order of targets and one of the sampled heading angles at each target \cite{ny2011dubins,cohen2017discretized,oberlin2010today,faigl2020fast}. Our work combines this approach with sampling-based methods for the MT-TSP. Another suboptimal approach for the Dubins TSP is the decoupled approach in \cite{ma2006receding}, which computes a sequence of targets using a Euclidean TSP and then optimizes the heading angles along the sequence. Alternatively, the approach in \cite{drchal2020wism} combines an evolutionary algorithm with machine learning.

Prior work \cite{kuvcerova2021variable} has also considered a Variable-Speed Dubins TSP where the agent's minimum turning radius is not fixed, but is instead a function of its speed, and there is an upper limit on the speed's derivative. To simplify the problem, \cite{kuvcerova2021variable} restricts the agent to move at a constant speed when turning and only accelerate when moving straight. Additionally, \cite{kuvcerova2021variable} only allows the agent to select from a finite set of speeds during constant-speed segments. \cite{wilson2025generalized} provides another model for a variable-speed Dubins car, which similarly selects from a finite set of speeds, but ignores limits on the speed's derivative. Our work adopts a model similar to \cite{wilson2025generalized}. Thus for a robot to track trajectories planned by our algorithms in application, additional techniques described in \cite{wilson2025generalized} may be needed to handle velocity continuity and acceleration constraints.

\subsection{Robot Arm TSP with Static Targets}\label{sec:robot_arm_tsp}
Variants of the TSP for a robot arm generally seek a minimum-cost path or trajectory through a robot arm's configuration space such that the end-effector pose passes through a given set of stationary target poses \cite{alatartsev2015robotic}. Different variants arise depending on whether the arm is redundant, as well as the constraints the arm must satisfy, e.g. joint speed limits \cite{abdel1990application}, joint torque limits \cite{dubowsky1989planning}, and/or obstacle avoidance \cite{wurll1999multi,saha2006planning,ruiz2018robotsp,zacharia2013task}. Our work considers a redundant arm with joint speed limits and no obstacles. One of the challenges in solving a TSP for a redundant arm is that along with finding an ordering of target end-effector poses, we must also select from the infinite number of arm configurations that can achieve each target pose. Our approach to handling redundancy in the Robot Arm MT-TSP extends \cite{saha2006planning}, which samples a set of arm configurations at each end-effector pose, then solves a GTSP to select a sequence of configurations. Another approach for handling redundancy for TSPs with robot arms is the decoupled approach from RoboTSP \cite{ruiz2018robotsp}, which finds a sequence of targets first by solving a TSP in end-effector space, then optimizes the arm configuration at each target with a sampling-based method, given the sequence. Additionally, \cite{zacharia2013task} applies a genetic algorithm to optimize the sequence of targets and arm configurations simultaneously.

\subsection{Parallel TSP Algorithms}
Parallel algorithms exist for the TSP \cite{verhoeven1995parallel, manfrin2006parallel, schneider1996searching, romanuke2024deep, fiechter1994parallel,ropke2009palns}, but they only provide means of optimizing an ordering of targets, whereas in the MT-TSP, we must also optimize the time and configuration at which the agent intercepts each target. Optimal MT-TSP algorithms \cite{philip2024mixedinteger, stieber2022DealingWithTime,stieber2022multiple,philip2025mixed} using MIP solvers such as Gurobi \cite{gurobi} can leverage the parallel capabilities of the MIP solver, but these optimal algorithms carry the limitations discussed in Section \ref{sec:mt_tsp}. Sampling-based MT-TSP algorithms using integer programming (IP) to solve their GTSP can similarly leverage their IP solver's parallel capabilities, but IP's computation time scales poorly with the number of targets, which we demonstrate in our experiments in Section \ref{sec:numerical_results}. Currently GLNS \cite{smith2017glns}, the most scalable method of solving GTSPs, is serial, and one of our contributions is the combination of parallel large neighborhood search \cite{ropke2009palns} with GLNS \cite{smith2017glns} in the new parallel GTSP solver PGLNS.

\SetKwFunction{RandomSamples}{RandomSamples}
\SetKwFunction{RandConfig}{RandConfig}
\SetKwFunction{TrajExists}{TrajExists}
\SetKwFunction{GetTraj}{GetTraj}
\SetKwFunction{MapSample}{MapSample}

\section{Problem Setup}\label{sec:problem_setup}

\subsection{Problem Statement}
We consider an agent with configuration space $\Qagt$, which is determined by the particular MT-TSP variant being considered. The agent has an initial configuration $\qagtzero$, which may correspond to a point robot that must get ``close enough" to a target, a Dubins car, or a robotic arm. Let the agent's trajectory be $\agenttrj : \mathbb{R}^+ \rightarrow \Qagt$. We assume that in each agent configuration $\qagt$, there is a set $\mathcal{A}_{\qagt}$ of admissible agent velocities, and we say that a trajectory $\agenttrj$ is \emph{kinematically feasible} in an interval $[t_0, t_f]$ if $\agenttrjdot(t) \in \mathcal{A}_{\agenttrj(t)}$ for all $t \in [t_0, t_f]$. We use $\mathcal{A}_{\qagt}$ to incorporate kinematic constraints, such as a minimum turning radius or a speed limit.

We denote the set of moving targets as $\mathcal{I} = \{1, 2, \dots, \ntar\}$, where $\ntar$ is the number of targets. Each target moves through a space $\TargetSpace$, which we define for each MT-TSP variant in Section \ref{sec:preliminaries}. Note that $\TargetSpace$ is the configuration space of an individual target, so the joint configuration space of all the targets is $\TargetSpace^{n_\text{tar}}$. The trajectory of target $i$ is $\tau_i: \mathbb{R}^+ \rightarrow \TargetSpace$. The time interval where target $i$ must be intercepted, i.e. the \emph{time window} of target $i$, is $[\underline{t}_i, \overline{t}_i] \subset \mathbb{R}^+$, where $\underline{t}_i$ is the start time and $\overline{t}_i$ is the end time. We assume that we are given an \emph{interception function} $\Xi_i: \Qagt \times \TargetSpace \rightarrow \{0, 1\}$, where if $\Xi_i(\qagt, \qtari) = 1$, the agent configuration $\qagt$ is said to \textit{intercept} the target configuration $q_{\text{tar}, i}$. If there exists $t_i \in [\underline{t}_i, \overline{t}_i]$ such that $\Xi_i(\agenttrj(t_i), \tau_i(t_i)) = 1$, we say that the agent trajectory $\tau_\text{a}$ intercepts target $i$. In the Close-Enough MT-TSP and Robot Arm MT-TSP, the cost function is the distance traveled in $\Qagt$. In the Variable-Speed Dubins MT-TSP, the cost function is the agent's distance traveled in $\mathbb{R}^2$.

\noindent \textbf{Problem:} The MT-TSP seeks a minimum-cost, kinematically feasible trajectory $\agenttrj$ that starts at $\qagtzero$ at $t = 0$ and intercepts all targets.

\subsection{Preliminaries}\label{sec:preliminaries}
IRG is a generic framework designed for problems that fit the description above. To apply IRG to a specific MT-TSP variant, we must specify $\Qagt$, $\TargetSpace$, $\mathcal{A}_{\qagt}$, and $\Xi_i$, as well as the following functions:
\begin{itemize}
    \item $\RandConfig_i$: given a target $i$ and time $t \in [\underline{t}_i, \overline{t}_i]$, $\RandConfig_i(t)$ outputs a random $\qagt \in \Qagt$ that intercepts target $i$ at time $t$. 
    
    \item $\TrajExists$: given some $\qagt \in \Qagt$, $t \in \mathbb{R}^+$, $\qagt' \in \Qagt$, and $t' \in \mathbb{R}^+$, $\TrajExists(\qagt, t, \qagt', t') = 1$ if a kinematically feasible agent trajectory $\agenttrj$ exists with $\agenttrj(t) = \qagt$ and $\agenttrj(t') = \qagt'$, and $\TrajExists(\qagt, t, \qagt', t') = 0$ if not.
    
    \item $\GetTraj$: given some $\qagt \in \Qagt$, $t \in \mathbb{R}^+$, $\qagt' \in \Qagt$, and $t' \in \mathbb{R}^+$, $\GetTraj(\qagt, t, \qagt', t')$ attempts to return a kinematically feasible agent trajectory $\agenttrj$ with $\agenttrj(t) = \qagt$ and $\agenttrj(t') = \qagt'$ and returns NULL if no such trajectory is found.
\end{itemize}
Next, we specify these sets and functions for the Close-Enough, Variable-Speed Dubins, and Robot Arm MT-TSPs.

\subsubsection{Close-Enough MT-TSP}
In the Close-Enough MT-TSP, $\Qagt = \TargetSpace = \mathbb{R}^2$ (recall that $\TargetSpace$ is the configuration space of an individual target). The agent has a maximum speed $v_\text{max}$. The set of admissible velocities at any configuration $\qagt$ is $\mathcal{A}_{\qagt} = \{\dot{q}_a : \|\dot{q}_a\| \leq v_\text{max}\}$. Each target $i$ is associated with a radius $r_i$, and $\Xi_i(\qagt, \qtari) = 1$ if and only if $\|\qagt - \qtari\| \leq r_i$. $\RandConfig_i(t)$ samples $\theta \in \mathbb{S}^1$ uniformly at random, then returns $\tau_i(t) + r_i\begin{bmatrix}\cos\theta\\\sin\theta\end{bmatrix}$, i.e. it randomly samples from the boundary of target $i$'s disc.\footnote{By only sampling on the disc boundaries, we sacrifice asymptotic optimality in degenerate problem instances where all optimal agent trajectories stay inside the disc of some target for the entirety of its time window.}

\TrajExists($\qagt, t, \qagt', t')$ checks if $\|\qagt - \qagt'\| \leq v_\text{max}(t' - t)$. \GetTraj($\qagt, t, \qagt', t')$ returns the constant-velocity trajectory from $(\qagt, t)$ to $(\qagt', t')$.

\subsubsection{Variable-Speed Dubins MT-TSP}\label{sec:variable_speed_dubins_problem_setup}
In the Variable-Speed Dubins MT-TSP, $\Qagt = SE(2)$ and $\TargetSpace = \mathbb{R}^2$. The agent, a variable-speed Dubins car \cite{wilson2025generalized}, has a minimum speed\footnote{When we refer to speed, curvature, or length of a trajectory or path for a variable-speed Dubins car, we refer to the position component of its trajectory or path, not the combination of the position and orientation components.} $v_\text{min}$, a maximum speed $v_\text{max}$, and a maximum turning rate $\omega_\text{max}$. While for a standard Dubins car, the minimum turning radius is fixed, a variable-speed Dubins car's minimum turning radius is proportional to its speed, i.e. for a speed $v$, the minimum turning radius is $\frac{v}{\omega_\text{max}}$. Consider an agent configuration $\qagt = (p, \phi)$ with $p \in \mathbb{R}^2$ and $\phi \in \mathbb{S}^1$. The velocity of the agent is $(\dot{p}, \dot{\phi})$. The set of admissible velocities at $\qagt$ is
\begin{align}
\begin{aligned}
 \mathcal{A}_{\qagt} = \{(\dot{p}, \dot{\phi}) : \; &\dot{p} = \|\dot{p}\|\begin{bmatrix}\cos\phi\\\sin\phi\end{bmatrix} \; \mbox{and} \; |\dot{\phi}| \in [0, \omega_\text{max}] \; \\&\mbox{and} \; \|\dot{p}\| \in [v_\text{min}, v_\text{max}]\}.
\end{aligned}
\end{align}
Consider a target configuration $\qtari \in \TargetSpace$. $\Xi_i(q, \qtari) = 1$ if and only if $p = q_{\text{tar}, i}$. $\RandConfig_i(t)$ samples a heading angle $\phi \in \mathbb{S}^1$ uniformly at random, then returns $\qagt = (\tau_i(t), \phi)$.

To define \TrajExists, we assume that the agent can only select from a finite set of speeds $\mathcal{S}_\text{speed} = \{v_1, v_2, \dots, v_{n_\text{speed}}\}$, where $n_\text{speed}$ is the number of allowed speeds. We additionally assume that between two consecutive interceptions of targets, the agent does not change its speed.
\TrajExists($\qagt, t, \qagt', t')$ returns 1 if and only if for some $v \in \mathcal{S}_\text{speed}$, a trajectory through $SE(2)$ exists from $(\qagt, t)$ to $(\qagt', t')$ with fixed speed $v$ and curvature no larger than $\frac{\omega_\text{max}}{v}$ (equivalently, turning radius no smaller than $\frac{v}{\omega_\text{max}}$); note that this check does not require a trajectory's curvature to be fixed. We check existence of such a trajectory using the methods from \cite{chen2023elongation}.

\GetTraj($\qagt, t, \qagt', t')$ iterates over the speeds in $\mathcal{S}_\text{speed}$ from smallest to largest, and for each speed $v$, attempts to generate a curvature-bounded trajectory from $(\qagt, t)$ to $(\qagt', t')$ with fixed speed $v$, using the methods from \cite{chen2023elongation}. \GetTraj returns the trajectory for the smallest $v$ where trajectory generation succeeds.

\subsubsection{Robot Arm MT-TSP}
In the Robot Arm MT-TSP, $\Qagt = [\underline{q}^1, \overline{q}^1] \times [\underline{q}^2, \overline{q}^2] \times \dots \times [\underline{q}^{\dim \Qagt}, \overline{q}^{\dim \Qagt}]$, where $[\underline{q}^j, \overline{q}^j]$ is the interval of allowable angles for joint $j$, and $\dim \Qagt$ is the number of joints in the arm. Additionally, $\TargetSpace = SE(3)$. Each joint $j$ has a speed limit $v_\text{max}^j$. The set of admissible velocities at any configuration $\qagt$ is $\mathcal{A}_{\qagt} = \{\dot{q}_a : |\dot{q}_a^j| \leq v_\text{max}^j \; \forall j\}$, where $\qagt^j$ is the $j$th element of $\qagt$. Let $FK: \Qagt \rightarrow SE(3)$ be the forward kinematic map. For $\qagt \in \Qagt$ and $\qtari \in \TargetSpace$, $\Xi_i(\qagt, \qtari) = 1$ if and only if $FK(\qagt) = \qtari$.

$\RandConfig_i(t)$ generates an inverse kinematics (IK) solution for the end-effector pose $\tau_i(t) \in SE(3)$. When there are multiple IK solutions, we have several options for redundancy resolution.
First, suppose we are using analytical IK. In several analytical IK solvers for redundant arms \cite{faria2018position, he2021analytical, singh2010analytical}, we can get a unique IK solution by specifying additional solver-specific parameters along with the end-effector pose. Thus, when using these analytical solvers, $\RandConfig_i(t)$ samples the solver-specific parameters uniformly at random, then passes these parameters along with $\tau_i(t)$ to the solver, obtaining a unique IK solution. If we instead use a numerical IK solver, we randomly sample an initial guess in $\Qagt$, pass the guess to the solver, and obtain a unique IK solution. Whether we use analytical or numerical IK, if the IK solution $\qagt$ does not fall within joint angle limits, $\RandConfig_i$ returns NULL.

\TrajExists($\qagt, t, \qagt', t')$ is implemented by checking if $|\qagt^j - \qagt'^j| \leq v_\text{max}^j(t' - t)$ for all $j \in \{1, 2, \dots, \dim\Qagt\}$. \GetTraj($\qagt, t, \qagt', t')$ returns the constant-velocity trajectory from $(\qagt, t)$ to $(\qagt', t')$.

\section{IRG-PGLNS Algorithm}\label{sec:irg_pglns}

\begin{figure}
    \centering
    \includegraphics[width=0.5\textwidth]{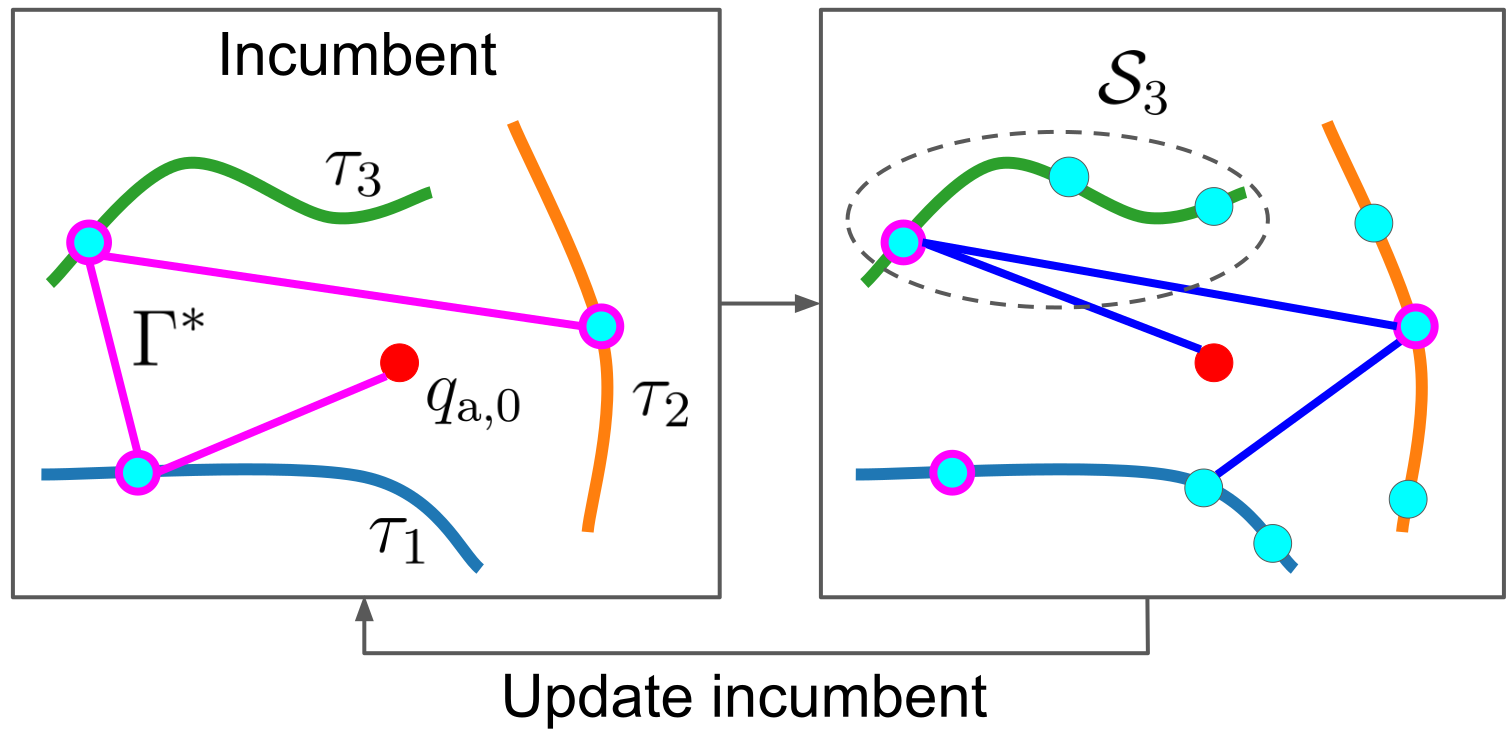}
    \vspace{-0.75cm}
    \caption{An iteration of tour improvement in IRG-PGLNS. The trajectory corresponding to the incumbent tour $\Gamma^*$ (the least-cost tour found so far) is shown in pink. Points in the incumbent are outlined in pink. To improve the incumbent, as seen in the right-hand box, we generate a set of sample points $\mathcal{S}_i$ for each target $i$, including random points and the point from the incumbent. We then solve a GTSP to find an updated incumbent.}
    \label{fig:irg_alg_expl_fig}
    \vspace{-0.3cm}
\end{figure}

\SetKwFunction{GenerateInitialTour}{GenerateInitialTour}
\SetKwFunction{TourViaGTSP}{TourViaGTSP}

In this section, we present our first instantiation of the IRG framework, which we call IRG-PGLNS, due to its use of our novel GTSP solver, PGLNS. IRG-PGLNS searches the space of \emph{tours}, where a tour is a sequence of points (configuration-time pairs) in $\Qagt \times \mathbb{R}^+$ beginning with $(\qagtzero, 0)$, such that each target is intercepted by one point in the sequence. Upon termination, IRG-PGLNS uses \GetTraj to connect each pair of consecutive points in its best tour $\Gamma^*$ to form a trajectory.\footnote{In this work, we do not require continuity of the trajectory's velocity.} We search the space of tours because in the Variable-Speed Dubins MT-TSP, checking if a trajectory exists between two points (i.e. invoking \TrajExists) is computationally cheaper than computing the trajectory using \GetTraj. We leverage the ability to compute a tour's cost without computing the associated trajectory.

\SetKwFunction{GetInterceptionPoint}{GetInterceptionPoint}

IRG-PGLNS is described by Alg. \ref{alg:IRG_PGLNS} and illustrated in Fig. \ref{fig:irg_alg_expl_fig}. Throughout IRG-PGLNS, we refer to the best tour we have found so far as the \emph{incumbent}, denoted as $\Gamma^*$. IRG-PGLNS initializes $\Gamma^*$ (Alg. \ref{alg:IRG_PGLNS}, Line \ref{algline:irg_gen_init_tour}) using the \GenerateInitialTour procedure, detailed in Section \ref{subsec:init_tour_gen}. IRG-PGLNS then begins its tour improvement loop (Line \ref{algline:irg_tour_improvement_loop}). Each iteration of this loop generates a set of sample points $\mathcal{S}_i$ for each target $i$. $\mathcal{S}_i$ contains the point where $\Gamma^*$ visits target $i$, denoted by $\GetInterceptionPoint(i, \Gamma^*)$, as well as $n_\text{rand}$ points randomly sampled using Alg. \ref{alg:RandomSamples}. Alg. \ref{alg:RandomSamples} repeatedly samples a time $t$ in target $i$'s time window at random, then samples an agent configuration $\qagt$ using $\RandConfig_i$, to obtain a random point $(\qagt, t)$. While Fig. \ref{fig:irg_alg_expl_fig} shows the sampling strategy for the MT-TSP, Fig. \ref{fig:sampling_for_three_variants} shows the sampling strategy for the Close-Enough, Variable-Speed Dubins, and Robot Arm MT-TSPs.

\begin{figure}
    \centering
    \includegraphics[width=0.5\textwidth]{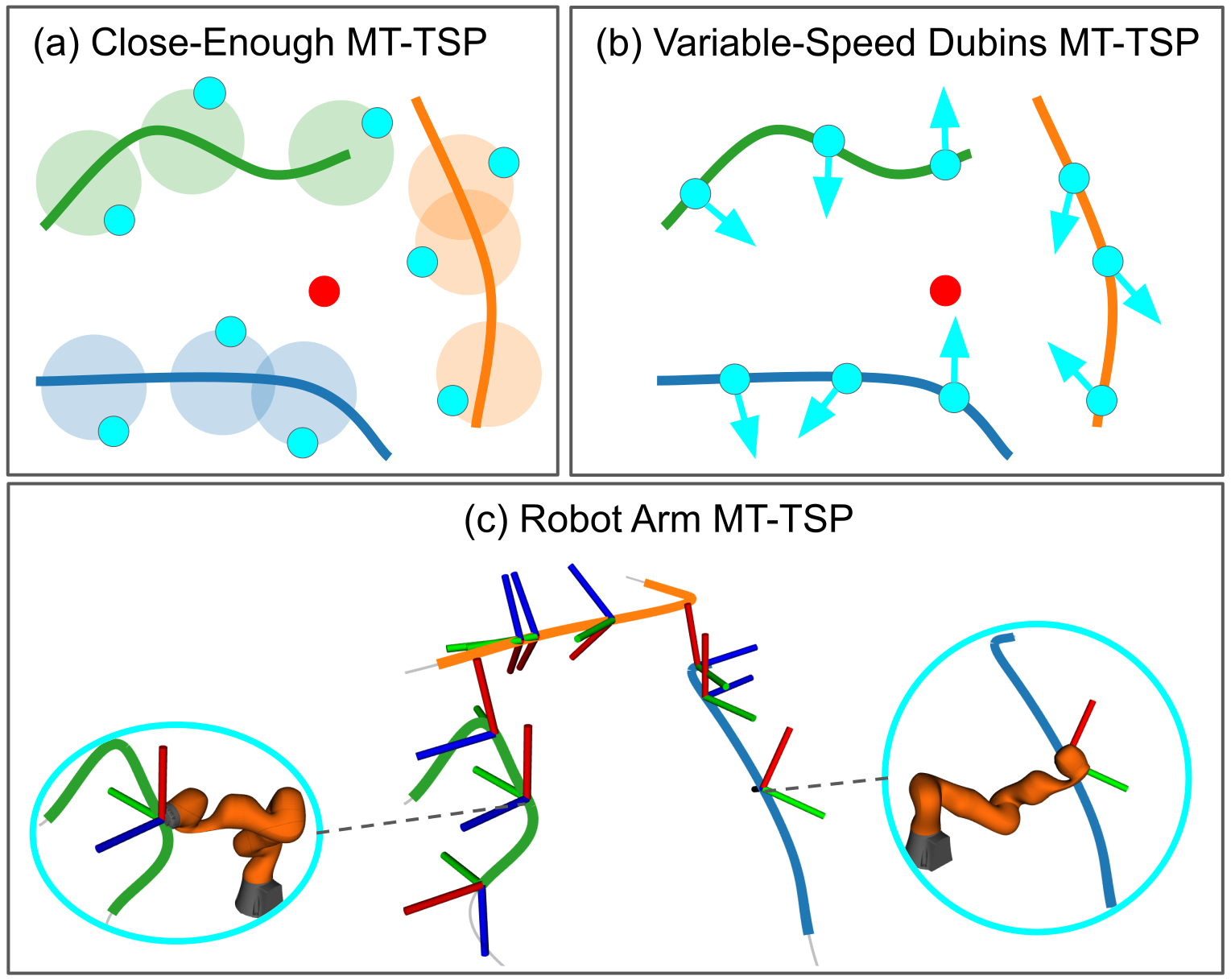}
    \vspace{-0.75cm}
    \caption{IRG-PGLNS sampling strategy applied to three MT-TSP variants. (a) Close-Enough MT-TSP: samples lie on disc boundaries. (b) Variable-Speed Dubins MT-TSP: a sample point's position matches the target's position at the sampled time, but the sample point's heading is random. (c) Robot Arm MT-TSP: each sample point is an arm configuration-time pair, where the end-effector pose for the arm configuration must match the target's pose at the sampled time. Target poses at sampled times are shown with reference frames.}
    \label{fig:sampling_for_three_variants}
    \vspace{-0.3cm}
\end{figure}

After constructing $\mathcal{S}_i$ for all targets, IRG-PGLNS finds a new incumbent $\Gamma^*$ using \TourViaGTSP, described in Section \ref{subsec:tour_via_gtsp}. We pass the current $\Gamma^*$ as a seed tour that \TourViaGTSP improves upon. When planning time runs out, IRG-PGLNS returns the trajectory associated with $\Gamma^*$.

\SetKwFunction{UniformRandomSample}{UniformRandomSample}
\SetKwFunction{ConstructGTSPGraph}{ConstructGTSPGraph}
\SetKwFunction{DepthFirstSearch}{DepthFirstSearch}

\begin{algorithm}\label{alg:IRG_PGLNS}
\SetKwFunction{IRGPGLNS}{IRG-PGLNS}
\caption{\protect\IRGPGLNS}
$\Gamma^* = $ \GenerateInitialTour()\;\label{algline:irg_gen_init_tour}
\While{\upshape Time limit has not been reached}{\label{algline:irg_tour_improvement_loop}
    \For{\upshape $i \in \{1, 2, \dots, \ntar\}$}{\label{algline:irg_sampling_loop}
        $\mathcal{S}_i = \{\GetInterceptionPoint(i, \Gamma^*)\}\cup\RandomSamples(i, n_\text{rand})$\;
    }
    $\Gamma^* = \TourViaGTSP(\{\mathcal{S}_i\}_{i \in \mathcal{I}}, \Gamma^*)$\;\label{algline:irg_end_tour_improvement_loop}
}

Return trajectory associated with $\Gamma^*$\;\label{algline:irg_return_trj}
\end{algorithm}
\vspace{-0.7cm}

\subsection{Initial Tour Generation}\label{subsec:init_tour_gen}
\GenerateInitialTour, described by Alg. \ref{alg:InitialTourGen}, begins by randomly sampling a set of points $\mathcal{S}_i$ for each target $i$ (Lines \ref{algline:init_tour_sampling_loop}-\ref{algline:init_tour_sampling}). Alg. \ref{alg:InitialTourGen} then seeks a tour $\Gamma^*$ visiting each target $i$ at a point in $\mathcal{S}_i$ via \TourViaGTSP (Section \ref{subsec:tour_via_gtsp}). In contrast to Alg. \ref{alg:IRG_PGLNS}, we do not pass a seed tour to \TourViaGTSP, since we do not have a seed tour to pass. For a given set of sample points, \TourViaGTSP may not find a feasible tour. If \TourViaGTSP finds a feasible tour $\Gamma^*$, Alg. \ref{alg:InitialTourGen} returns $\Gamma^*$ (Line \ref{algline:return_init_feas_tour}). Otherwise, if there is time left, Alg. \ref{alg:InitialTourGen} adds more samples to each $\mathcal{S}_i$ and attempts to find a tour again.

\begin{algorithm}\label{alg:RandomSamples}
\caption{\protect\RandomSamples($i, n$)}
$\mathcal{S} = \emptyset$\;
\For{\upshape $k \in \{1, 2, \dots, n\}$}{
    $\qagt = $ NULL\;
    \While{\upshape $\qagt$ is NULL}{
        $t = $ \UniformRandomSample($[\underline{t}_i, \overline{t}_i]$)\;
        $\qagt = \RandConfig_i(t)$\;
    }
    $\mathcal{S} = \mathcal{S}\cup\{(\qagt, t)\}$\;
}
return $\mathcal{S}$\; 
\end{algorithm}
\vspace{-0.5cm}

\begin{algorithm}\label{alg:InitialTourGen}
\caption{\protect\GenerateInitialTour}
$\mathcal{S}_i = \emptyset$ for all $i \in \mathcal{I}$\;
\While{\upshape Time limit has not been reached}{\label{algline:init_tour_main_loop}
    \For{\upshape $i \in \{1, 2, \dots, \ntar\}$}{\label{algline:init_tour_sampling_loop}
        $\mathcal{S}_i = \mathcal{S}_i\cup\RandomSamples(i, n_\text{rand, init})$\;\label{algline:init_tour_sampling}
    }
    $\Gamma^* = \TourViaGTSP(\{\mathcal{S}_i\}_{i \in \mathcal{I}})$\;
    \lIf{\upshape $\Gamma^*$ is not NULL}{return $\Gamma^*$\label{algline:return_init_feas_tour}}
}
return NULL\;
\end{algorithm}
\vspace{-0.5cm}

\subsection{Finding Tour via GTSP}\label{subsec:tour_via_gtsp}
The \TourViaGTSP function, used in Alg. \ref{alg:IRG_PGLNS} and Alg. \ref{alg:InitialTourGen}, seeks a tour where the point intercepting target $i$ is an element of $\mathcal{S}_i$. We first construct a \emph{sample point graph} $\mathcal{G} = (\mathcal{V}, \mathcal{E})$, where $\mathcal{V}$ is the set of nodes and $\mathcal{E}$ is the set of edges. $\mathcal{V}$ contains all sample points and the pairing of $\qagtzero$ with time $0$, i.e. $\mathcal{V} = \bigcup\limits_{i \in \mathcal{I}}\mathcal{S}_i \cup \{(\qagtzero, 0)\}$. An edge connects node $(\qagt, t)$ to node $(\qagt', t')$ if \TrajExists($\qagt, t, \qagt', t'$) = 1 and the nodes belong to different sets $\mathcal{S}_i$. The edge cost depends on the problem variant. In the Close-Enough and Robot Arm MT-TSPs, the edge cost is $\|\qagt' - \qagt\|$. In the Variable-Speed Dubins MT-TSP, the edge cost is $v^*(\qagt, t, \qagt', t')(t' - t)$, where $v^*(\qagt, t, \qagt', t')$ is the minimum $v \in \mathcal{S}_\text{speed}$ such that the trajectory existence check performed by \TrajExists succeeds (this check is described in Section \ref{sec:variable_speed_dubins_problem_setup}). We can find a tour by finding a sequence of nodes in $\mathcal{G}$ beginning at $(\qagtzero, 0)$ and containing one node in each $\mathcal{S}_i$. This is the same as solving a GTSP on $\mathcal{G}$.

IRG-PGLNS solves the GTSP using one of two methods. If no seed tour is passed, we use the depth-first search (DFS) described in Section \ref{subsec:dfs}. If a seed tour is passed, IRG-PGLNS solves the GTSP using PGLNS, which is described in Section \ref{subsec:pglns}. We only run PGLNS if we have a seed tour because we have found that on incomplete graphs such as $\mathcal{G}$, PGLNS and its predecessor GLNS struggle to find a feasible tour unless we provide one, as shown in Section \ref{sec:mt_tsp_eval_init_tour_gen}.

PGLNS, like its predecessor GLNS, requires a matrix of integer edge costs, where the entry for row $i$, column $j$ is the cost of edge $(i, j)$. Note that $\mathcal{G}$ is an incomplete graph, i.e. there are some edges $(i, j)$ that do not exist in $\mathcal{G}$. We call such nonexistent edges \emph{infeasible} edges, as opposed to \emph{feasible} edges that actually exist in $\mathcal{G}$. To interface with PGLNS, we still need to populate costs for infeasible edges. To populate the cost matrix, we first multiply all feasible edge costs by 100, round to the nearest integer, and populate the corresponding entries in the cost matrix, effectively rounding all edge costs to two decimal places. For all infeasible edges, we need the cost to be large enough that any tour traversing an infeasible edge will have larger cost than the incumbent; thus, we set the cost equal to the cost of the current incumbent, computed using scaled and rounded edge costs, plus one.

To guarantee asymptotic optimality despite using the suboptimal GTSP solver PGLNS, \TourViaGTSP does not immediately return the tour found by PGLNS. Instead, after obtaining a tour $\Gamma^\text{tmp}$ from PGLNS, we also generate a random tour $\Gamma^{\text{rand}}$ using the current sets $\mathcal{S}_i$. If $\Gamma^\text{rand}$ has lower cost than $\Gamma^\text{tmp}$, \TourViaGTSP returns $\Gamma^\text{rand}$; otherwise, \TourViaGTSP returns $\Gamma^\text{tmp}$. This extra step of generating $\Gamma^\text{rand}$ is only of theoretical value, as $\Gamma^\text{rand}$ is almost never chosen over $\Gamma^\text{tmp}$ in practice.

\subsection{Solving GTSP via Depth-First Search}\label{subsec:dfs}
\SetKwFunction{Tar}{Tar}

When generating the initial tour, we solve the GTSP in Section \ref{subsec:tour_via_gtsp} using the DFS in Alg. \ref{alg:InitialDFS}, aiming to quickly find a feasible solution without considering its cost. Before beginning the main loop, we construct a set BEFORE[$s$] for each node $s \in \mathcal{V}$ (Lines \ref{algline:init_before}-\ref{algline:done_init_before}), containing all targets that have no sample points reachable from $s$, i.e. all targets that must be visited before $s$. The construction of BEFORE is inspired by \cite{Dumas1995OptimalAlgorithm}, which addresses the TSP with time windows.

After constructing BEFORE, we initialize a stack of \emph{search nodes} (Line \ref{algline:dfs_init_stack}), where a search node is a tuple $u = (\mathcal{U}, s)$, with $\mathcal{U} \subseteq \mathcal{I}$ and $s \in \mathcal{V}$. A search node represents the set of tours through $\mathcal{G}$ visiting the same set of targets $\mathcal{U}$ (perhaps in different orders) and terminating with the same sample point $s \in \mathcal{V}$. We maintain a backpointer for each search node $u$, denoted as $u.\text{bp}$, equal to a search node previously popped from the stack. Once we set a backpointer for a search node, the search node represents a particular tour, which we can reconstruct by traversing the backpointer chain. We also use a CLOSED set to avoid re-expanding search nodes.

Each iteration of the main search loop pops a search node $u = (\mathcal{U}, s)$ from the stack, and if $\mathcal{U} = \mathcal{I}$, the search terminates and returns a tour reconstructed from $u$ (Lines \ref{algline:dfs_pop}-\ref{algline:dfs_reconstruct}). If $\mathcal{U} \neq \mathcal{I}$, we generate the \emph{successor $\mathcal{G}$-nodes} of $u$ (Line \ref{algline:dfs_successors}), which are the nodes $s' \in \mathcal{V}$ satisfying the following conditions:
\begin{enumerate}
    \item $(s, s') \in \mathcal{E}$.\label{cond:valid_tour}

    \item $s' \notin \mathcal{S}_i \; \forall \; i \in \mathcal{U}$, ensuring each target is visited once.\label{cond:visit_each_target_once}

    \item BEFORE[$s'$] $\subseteq \mathcal{U}$, ensuring that by visiting $s'$, we do not prevent any unvisited targets from being visited.\label{cond:before_cond}
\end{enumerate}
Finally, for each successor $\mathcal{G}$-node $s'$, we generate a successor search node $u' = (\mathcal{U}\cup \{\Tar(s')\}, s')$, where $\Tar(s')$ is the target of $s'$, then push $u'$ onto the stack (Lines \ref{algline:dfs_successor_search_node}-\ref{algline:dfs_push}). We push successors onto the stack in order of decreasing edge cost from $s$ to $s'$, such that the least-cost successor gets popped from the stack next. The search nodes and successor relationships in Alg. \ref{alg:InitialDFS} implicitly form a directed acyclic graph (DAG), so we refer to Alg. \ref{alg:InitialDFS} as DAG-DFS in the experiments.

\begin{algorithm}\label{alg:InitialDFS}
\caption{\protect\DepthFirstSearch}
\SetKwFunction{ReconstructTour}{ReconstructTour}

BEFORE = dict()\;\label{algline:init_before}
\For{\upshape $s \in \mathcal{V}$}{
    BEFORE[$s$] = $\{i \in \mathcal{I} : s \notin \mathcal{S}_i \; \mbox{and} \; (\forall s' \in \mathcal{S}_i)((s, s') \notin \mathcal{E}) \}$\;
}\label{algline:done_init_before}

STACK = [$(\emptyset, (\qagtzero, 0))$]\;\label{algline:dfs_init_stack}
CLOSED = $\emptyset$\;\label{algline:dfs_init_closed}
\While{\upshape STACK is not empty}{
    $u = (\mathcal{U}, s) = $ STACK.pop()\;\label{algline:dfs_pop}
    \lIf{\upshape $u \in $ CLOSED}{continue} 
    CLOSED.insert($u$)\;
    \lIf{\upshape $\mathcal{U} = \mathcal{I}$}{return \ReconstructTour($u$)\label{algline:dfs_reconstruct}}
    \For{\upshape $s'$ in $u$.successor$\mathcal{G}$Nodes()}{\label{algline:dfs_successors}
        $u' = (\mathcal{U} \cup \{\Tar(s')\}, s')$\;\label{algline:dfs_successor_search_node}
        \lIf{\upshape $u' \in $ CLOSED}{continue} 
        $u'.$bp = $u$\;
        STACK.push($u'$)\;\label{algline:dfs_push}
    }
}
return NULL\;
\end{algorithm}
\vspace{-0.7cm}

\subsection{PGLNS}\label{subsec:pglns}
When a seed tour $\Gamma^*$ is passed to \TourViaGTSP, IRG-PGLNS solves the GTSP using PGLNS, which is described by Alg. \ref{alg:PGLNS}. Like its predecessor GLNS, PGLNS runs $n_\text{warm}$ ``warm" trials. Each warm trial begins by initializing a current tour $\Gamma^\text{c}$, a counter $i_\text{term}$, a scalar temperature $\theta$, a cooling rate $r_\text{cool}$, and several locks (Lines \ref{algline:pglns_init} to \ref{algline:pglns_init_done}). The determination of the temperature and cooling rate using the warm trial counter is explained in Section 5.3 of \cite{smith2017glns}. The warm trial then runs several threads, each of which runs an inner loop (Line \ref{algline:pglns_thread_inner_loop}).

Each iteration of a thread $j$'s inner loop copies $\Gamma^{\text{c}, j}$, removes several nodes from $\Gamma^{\text{c}, j}$, then inserts nodes back into $\Gamma^{\text{c}, j}$ (Line \ref{algline:pglns_remove_insert}). The nodes to remove are selected using a removal heuristic, and the nodes to insert are selected using an insertion heuristic. The heuristics are randomly sampled from the same bank of heuristics used in \cite{smith2017glns}. If the modified $\Gamma^{\text{c}, j}$ is accepted (determined using the simulated annealing criteria from \cite{smith2017glns}), thread $j$ updates $\Gamma^\text{c}$ to $\Gamma^{\text{c}, j}$ (Lines \ref{algline:pglns_copy_theta}-\ref{algline:pglns_update_Gamma_c_1}). Then, if $\Gamma^{\text{c}, j}$ has lower cost than $\Gamma^*$, thread $j$ updates $\Gamma^*$ (Lines \ref{algline:pglns_lock_lstar_1}-\ref{algline:pglns_update_Gamma_star_1}). Thread $j$ then performs the local optimizations described in \cite{smith2017glns} on $\Gamma^{\text{c}, j}$, and if it is still better than $\Gamma^*$ (i.e. if another thread did not find a better tour), thread $j$ updates $\Gamma^*$ again (Lines \ref{algline:pglns_local_opt}-\ref{algline:pglns_update_Gamma_star_2}).

Every time we update $\Gamma^*$, we reset $i_\text{term}$ (Line \ref{algline:pglns_reset_i_term}), and if we do not update $\Gamma^*$ in the current iteration, we increment $i_\text{term}$ (Line \ref{algline:pglns_increment_i_term}). A thread terminates its inner loop when $i_\text{term}$ reaches a threshold (Lines \ref{algline:pglns_lock_lterm}-\ref{algline:pglns_thread_break}), which is a smaller number if $\Gamma^*$ has never been improved in the current warm trial, and a larger number thereafter, as in \cite{smith2017glns}. The $n_\text{term}$ parameter corresponds to the num\_iterations parameter in GLNS, and the thresholds $n_\text{term}/6$ and $n_\text{term}/4$ are from GLNS's fast mode.

\SetKwFunction{PGLNS}{PGLNS}
\SetKwFunction{LockAndCopy}{LockAndCopy}
\SetKwFunction{RemoveAndInsert}{RemoveAndInsert}
\SetKwFunction{Lock}{Lock}
\SetKwFunction{Copy}{Copy}
\SetKwFunction{Accept}{Accept}
\SetKwFunction{Cost}{Cost}
\SetKwFunction{TermThreshold}{Threshold}
\SetKwFunction{SetTemperatureAndCoolingRate}{SetTempAndCoolingRate}
\SetKwBlock{Do}{}{end}
\SetKwFor{ParFor}{parallel for}{do}{end}
\begin{algorithm}\label{alg:PGLNS}
\caption{\protect\PGLNS($\Gamma^*$)}
\For{\upshape $i_\text{warm} = 1, 2, \dots, n_\text{warm}$}{
    $\Gamma^\text{c} = \Copy(\Gamma^*), i_\text{term} = 1$, improved = false\;\label{algline:pglns_init}
    $\theta, r_\text{cool} = \SetTemperatureAndCoolingRate(i_\text{warm})$\;
    Initialize locks $l^\text{c}$, $l^*$, $l^{\text{term}}$, $l^\theta$\;\label{algline:pglns_init_done}
    \ParFor{\upshape $j = 1, 2, \dots, n_\text{thread}$}{
        \While{\upshape true}{\label{algline:pglns_thread_inner_loop}
            With \Lock($l^\text{term}$) \Do{\label{algline:pglns_lock_lterm}
                \lIf{\upshape (not improved and $i_\text{term} > n_\text{term}/6$) or (improved and $i_\text{term} > n_\text{term}/4$) or other thread broke}
                {
                    break\label{algline:pglns_thread_break}
                }
            }
            With $\Lock(l^\text{c})$, $\Gamma^{\text{c}, j} = \Copy(\Gamma^\text{c})$\;
            $\Gamma^{\text{c}, j} = \RemoveAndInsert(\Gamma^{\text{c}, j})$\;\label{algline:pglns_remove_insert}
            With $\Lock(l^\theta$), $\theta^j = \Copy(\theta)$\;\label{algline:pglns_copy_theta}
            With $\Lock(l^\text{c})$ accept = \Accept($\Gamma^{\text{c}, j}, \Gamma^\text{c}, \theta^j$)\;\label{algline:pglns_accept}
            \uIf{\upshape accept}{
                With $\Lock(l^\text{c})$, $\Gamma^\text{c} = \Gamma^{\text{c}, j}$\;\label{algline:pglns_update_Gamma_c_1}
        
                With $\Lock(l^*)$ \Do{\label{algline:pglns_lock_lstar_1}
                    update\_best = $\Cost(\Gamma^{\text{c}, j}) < \Cost(\Gamma^*)$\;
                    \lIf{\upshape update\_best}{
                        $\Gamma^* = \Copy(\Gamma^{\text{c}, j})$\label{algline:pglns_update_Gamma_star_1}
                    }
                }
                \If{\upshape update\_best}{
                    With $\Lock(l^\text{term})$ \Do{
                      $i_\text{term} = 1$\;\label{algline:pglns_reset_i_term}
                      improved = true\;
                    }
                    
                    Locally reoptimize $\Gamma^{\text{c}, j}$\label{algline:pglns_local_opt}
                    
                    With $\Lock(l^*)$ \Do{
                        \If{\upshape $\Cost(\Gamma^{\text{c}, j}) < \Cost(\Gamma^*)$}{
                            $\Gamma^* = \Copy(\Gamma^{\text{c}, j})$\;\label{algline:pglns_update_Gamma_star_2}
                        }
                    }
                    
                    With $\Lock(l^\text{c})$ \Do{
                        \If{\upshape $\Cost(\Gamma^{\text{c}, j}) < \Cost(\Gamma^\text{c})$}{
                            $\Gamma^c = \Copy(\Gamma^{\text{c}, j})$\;
                        }
                    }
                }
                \Else{
                    With $\Lock(l^\text{term})$ $i_\text{term} = i_\text{term} + 1$\;\label{algline:pglns_increment_i_term}
                }
            }
            With $\Lock(l^\theta)$ $\theta = \theta r_\text{cool}$
        }
    }
}
return $\Gamma^*$\;
\end{algorithm}

\begin{figure*}
    \centering
    \includegraphics[width=\textwidth]{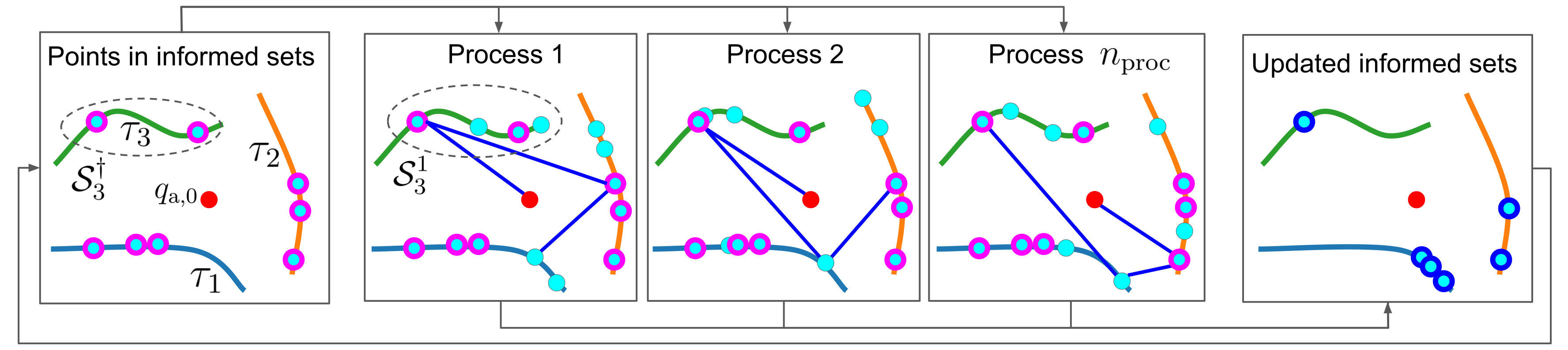}
    \vspace{-0.7cm}
    \caption{Illustration of an iteration of trajectory improvement in PCG. The main process maintains an informed set of points $\mathcal{S}_i^\dagger$ for each target $i$. Points in the current iteration's informed sets are outlined in pink. Each child process $j$ generates a set of sample points $\mathcal{S}_i^j$ for each target $i$, containing the points from the current $\mathcal{S}_i^\dagger$ and random points unique to process $j$. Each child process then solves a GTSP, finding a sequence of points beginning at $\qagtzero$ visiting one point per target. The main process then updates the informed set for each target $i$ to contain all visited points associated with target $i$ from all child processes.}
    \label{fig:pcg_alg_expl_fig}
    \vspace{-0.6cm}
\end{figure*}

\SetKwFunction{GenerateInitialTour}{GenerateInitialTour}
\SetKwFunction{TourViaGTSP}{TourViaGTSP}

\section{PCG}\label{sec:pcg}
We now present our second instantiation of the IRG framework, which we call PCG. PCG shares many of the subroutines of IRG-PGLNS, but employs a different form of parallelization. Instead of using the parallel solver PGLNS to solve GTSPs, PCG uses the serial solver GLNS, but solves several GTSPs simultaneously, each corresponding to a different set of sample points. PCG is described by Alg. \ref{alg:PCG}, which we call the main process. Fig. \ref{fig:pcg_alg_expl_fig} shows an illustration. PCG begins by finding an initial tour $\Gamma^*$ using the same method as Alg. \ref{alg:IRG_PGLNS}. Next, PCG initializes an \emph{informed set} $\mathcal{S}_i^\dagger$ for each target. In particular, it initializes $\mathcal{S}_i^\dagger$ as a singleton set, containing the point $(\qagt, t) \in \Qagt \times \mathbb{R}^+$ at which $\Gamma^*$ visits target $i$ (Line \ref{algline:pcg_init_informed_sets_after_dfs}).

PCG then begins its tour improvement loop (Line \ref{algline:pcg_tour_improv_loop}). Each iteration of this loop runs $n_\text{proc}$ parallel child processes. Each child process $j$ generates a set of sample points $\mathcal{S}_i^j$ for each target $i$ (Lines \ref{algline:pcg_child_proc_loop_begin}-\ref{algline:pcg_sampling}). $\mathcal{S}_i^j$ contains all points in $\mathcal{S}_i^\dagger$, as well as $n_\text{rand}$ randomly sampled points unique to process $j$. After constructing $\mathcal{S}_i^j$, each process $j$ finds a tour $\Gamma^j$ by solving a GTSP (Line \ref{algline:pcg_sampling}). Rather than using PGLNS at this step, we use the serial GLNS. We found that if we set $n_\text{proc}$ equal to the number of cores on our computer and also tried to parallelize GLNS, then PCG's performance degraded due to running more total threads than the number of cores. While there is potential for combining PCG with PGLNS, e.g. using 2 PCG processes with 5 PGLNS threads per process on a 10-core computer, we leave this combination to future work.

After finding a tour $\Gamma^j$, each process $j$ generates a list of points $P^j$, called process $j$'s \emph{informed list} (Lines \ref{algline:pcg_init_informed_list}-\ref{algline:pcg_child_proc_loop_end}). The $i$th element $P^j[i]$ is the point at which $\Gamma^j$ visits target $i$. We then update the informed sets, such that $\mathcal{S}_i^\dagger$ contains all points associated with target $i$ from all informed lists in the current iteration (Lines \ref{algline:pcg_update_informed_sets_loop}-\ref{algline:pcg_update_informed_sets}). Finally, we update $\Gamma^*$ to the best tour found by any child process. If time remains, PCG runs its child processes again with the new informed sets. Otherwise, PCG returns the trajectory associated with $\Gamma^*$ (Line \ref{algline:pcg_return_trj}).

In PCG, a child process cannot begin a new iteration of the loop on Line \ref{algline:pcg_tour_improv_loop} until all other processes have finished their previous loop iteration. We also experimented with an asynchronous version of PCG, where as soon as a child process finishes invoking \TourViaGTSP, it gets the latest informed lists from the other processes (even if some of the lists have not been updated), constructs its own informed set, and begins the next iteration of the loop on Line \ref{algline:pcg_tour_improv_loop}. This did not provide noticeable benefits, so we pursued the synchronous algorithm.

\SetKwFunction{PCG}{PCG}

\begin{algorithm}\label{alg:PCG}
\caption{\protect\PCG}

\SetKwFunction{UniformRandomSample}{UniformRandomSample}
\SetKwFunction{ConstructGTSPGraph}{ConstructGTSPGraph}
\SetKwFunction{TransformToGTSP}{TransformToGTSP}
\SetKwFunction{DepthFirstSearch}{DepthFirstSearch}
\SetKwFunction{BestChildSolution}{BestChildSolution}
\SetKwFunction{GetInterceptionPoint}{GetInterceptionPoint}

\SetKwFor{ParFor}{parallel for}{do}{end}
\SetKwProg{Fn}{Function}{:}{}
\SetKwComment{Comment}{// }{}

$\Gamma^* = $ \GenerateInitialTour()\;\label{algline:pcg_gen_init_tour}
\lIf{\upshape $\Gamma^*$ is NULL}{return NULL}
\For{\upshape $i \in \mathcal{I}$}{
    $\mathcal{S}_i^\dagger = \{\GetInterceptionPoint(i, \Gamma^*)\}$\;\label{algline:pcg_init_informed_sets_after_dfs}
}

\Comment{Tour improvement}
\While{\upshape Time limit has not been reached}{\label{algline:pcg_tour_improv_loop}
    \ParFor{\upshape $j \in \{1, 2, \dots, n_\text{proc}\}$}{\label{algline:pcg_child_proc_loop}
        \For{\upshape $i \in \{1, 2, \dots, \ntar\}$}{\label{algline:pcg_child_proc_loop_begin}
            $\mathcal{S}_i^j = \mathcal{S}_i^\dagger\cup\RandomSamples(i, n_\text{rand})$\;\label{algline:pcg_sampling}
        }
        $\Gamma^j = \TourViaGTSP(\{\mathcal{S}_i^j\}_{i \in \mathcal{I}}, \Gamma^*)$\;\label{algline:pcg_tour_via_gtsp}
        Initialize $P^j$ as list of length $\ntar$\;\label{algline:pcg_init_informed_list}
        \For{\upshape $i \in \{1, 2, \dots, \ntar\}$}{
            $P^j[i] = \GetInterceptionPoint(i, \Gamma^j)$\;\label{algline:pcg_child_proc_loop_end}
        }
    }
    \For{$i \in \mathcal{I}$}{\label{algline:pcg_update_informed_sets_loop}
        $\mathcal{S}_i^\dagger = \{P^1[i], P^2[i], \dots, P^{n_\text{proc}}[i]\}$\;\label{algline:pcg_update_informed_sets}
    }
    Set $\Gamma^*$ equal to $\Gamma^{j}$ with least cost\;\label{algline:pcg_update_incumbent}
}

return trajectory associated with $\Gamma^*$\;\label{algline:pcg_return_trj}
\end{algorithm}
\vspace{-0.7cm}

\section{Theoretical Analysis}\label{sec:theoretical_analysis}
In this section, we show that under appropriate assumptions, all algorithms in the IRG framework are probabilistically complete: that is, if a feasible MT-TSP trajectory exists, the probability of finding a feasible trajectory approaches 1 as runtime tends to infinity. We additionally prove the asymptotic optimality of the IRG framework: that is, as runtime tends to infinity, the returned trajectory's cost converges to the optimum with probability 1. We specifically prove these properties for IRG-PGLNS, and PCG straightforwardly inherits them.

\begin{assumption}\label{assumption:complete_trj_gen}
For any $\qagt, t, \qagt', t' \in \Qagt \times \mathbb{R}^+ \times \Qagt \times \mathbb{R}^+$, if \TrajExists($\qagt, t, \qagt', t'$), then \GetTraj($\qagt, t, \qagt', t'$) returns a trajectory rather than NULL.
\end{assumption}

Note that Assumption \ref{assumption:complete_trj_gen} does not require a trajectory to exist between every pair of samples. Assumption \ref{assumption:complete_trj_gen} trivially holds for the Close-Enough MT-TSP and Robot Arm MT-TSP, since \GetTraj returns a straight-line trajectory from $(\qagt, t)$ to $(\qagt', t')$. For the Variable-Speed Dubins MT-TSP, while~\cite{chen2023elongation} provides a necessary and sufficient condition for the existence of a trajectory from $(\qagt, t)$ to $(\qagt', t')$ for a given speed $v$, it does not offer a constructive method for generating such a trajectory when one exists. Nevertheless, in our experiments, we successfully used numerical methods to compute a feasible trajectory for every tour returned by each algorithm in Section~\ref{sec:numerical_results}. Computing a trajectory for a tour never took more than 3 ms, which is much smaller than the 1 min time budget we allotted for computing tours themselves.

Let $\mathcal{F}$ be the set of feasible trajectories for an MT-TSP instance. For the Variable-Speed Dubins MT-TSP, we also constrain $\mathcal{F}$ to satisfy the requirements in Section \ref{sec:variable_speed_dubins_problem_setup}, i.e. each $\tau_\text{a} \in \mathcal{F}$ maintains constant speed between consecutive interceptions of two targets and only travels at speeds in $\mathcal{S}_\text{speed}$.

Let $\mathcal{L}_{\Qagt} = \prod\limits_{i = 1}^{\ntar} \Qagt \times [\underline{t}_i, \overline{t}_i]$, i.e. the set of lists of (configuration, time) pairs, with one pair per target. For a list $l_{\Qagt} \in \mathcal{L}_{\Qagt}$, if there exists a trajectory $\agenttrj \in \mathcal{F}$ intercepting each target $i$ at $l_{\Qagt}[i]$, we say $l_{\Qagt}$ is \emph{achieved by} $\agenttrj$.

\SetKwFunction{MapList}{MapList}
\SetKwFunction{ListCost}{ListCost}

Next, note that for each MT-TSP variant we consider, $\RandConfig_i(t)$ samples a parameter from some underlying sample space, then maps this parameter to an agent configuration $\qagt$. For example, as described in Section \ref{sec:problem_setup}, in the Close-Enough and Variable-Speed Dubins variants, the underlying space is $\mathbb{S}^1$, and in the robot arm variant, the underlying space is the space of solver-specific parameters for analytical IK and $\Qagt$ for numerical IK. In general, let $\mathcal{H}$ be the underlying sample space for the variant at hand, and let $\MapSample_i: \mathcal{H} \times [\underline{t}_i, \overline{t}_i] \rightarrow \Qagt$ be the function used by $\RandConfig_i$ to map a sample in $\mathcal{H}$ and a time in $[\underline{t}_i, \overline{t}_i]$ to an agent configuration in $\Qagt$. $\MapSample_i$ is defined as follows for the three variants.
\begin{itemize}
    \item Close-Enough MT-TSP: $\MapSample_i(h_i, t) = \tau_i(t) + r_i\begin{bmatrix}\cos(h_i)\\\sin(h_i)\end{bmatrix}$.
    
    \item Variable-Speed Dubins MT-TSP: $\MapSample_i(h_i, t) = (\tau_i(t), h_i)$.

    \item Robot Arm MT-TSP: $\MapSample_i(h_i, t)$ invokes the appropriate IK solver.
\end{itemize}

Let $\mathcal{L}_\mathcal{H} = \prod\limits_{i = 1}^{\ntar}(\mathcal{H} \times [\underline{t}_i, \overline{t}_i])$. Define a function $\MapList: \mathcal{L}_\mathcal{H} \rightarrow \mathcal{L}_{\Qagt}$ which maps a list $l_\mathcal{H}$ defined by $l_\mathcal{H}[i] = (h_i, t_i)$ to a list $l_{\Qagt}$ with $l_{\Qagt}[i] = (\MapSample_i(h_i, t_i), t_i)$. For $l_\mathcal{H} \in \mathcal{L}_\mathcal{H}$, if $\MapList(l_\mathcal{H})$ is achieved by some $\agenttrj \in \mathcal{F}$, we say $l_\mathcal{H}$ is achieved by $\agenttrj$ as well.

$\mathcal{L}_\mathcal{H}$ is a product of several sets, which we call components. Each component may have finite or infinite cardinality\footnote{Finite cardinality specifically occurs in the Robot Arm MT-TSP when using an analytical IK solver, since some solver-specific parameters are drawn from a finite set, e.g. the ``global configuration parameter" in \cite{faria2018position}.}. We define each component as a metric space by endowing each finite component with the discrete metric and each infinite component with the Euclidean metric \cite{lee2012smooth}. We endow $\mathcal{L}_{\mathcal{H}}$ with the metric defined by summing the metrics of its components. We use this metric when defining Lipschitz continuity of functions with domain $\mathcal{L}_\mathcal{H}$. We endow all other spaces with the Euclidean metric when necessary. We also endow each metric space with its metric topology, defining an open subset as a union of open balls under the space's metric \cite{lee2012smooth}. For any metric space $\mathcal{X}$ and $p, p' \in \mathcal{X}$, let $d(p, p')$ be the distance from $p$ to $p'$ under the metric. For any $p \in \mathcal{X}$ and $\mathcal{S} \subseteq \mathcal{X}$, let $d(p, \mathcal{S}) = \inf_{p' \in \mathcal{S}}d(p, p')$.


\begin{theorem}\label{thm:prob_complete}
(Probabilistic Completeness) Suppose there exists a non-empty open set $\mathcal{N} \subseteq \mathcal{L}_\mathcal{H}$ such that each $l_\mathcal{H} \in \mathcal{N}$ is achieved by some $\agenttrj \in \mathcal{F}$. Given Assumption \ref{assumption:complete_trj_gen}, as the number of iterations of Alg. \ref{alg:InitialTourGen} tends to infinity, the probability that Alg. \ref{alg:InitialTourGen}, and therefore IRG-PGLNS, produces a feasible trajectory approaches 1.
\end{theorem}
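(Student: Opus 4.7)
The plan is to reduce the theorem to two facts: (a) with probability tending to $1$, the samples accumulated across iterations of Alg. \ref{alg:InitialTourGen} include a list $l_\mathcal{H} \in \mathcal{N}$, and (b) whenever a feasible tour exists in the sample point graph $\mathcal{G}$, DAG-DFS finds one, at which point Assumption \ref{assumption:complete_trj_gen} converts it to a feasible trajectory.

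For (a), since $\mathcal{N}$ is open and nonempty in $\mathcal{L}_\mathcal{H}$ and the sum-metric topology on the product agrees with the product topology, $\mathcal{N}$ contains a basic open set of the form $\prod_{i=1}^{\ntar}(U_i \times V_i)$ with each $U_i \subseteq \mathcal{H}$ and $V_i \subseteq [\underline{t}_i, \overline{t}_i]$ open. Under the sampling distribution used by \RandomSamples together with $\RandConfig_i$, each $U_i \times V_i$ has some probability $p_i > 0$; this requires a brief case analysis over the three variants, noting that discrete components (in the analytical-IK case) still produce positive probability because open balls under the discrete metric are singletons with positive mass. After $K$ iterations, target $i$ has accumulated $K n_\text{rand,init}$ independent samples, so the probability that every target has at least one sample in $U_i \times V_i$ is $\prod_{i=1}^{\ntar}\bigl(1 - (1-p_i)^{K n_\text{rand,init}}\bigr)$, which tends to $1$ as $K \to \infty$.

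For (b), when the event in (a) holds, pick for each $i$ a sample $(h_i, t_i) \in U_i \times V_i$. The resulting list lies in $\mathcal{N}$, hence by hypothesis is achieved by some $\agenttrj \in \mathcal{F}$. Ordering the targets by $t_i$, the restriction of $\agenttrj$ between any two consecutive visits (and between $(\qagtzero,0)$ and the first visit) is kinematically feasible, so $\TrajExists$ returns $1$ on each such pair and the selected points form a feasible tour in $\mathcal{G}$. I would then argue DAG-DFS finds a (possibly different) feasible tour: the BEFORE pruning is sound, since $i \in$ BEFORE$[s]$ means no point of $\mathcal{S}_i$ is reachable from $s$ and hence no feasible tour visits $i$ after $s$; and the search is exhaustive over orderings with depth bounded by $\ntar$, so it terminates and returns a feasible tour whenever one exists. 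The $\Gamma^\text{rand}$ fallback in \TourViaGTSP preserves feasibility, and Assumption \ref{assumption:complete_trj_gen} then yields a kinematically feasible trajectory from the tour.

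The main obstacle is making the lower bound $p_i > 0$ rigorous in the robot-arm case with numerical IK, where $\RandConfig_i$ may return NULL and the effective sampling distribution is conditional on non-NULL outputs. Here I would show that points in $U_i$ lie in a positive-volume basin of the numerical IK solver producing a joint-limit-feasible solution, which suffices for $p_i > 0$. Everything else---coverage by accumulating Bernoulli trials, soundness of BEFORE, and bounded-depth DFS completeness---is standard bookkeeping.
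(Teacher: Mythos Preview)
Your proposal is correct and follows essentially the same approach as the paper. The only notable difference is in how you argue the samples eventually hit $\mathcal{N}$: the paper bundles, at each iteration $k$, one sample per target into a single list ${}^kl_\mathcal{H}$ uniformly distributed on $\mathcal{L}_\mathcal{H}$ and observes it lands in $\mathcal{N}$ with probability tending to $1$; you instead decompose $\mathcal{N}$ into a product $\prod_i (U_i\times V_i)$ and let each target independently accumulate a hit in its factor, possibly across different iterations. Your decomposition is slightly more explicit and exploits the sample accumulation in Alg.~\ref{alg:InitialTourGen}, while the paper's bundling is terser; both yield the same conclusion, and the paper is also less detailed than you are about DFS soundness and the positivity of $p_i$ (it simply asserts uniform measure is positive on open sets and that Alg.~\ref{alg:InitialDFS} returns a feasible tour when one exists).
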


\begin{proof}
At the $k$th iteration of the loop in Alg. \ref{alg:InitialTourGen} Line \ref{algline:init_tour_main_loop}, let $({}^kh_i, {}^kt_i)$ be the final random sample drawn for target $i$, and define ${}^kl_\mathcal{H} \in \mathcal{L}_\mathcal{H}$ by ${}^kl_\mathcal{H}[i] = ({}^kh_i, {}^kt_i)$. We sampled each $({}^kh_i, {}^kt_i)$ uniformly from $\mathcal{H} \times [\underline{t}_i, \overline{t}_i]$, so ${}^kl_\mathcal{H}$ is sampled from a uniform distribution over $\mathcal{L}_\mathcal{H}$, which places positive measure on nonempty open subsets, including $\mathcal{N}$. Thus as $k$ approaches infinity, the probability that ${}^kl_\mathcal{H} \in \mathcal{N}$ approaches 1. Once we have ${}^kl_\mathcal{H} \in \mathcal{N}$, the tour obtained by sorting $\MapList({}^kl_\mathcal{H})$ in order of increasing time values, then prepending $(\qagtzero, 0)$, will be feasible for the GTSP. Alg. \ref{alg:InitialDFS} will return this tour or some other feasible tour, ensuring that Alg. \ref{alg:IRG_PGLNS} produces a feasible tour $\Gamma^*$. Assumption \ref{assumption:complete_trj_gen} then guarantees that we connect the consecutive points in $\Gamma^*$ to return a feasible trajectory.
\end{proof}


Now we begin the asymptotic optimality proofs. First, we define the \emph{corresponding} target sequence of $l_\mathcal{H} \in \mathcal{L}_\mathcal{H}$ as the sequence obtained by sorting the interception times in $l_\mathcal{H}$. For a set $\mathcal{S}$ in a topological space, let $\overline{\mathcal{S}}$ be the closure of $\mathcal{S}$.

Let $l_\mathcal{H}^* \in \mathcal{L}_\mathcal{H}$ be a list that is achieved by an optimal $\tau_\textnormal{a}^* \in \mathcal{F}$.

\begin{assumption}\label{assumption:N_existence}
A nonempty open $\mathcal{N} \subseteq \mathcal{L}_\mathcal{H}$ exists such that (i) $l_\mathcal{H}^* \in \overline{\mathcal{N}}$, (ii) all lists in $\overline{\mathcal{N}}$ have the same corresponding target sequence, and (iii) each $l_\mathcal{H} \in \overline{\mathcal{N}}$ is achieved by some $\tau_\text{a} \in \mathcal{F}$.
\end{assumption}
We validate Assumption 2 for simple instances where the optimum can be found in Section \ref{sec:mt_tsp_eval_optimality}.

\begin{assumption}\label{assumption:lipschitz_target_trj}
The trajectories of the targets are Lipschitz on their time windows.
\end{assumption}

\begin{assumption}\label{assumption:robot_arm_assumptions}
For the robot-arm MT-TSP, suppose the arm is the KUKA iiwa, and that we are using analytical IK. Suppose that for all configurations in $\MapList(l_\mathcal{H}^*$), (i) the shoulder-wrist vector (defined in \cite{faria2018position}) has nonzero norm, and (ii) no joint angle equals 0 or $\pi$.
\end{assumption}

\begin{lemma}\label{lemma:maplist_lipschitz}
{Suppose Assumptions \ref{assumption:lipschitz_target_trj} and \ref{assumption:robot_arm_assumptions} hold.} For the Close-Enough MT-TSP and Variable-Speed Dubins MT-TSP, \MapList is Lipschitz on $\mathcal{L}_\mathcal{H}$. For the Robot Arm MT-TSP, there is an open $\mathcal{N}^* \subseteq \mathcal{L}_\mathcal{H}$ containing $l_\mathcal{H}^*$ such that \MapList is Lipschitz on $\overline{\mathcal{N}^*}$.
\end{lemma}
\begin{proof}
Since \MapList concatenates invocations of $\MapSample_i$, showing that \MapList is Lipschitz on $\mathcal{L}_\mathcal{H}$ requires showing that $\MapSample_i$ is Lipschitz on the $i$th component of $\mathcal{L}_\mathcal{H}$, i.e. $\mathcal{H} \times [\underline{t}_i, \overline{t}_i]$, for each $i \in \mathcal{I}$. For the Close-Enough MT-TSP, $\MapSample_i$ is Lipschitz on $\mathcal{H} \times [\underline{t}_i, \overline{t}_i]$, since $\MapSample_i(\theta, t)$ only consists of cosine, sine, and addition, all of which are Lipschitz, as well as $\tau_i$, which is Lipschitz by Assumption \ref{assumption:lipschitz_target_trj}. For the Variable-Speed Dubins MT-TSP, $\MapSample_i$ is Lipschitz on $\mathcal{H} \times [\underline{t}_i, \overline{t}_i]$, since $\MapSample_i(\phi, t)$ concatenates $\tau_i(t)$ and $\phi$, which is a linear operation, and $\tau_i$ is Lipschitz by Assumption \ref{assumption:lipschitz_target_trj}. Thus \MapList is Lipschitz on $\mathcal{L}_\mathcal{H}$ for the Close-Enough MT-TSP and Variable-Speed Dubins MT-TSP.

\SetKwFunction{arctantwo}{atan2}
\SetKwFunction{arccos}{acos}

Now consider the Robot Arm MT-TSP for the iiwa. We proceed by constructing an open set $\mathcal{P}_i$ for each $i \in \mathcal{I}$ where $\MapSample_i$ is Lipschitz on $\overline{\mathcal{P}_i}$, then letting $\mathcal{N}^* = \prod\limits_{i = 1}^\ntar \mathcal{P}_i$. Then \MapList will be Lipschitz on $\prod\limits_{i = 1}^\ntar\overline{\mathcal{P}}_i$, and since $\overline{\mathcal{N}^*} = \overline{\prod\limits_{i = 1}^\ntar\mathcal{P}_i} \subseteq \prod\limits_{i = 1}^\ntar\overline{\mathcal{P}}_i$, \MapList will be Lipschitz on $\overline{\mathcal{N}^*}$ as well.

Computing $\MapSample_i(h_i, \tau_i(t_i))$ via analytical IK requires computing a shoulder-wrist vector $w_i = f_\text{sw}(\tau_i(t_i))$, then normalizing $w_i$ (see \cite{faria2018position}). Let $f_i: [\underline{t}_i, \overline{t}_i] \rightarrow \mathbb{R}^3$ where $f_i(t_i) = f_\text{sw}(\tau_i(t_i))$. Let $\hat{f}_i(t_i) = \frac{f_i(t_i)}{\|f_i(t_i)\|}$. For each joint $j \in \{1, 3, 5, 7\}$, $\MapSample_i(h_i, t_i)$ computes the $j$th joint angle $q^j_i$ as $\arctantwo(g^j(\tau_i(t_i), h_i, \hat{f}_i(t_i)))$, where $g^j$ is defined for each $j$ in \cite{faria2018position}, returning a $(y, x)$ pair. For joints $j \in \{2, 4, 6\}$, $\MapSample_i(h_i, t_i)$ computes $q_i^j$ as $\arccos(g^j(\tau_i(t_i), h_i, \hat{f}_i(t_i)))$, where $g^j$ is again defined for each $j$ in \cite{faria2018position}, but scalar-valued.

Let $(h_i^*, t_i^*) = l_\mathcal{H}^*[i]$. Let $w_i^* = f_\text{sw}(\tau_i(t_i^*))$. Let $\overline{\mathcal{B}_i} \in \mathbb{R}^3$ be the closed ball centered at $(0, 0, 0)$ with radius $\|w_i^*\|/2$, and let $\mathcal{C}_i = \mathbb{R}^3 \setminus \overline{\mathcal{B}_i}$. Let $f^{-1}_i(\mathcal{C}_i)$ be the preimage of $\mathcal{C}_i$ under $f_i$. Let $g_i: \mathcal{H} \times \overline{f^{-1}_i(\mathcal{C}_i)} \rightarrow \mathbb{R}^{11}$ be defined by
\begin{align}
g_i(h_i, t_i) = (&g^1(\tau_i(t_i), h_i, \hat{f}_i(t_i)), g^2(\tau_i(t_i), h_i, \hat{f}_i(t_i)),\\
\dots, &g^7(\tau_i(t_i), h_i, \hat{f}_i(t_i)))\nonumber
\end{align}

For each $j$, let ${g_i^j}^* = g^j(\tau_i(t_i^*), h_i^*, \hat{f}_i(t_i^*))$. For each $j \in \{1, 3, 5, 7\}$, ${g_i^j}^* \notin \{0\} \times \mathbb{R}$ by Assumption \ref{assumption:robot_arm_assumptions} (ii), and $\{0\} \times \mathbb{R}$ is closed, so $d({g_i^j}^*, \{0\} \times \mathbb{R})$ is some $\delta_i^j > 0$. For each $j \in \{2, 4, 6\}$. ${g_i^j}^* \notin \{-1, 1\}$ by Assumption \ref{assumption:robot_arm_assumptions} (ii), and $\{-1, 1\}$ is closed, so $d({g_i^j}^*, \{-1, 1\})$ is some $\delta_i^j > 0$. For each $j$, let ${g_i^j}^* = g^j(\tau_i(t_i^*), h_i^*, \hat{f}_i(t_i^*))$, and let $\mathcal{B}_i^j$ be the open ball centered at ${g_i^j}^*$ with radius $\delta_i^j/2$. Let $\mathcal{P}_i = {g}_i^{-1}\left(\prod\limits_{j = 1}^7 \mathcal{B}_i^j\right) \cap (\mathcal{H} \times f^{-1}_i(\mathcal{C}_i))$. We now show that $\MapSample_i$ is Lipschitz on $\overline{\mathcal{P}}_i$.

First, normalization is Lipschitz on $\overline{\mathcal{C}}_i$, since each element's norm is lower-bounded by $\|w_i^*\|/2$, and $\|w_i^*\|/2 > 0$ by Assumption \ref{assumption:robot_arm_assumptions} (i). Also, $f_\text{sw}$ is Lipschitz \cite{faria2018position}, and $\tau_i$ is Lipschitz by Assumption \ref{assumption:lipschitz_target_trj}. Thus $\hat{f}_i$ is Lipschitz on $f_i^{-1}(\overline{\mathcal{C}}_i)$, and thereby Lipschitz on $\overline{f_i^{-1}(\mathcal{C}_i)} \subseteq f_i^{-1}(\overline{\mathcal{C}}_i)$. Next, each $g^j$ is Lipschitz, so $g_i$, which composes $g^j$ with $\tau_i$ and $\hat{f}_i$, is Lipschitz on $\mathcal{H} \times \overline{f_i^{-1}(\mathcal{C}_i)}$.

For a set $\mathcal{S}$, let $g_i(\mathcal{S})$ be the image of $\mathcal{S}$ under $g_i$. We have
\begin{align}
    g_i(\overline{\mathcal{P}_i}) \subseteq \overline{g_i(\mathcal{P}_i)} \subseteq \overline{\prod\limits_{j = 1}^7 \mathcal{B}_i^j} \subseteq \prod\limits_{j = 1}^7 \overline{\mathcal{B}_i^j}.\label{eqn:subset_of_product_of_closures}
\end{align}
The first containment in \eqref{eqn:subset_of_product_of_closures} follows from the continuity of $g_i$, and the second containment follows from $g_i(\mathcal{P}_i) \subseteq \prod\limits_{j = 1}^7 \mathcal{B}_i^j$. The third containment is a property of Cartesian products.

For joints $j \in \{1, 3, 5, 7\}$, \arctantwo is Lipschitz on $\overline{\mathcal{B}_i^j}$, and thus Lipschitz on the 1st, 3rd, 5th, and 7th components of the LHS of \eqref{eqn:subset_of_product_of_closures}. For joints $j \in \{2, 4, 6\}$, \arccos is Lipschitz on $\overline{\mathcal{B}_i^j}$, so \arccos is Lipschitz on the 2nd, 4th, and 6th components of the LHS of \eqref{eqn:subset_of_product_of_closures}. These facts, and the Lipschitzness of $g_i$ on its domain (and thereby $\overline{\mathcal{P}}_i$), imply that $\MapSample_i$, which composes $\arctantwo$ and $\arccos$ with $g_i$, is Lipschitz on $\overline{\mathcal{P}}_i$.

Next, each $\mathcal{P}_i$ is open, since $g_i$ and $f_i$ are continuous, $\mathcal{B}_i^j$ and $\mathcal{C}_i$ are open, and preimages of open sets under continuous functions are open. Thus $\mathcal{N}^*$ is open. Finally, $\mathcal{N}^*$ contains $l_\mathcal{H}^*$, so $\mathcal{N}^*$ is nonempty.
\end{proof}

Next, define the function $\ListCost: \mathcal{L}_\mathcal{H} \rightarrow \mathbb{R}$, which maps a list $l_\mathcal{H}$ to the cost of its associated trajectory, if one exists, and $\infty$ otherwise.

\begin{assumption}\label{assumption:vs_dubins_assumption}
For the Variable-Speed Dubins MT-TSP, for a list $l_\mathcal{H}$, let $(\qagt_{k - 1}, t_{k - 1})$, $(\qagt_k, t_k)$ be the $k$th consecutive pair of points in the associated tour. Let $v_k^*(l_\mathcal{H}) = v^*(\qagt_{k - 1}, t_{k - 1}, \qagt_k, t_k)$, where $v^*$ is defined in Section \ref{subsec:tour_via_gtsp}. Suppose all $l_\mathcal{H} \in \overline{\mathcal{N}}$ are achieved by trajectories using the same sequence of speeds, i.e. for all $l_\mathcal{H}, l_\mathcal{H}' \in \overline{\mathcal{N}}$, for all $k$, $v_k^*(l_\mathcal{H}) = v_k^*(l_\mathcal{H}')$.
\end{assumption}

\begin{lemma}\label{lemma:lipschitz} Suppose Assumptions \ref{assumption:lipschitz_target_trj}-\ref{assumption:vs_dubins_assumption} hold, and for the Robot Arm MT-TSP, redefine $\mathcal{N}$ as $\mathcal{N} \cap \mathcal{N}^*$. Then \ListCost is Lipschitz on $\overline{\mathcal{N}}$.
\end{lemma}

\begin{proof}
Let $l_\mathcal{H} = ((h_1, t_1), (h_2, t_2), \dots, (h_\ntar, t_\ntar)) \in \mathcal{N}$. First, consider the Close-Enough and Robot Arm MT-TSPs. \ListCost performs the following operations in order:
\begin{enumerate}
    \item Map $l_\mathcal{H}$ to a list $l_{\Qagt} \in \mathcal{L}_\Qagt$ via \MapList\label{enumitem:list_cost_step1}
    \item Map $l_\Qagt$ to a tour $\Gamma$ by sorting in order of time and prepending $(\qagtzero, 0)$\label{enumitem:list_cost_step2}
    \item Compute the differences between consecutive configurations in $\Gamma$ and sum the norms of the differences.\label{enumitem:list_cost_step3}
\end{enumerate}
Step \ref{enumitem:list_cost_step1} is Lipschitz by Lemma \ref{lemma:maplist_lipschitz}. Since all lists in $\overline{\mathcal{N}}$ have the same corresponding target sequence by Assumption \ref{assumption:N_existence}, the sort in Step \ref{enumitem:list_cost_step2} is a fixed permutation, which is Lipschitz. The prepend in Step \ref{enumitem:list_cost_step2} and all operations in \ref{enumitem:list_cost_step3} are also Lipschitz. Since \ListCost is a composition of Lipschitz functions, \ListCost is Lipschitz on $\overline{\mathcal{N}}$.

Now consider the Variable-Speed Dubins MT-TSP. \ListCost performs steps 1) to 3) above, except in 3), we now sum the values of $v_k^*(l_\mathcal{H})(t_k - t_{k - 1})$ for consecutive times $t_{k - 1}, t_k$ in $\Gamma$. Since $v_k^*(l_\mathcal{H})$ is constant on $\overline{\mathcal{N}}$ by Assumption \ref{assumption:vs_dubins_assumption}, this summation is linear and thereby Lipschitz in $\Gamma$. Since \ListCost is a composition of Lipschitz functions, \ListCost is Lipschitz on $\overline{\mathcal{N}}$.
\end{proof}

\begin{theorem}\label{thm:asymptotic-optimality}
(Asymptotic optimality) Suppose Assumptions \ref{assumption:complete_trj_gen}-\ref{assumption:vs_dubins_assumption} hold. Let $c^* = \ListCost(l_\mathcal{H}^*)$. As the number of iterations of the loop in Alg. \ref{alg:IRG_PGLNS} Line \ref{algline:irg_tour_improvement_loop} tends to infinity, the cost of the returned trajectory converges to $c^*$ with probability 1.
\end{theorem}

\begin{proof}
Let $\epsilon > 0$ be fixed and arbitrary, and let $M$ be the Lipschitz constant of \ListCost. For the remainder of the proof, for the Robot Arm MT-TSP, redefine $\mathcal{N}$ as $\mathcal{N} \cap \mathcal{N}^*$, as we did in Lemma \ref{lemma:lipschitz}. Let $\mathcal{B}_{\epsilon/M}(l_\mathcal{H}^*) = \{l_\mathcal{H} \in \mathcal{L}_\mathcal{H} \;|\; d(l_\mathcal{H}, l_\mathcal{H}^*) < \epsilon/M\}$, and let $\mathcal{D}_{\epsilon/M} = \mathcal{B}_{\epsilon/M}(l_\mathcal{H}^*) \cap \mathcal{N}$. Since $l_\mathcal{H}^* \in \overline{\mathcal{N}}$, $l_\mathcal{H}^* \in \mathcal{N}$ or $l_\mathcal{H}^*$ is a limit point of $\mathcal{N}$. If $l_\mathcal{H}^* \in \mathcal{N}$, then $\mathcal{D}_{\epsilon/M}$ is nonempty. If $l_\mathcal{H}^*$ is a limit point of $\mathcal{N}$, then every open set containing $l_\mathcal{H}^*$, in particular, $\mathcal{B}_{\epsilon/M}$, contains some point in $\mathcal{N}$ distinct from $l_\mathcal{H}^*$, so $\mathcal{D}_{\epsilon/M}$ is nonempty. Also, $\mathcal{D}_{\epsilon/M}$ is open since it is an intersection of open sets. In the following text, let $\mathbb{P}[E]$ denote the probability of an event $E$.

Consider an arbitrary loop iteration $k$. Let $({}^kh_i, {}^kt_i)$ be the final random sample drawn for target $i$, and define ${}^kl_\mathcal{H} \in \mathcal{L}_\mathcal{H}$ by ${}^kl_\mathcal{H}[i] = ({}^kh_i, {}^kt_i)$. $\mathcal{D}_{\epsilon/M}$ has positive measure under our sampling distribution by the same argument as Theorem \ref{thm:prob_complete}, which means that for some ${}^\epsilon p > 0$, $\mathbb{P}[{}^kl_\mathcal{H} \in \mathcal{D}_{\epsilon/M}] = {}^\epsilon p$. If ${}^kl_\mathcal{H} \in \mathcal{D}_{\epsilon/M}$, we have the following:
\begin{align}
    |\ListCost({}^kl_\mathcal{H}) - c^*| &\leq Md({}^kl_\mathcal{H}, l_\mathcal{H}^*) < \epsilon\label{eqn:apply_lipschitz_listcost_and_ball}\\
    \ListCost({}^kl_\mathcal{H}) - c^* &< \epsilon\label{eqn:remove_abs_val}\\
    \ListCost({}^kl_\mathcal{H}) &< c^* + \epsilon\label{eqn:add_cstar}\\
    \mathbb{P}[\ListCost({}^kl_\mathcal{H}) < c^* + \epsilon] &\geq \mathbb{P}[{}^kl_\mathcal{H} \in \mathcal{D}_{\epsilon/M}] = {}^\epsilon p\label{eqn:prob_less_cstar_plus_eps}
\end{align}
The first inequality in \eqref{eqn:apply_lipschitz_listcost_and_ball} follows from \ListCost being Lipschitz on $\overline{\mathcal{N}}$ with Lipschitz constant $M$, and the second inequality follows from ${}^kl_\mathcal{H} \in \mathcal{B}_{\epsilon/M}(l_\mathcal{H}^*)$. \eqref{eqn:remove_abs_val} follows from \eqref{eqn:apply_lipschitz_listcost_and_ball} and the fact that $\ListCost({}^kl_\mathcal{H}) - c^* \geq 0$ by the optimality of $c^*$. \eqref{eqn:add_cstar} adds $c^*$ to both sides of \eqref{eqn:remove_abs_val}. \eqref{eqn:prob_less_cstar_plus_eps} follows from the fact that ${}^kl_\mathcal{H} \in \mathcal{D}_{\epsilon/M}$ implies \eqref{eqn:add_cstar}.

For any iteration $l$, let ${}^lc$ be the cost of the tour produced, let ${}^lc^*$ be the cost of the optimal tour on the current sample point graph, and let ${}^lZ$ be the event that for all $m \leq l$, ${}^mc > c^* + \epsilon$. \eqref{eqn:prob_less_cstar_plus_eps}, along with the independence of ${}^kl_\mathcal{H}$ from ${}^{k - 1}Z$, implies
\begin{align}
    \mathbb{P}[\ListCost({}^kl_\mathcal{H}) < c^* + \epsilon|{}^{k - 1}Z] &\geq {}^\epsilon p\label{eqn:prob_less_cstar_plus_eps_given_Zk}\\
    \mathbb{P}[{}^kc^* < c^* + \epsilon|{}^{k - 1}Z] &\geq {}^\epsilon p.\label{eqn:prob_opt_cost_less_cstar_plus_eps}
\end{align}
Also, since \TourViaGTSP attempts to randomly sample an optimal tour after running PGLNS,
\begin{align}
    \mathbb{P}[{}^kc = {}^kc^*|{}^{k - 1}Z] \geq \frac{1}{\ntar !}\left(\frac{1}{n_\text{rand} + 1}\right)^{n_\text{tar}}.\label{eqn:prob_return_opt_cost}
\end{align}
In \eqref{eqn:prob_return_opt_cost}, $\frac{1}{\ntar !}$ is the probability of selecting the optimal sequence of targets using uniform random sampling, and $\left(\frac{1}{n_\text{rand} + 1}\right)^{n_\text{tar}}$ is the probability of randomly selecting the optimal node in each target's cluster using uniform random sampling. Denote the RHS of \eqref{eqn:prob_return_opt_cost} as $\underline{p}$. \eqref{eqn:prob_opt_cost_less_cstar_plus_eps} and \eqref{eqn:prob_return_opt_cost} imply \mbox{$\mathbb{P}[{}^kc < c^* + \epsilon|{}^{k - 1}Z] \geq {}^\epsilon p \underline{p} > 0$}. This means
\begin{align}
    \mathbb{P}[{}^kc > c^* + \epsilon|{}^{k - 1}Z] \leq 1 - {}^\epsilon p \underline{p}.\label{eqn:prob_return_greater_cstar_plus_eps}
\end{align}
Denote the RHS of \eqref{eqn:prob_return_greater_cstar_plus_eps} as $\zeta$. Since \mbox{${}^kZ = ({}^kc > c^* + \epsilon) \cap ({}^{k - 1}Z)$}, Bayes' rule tells us that
\begin{align}
    \mathbb{P}[{}^kZ] &= \mathbb{P}[{}^kc > c^* + \epsilon|{}^{k - 1}Z]\mathbb{P}[{}^{k - 1}Z] \leq \zeta\mathbb{P}[{}^{k - 1}Z]\label{eqn:Zk}
\end{align}
where the second inequality follows from \eqref{eqn:prob_return_greater_cstar_plus_eps}.
\eqref{eqn:Zk} can then be used within an induction to show that $\mathbb{P}[{}^kZ] \leq \zeta^k$ for all $k$. We omit the induction here for brevity.

Note that ${}^kZ \iff {}^kc > c^* + \epsilon$. This is because in IRG-PGLNS, tour cost never increases from one iteration to the next, so ${}^kc > c^* + \epsilon$ is only possible if ${}^mc > c^* + \epsilon$ for all $m \leq k$. Thus, $\mathbb{P}[{}^kc > c^* + \epsilon] \leq \zeta^k$. Combining this with $\zeta < 1$ (implied by ${}^\epsilon p \underline{p} > 0$), we have \mbox{$\sum\limits_{k = 1}^\infty\mathbb{P}[{}^kc > c^* + \epsilon] < \infty$}. Equivalently, \mbox{$\sum\limits_{k = 1}^\infty\mathbb{P}[{}^kc - c^* > \epsilon] < \infty$}. The optimality of $c^*$ implies ${}^kc - c^* \geq 0$, so $\sum\limits_{k = 1}^\infty\mathbb{P}[|{}^kc - c^*| > \epsilon] < \infty$. The Borel-Cantelli Lemma \cite{grimmett2020probability} then implies $\mathbb{P}[\limsup\limits_{k \rightarrow \infty}[|{}^kc - c^*| > \epsilon]] = 0$, i.e. ${}^kc$ converges to $c^*$ almost surely. By Assumption \ref{assumption:complete_trj_gen}, the returned trajectory's cost converges to $c^*$ almost surely.
\end{proof}

Note that the statement $\mathbb{P}[{}^kZ] \leq \zeta^k$ in the proof of Theorem \ref{thm:asymptotic-optimality} is a finite-time guarantee on IRG-PGLNS's performance. That is, after $k$ iterations, $\zeta^k$ upper-bounds the probability that the incumbent's cost is worse than $c^* + \epsilon$. This finite-time guarantee also lower-bounds the convergence rate of IRG-PGLNS, where we use the term convergence rate in the sense of Definition 1.4 in \cite{suli2003introduction}. In particular, $\mathbb{P}[{}^k Z] \leq \zeta^k$ implies that $\mathbb{P}[{}^k Z]$ converges to zero with rate at least $-\log_{10}(\zeta)$.

In practice, however, $\zeta^k$ is a loose upper bound on $\mathbb{P}[{}^k Z]$. We considered several instances where the optimal cost $c^*$ is known (i.e. the 10 and 20 target instances from Section \ref{sec:mt_tsp_eval_optimality}), set $\epsilon = {}^kc - c^*$, where ${}^kc$ is the cost upon termination of IRG-PGLNS, and computed $\zeta^k$. We found that $\zeta^k$ was essentially 1 in all cases. Finding a stronger finite-time guarantee is a direction for future work.

\section{Complexity Analysis}\label{sec:complexity_analysis}
In this section, we analyze the space complexity of the IRG algorithms.\footnote{We focus on space complexity rather than time complexity because the algorithms are anytime, so time complexity is simply determined by the user-specified time limit.} First, we examine the initial tour generation (Alg. \ref{alg:InitialTourGen}), common to IRG-PGLNS and PCG. The worst-case space complexity is unbounded due to the unbounded number of points Alg. \ref{alg:InitialTourGen} may add to the initial set of $n_\text{rand,init}$ points per target. However, we show in Section \ref{sec:numerical_results} that for all instances tested, $n_\text{rand,init}$ can be set large enough that points never need to be added. Thus, we analyze space complexity assuming only $n_\text{rand,init}$ points are used per target.

The operations in Alg. \ref{alg:InitialTourGen} are random sampling, computing a matrix of edge costs between pairs of sampled points, and solving a GTSP with the DFS in Alg. \ref{alg:InitialDFS}. The random samples require $O(\ntar n_\text{rand,init})$ space, and the cost matrix requires $O(\ntar^2 n_\text{rand,init}^2)$ space. Each search node in the DFS requires $O(\ntar)$ space to store its visited subset. The stack and closed lists could each store every possible search node in the worst case. The number of search nodes is in $O(2^\ntar n_\text{rand,init})$, where $2^\ntar$ is the number of possible visited subsets, and $n_\text{rand,init}$ is the number of possible terminal points, apart from the initially generated search node, whose only terminal point is $(\qagtzero, 0)$. This implies space complexity $O(2^\ntar n_\text{rand,init} \ntar)$ for the DFS, and thus a worst-case space complexity $O(2^\ntar n_\text{rand,init} \ntar)$ for Alg. \ref{alg:InitialTourGen}, assuming only $n_\text{rand,init}$ points are used.

Now we examine the space complexity of the tour improvement, starting with IRG-PGLNS. Each tour improvement iteration requires sampling and computing edge costs, which have space complexities $O(\ntar n_\text{rand})$ and $O(\ntar^2 n_\text{rand}^2)$, respectively. Each iteration then solves a GTSP with PGLNS. Within PGLNS, we refer to each iteration of the loop on Alg. \ref{alg:PGLNS} Line \ref{algline:pglns_thread_inner_loop} as a remove/insert iteration. In a remove/insert iteration, the current thread makes a copy of $\Gamma^c$, requiring $O(n_\text{thread}\ntar)$ space after we sum over threads. When a single thread performs a remove/insert iteration with $n_\text{rem}$ nodes removed, it uses additional space at most $O(n_\text{rem})$ \cite{smith2017glns}. We use PGLNS in fast mode, where $n_\text{rem} \in O(1)$. Thus the space complexity of an iteration of tour improvement is $O(\ntar^2 n_\text{rand}^2 + n_\text{thread}\ntar)$.

For an iteration of tour improvement in PCG, we have the single-thread space complexity of an iteration of IRG-PGLNS, multiplied by $n_\text{proc}$. This yields space complexity $O(n_\text{proc}\ntar^2 n_\text{rand}^2)$.

\section{Numerical Results}\label{sec:numerical_results}
We ran experiments on an Intel i9-9820X 3.3GHz CPU with 128 GB RAM and 10 cores. Section \ref{sec:generating_instances} describes how we generate problem instances, and Section \ref{sec:tuning} describes how we tune the planners under comparison. In Sections \ref{sec:vary_num_targets} to \ref{sec:ramt_tsp_vary_vmax}, we compare IRG-PGLNS and PCG to a baseline and two ablations. Our baseline for the Close-Enough MT-TSP and Variable-Speed Dubins MT-TSP is a modification of the memetic algorithm \cite{ding2022memetic}, which addresses the fixed-speed Dubins Close-Enough MT-TSP without time windows. In particular, we extended \cite{ding2022memetic} to handle time windows, we specialized \cite{ding2022memetic} to the Close-Enough MT-TSP and Variable-Speed Dubins MT-TSP, and we parallelized \cite{ding2022memetic}, with details in Appendix \ref{appendix:memetic_modifications}. We did not run this baseline for the Robot Arm MT-TSP, since \cite{ding2022memetic} did not consider a robot arm. Our first ablation, IRG-GLNS, is the same as IRG-PGLNS except that IRG-GLNS uses GLNS instead of PGLNS as the GTSP solver. Our second ablation, called Parallel Decoupled GTSPs (PDG), is an ablation of PCG where the processes do not communicate. PDG runs Alg. \ref{alg:IRG_PGLNS} Line \ref{algline:irg_gen_init_tour}, then spawns several parallel child processes, each running Lines \ref{algline:irg_tour_improvement_loop}-\ref{algline:irg_end_tour_improvement_loop} and updating its own copy of $\Gamma^*$. Upon reaching the planning time budget, PDG returns the best $\Gamma^*$ over all child processes. Both IRG-GLNS and PDG fall into the IRG framework.

The optimal solutions for the problem instances in Sections \ref{sec:vary_num_targets} to \ref{sec:ramt_tsp_vary_vmax} are unknown, since there are no optimal solvers for the Close-Enough, Variable-Speed Dubins, and Robot Arm MT-TSPs. To evaluate how close IRG-PGLNS and PCG come to the optimum, in Section \ref{sec:mt_tsp_eval_optimality}, we generated a set of instances for what we call the \emph{Linear MT-TSP}, i.e. the MT-TSP from Fig. \ref{fig:intro_fig} (a), but targets move with constant velocity. Optimal solutions for the Linear MT-TSP can be found using \cite{philip2024mixedinteger}. In Section \ref{sec:mt_tsp_eval_optimality}, we show that IRG-PGLNS and PCG converge closely to the optimum on instances where we can find the optimum, and that they converge faster than \cite{philip2024mixedinteger} for larger numbers of targets. Finally, in Section \ref{sec:mt_tsp_eval_init_tour_gen}, we evaluate IRG's method of solving the GTSP on its first set of sample points, showing that our method finds feasible solutions more quickly than existing GTSP algorithms.


Each planner was given a 30 s time budget per instance for the Close-Enough MT-TSP and a 1 minute time budget per instance in the Variable-Speed Dubins and Robot Arm MT-TSPs. We used a shorter budget for the Close-Enough MT-TSP because sampling and edge cost computations are relatively inexpensive, and all planners tend to converge more quickly than in the other variants. When constructing $\mathcal{G}$, IRG-PGLNS and IRG-GLNS invoked \TrajExists for different node pairs in parallel using 10 threads. Also, for the Robot Arm MT-TSP, to mitigate the computational expense of IK during sampling, IRG-GLNS and IRG-PGLNS parallelized the loop on Line \ref{algline:irg_sampling_loop} of Alg. \ref{alg:IRG_PGLNS}. PCG and PDG, on the other hand, performed this loop serially, since they already sample several sets of points for each target in parallel. When generating an initial tour with Alg. \ref{alg:InitialTourGen}, we parallelized edge cost computations in $\mathcal{G}$ for all MT-TSP variants, parallelized sampling for the Robot Arm MT-TSP, and generated search nodes' successors in parallel during the DFS.

\subsection{Generating Problem Instances}\label{sec:generating_instances}

We first generated 20 instances for each problem variant, which we call \emph{default} instances, using \emph{default} parameter values specified for each variant in Sections \ref{sec:vsdmt_tsp_instance_gen}-\ref{sec:ramt_tsp_instance_gen}. The default number of targets is 200 for all variants. We then varied parameters within the default instances to generate additional instances. For each default instance, we sampled $\qagtzero$ uniformly at random from a set $\mathcal{Q}_{a,0}$, specified for each variant in Sections \ref{sec:vsdmt_tsp_instance_gen}-\ref{sec:ramt_tsp_instance_gen}. To generate trajectories of targets such that each instance was feasible, we extended the method from \cite{bhat2024AComplete}. We first generated a random feasible tour, $((q_{\textnormal{a}, 0}, 0), (q_{\textnormal{a}, 1}, t_1), \dots (q_{\textnormal{a}, n_\text{tar}}, t_{n_\text{tar}}))$, as well as a random, two-segment piecewise-linear trajectory $\tilde{\tau}_i$ for each $i \in \mathcal{I}$ that passed through $(q_{\textnormal{a}, i}, t_i)$. Then for each $i \in \mathcal{I}$, we fit a cubic B-spline that passed through $(q_{\textnormal{a}, i}, t_i)$ and the endpoints of the segments of $\tilde{\tau}_i$ to obtain the final trajectory $\tau_i$. We provide instance generation code in the multimedia attachment.

\subsubsection{Close-Enough MT-TSP}\label{sec:cemt_tsp_instance_gen}
For the Close-Enough MT-TSP, $\mathcal{Q}_{a,0} = [-50\text{ m}, 50 \text{ m}]^2$. The agent's speed limit was 5 m/s in all instances. All targets had time windows of length 108 s. Each segment of $\tilde{\tau}_i$ had a direction sampled uniformly at random and a speed uniformly random sampled from the range [0.5, 1] m/s. Within a single instance, all targets have a common radius value, which we call the \emph{target radius}. The default target radius was 12 m.

\subsubsection{Variable-Speed Dubins MT-TSP}\label{sec:vsdmt_tsp_instance_gen}
For the Variable-Speed Dubins MT-TSP, $\mathcal{Q}_{a,0} = [-50 \text{ m}, 50\text{ m}]^2 \times \mathbb{S}^1$. The agent's default $v_\text{min}$ was 2 m/s. In all instances, $v_\text{max} = 5$ m/s and $\omega_\text{max} = 0.25$ rad/s. We let $\mathcal{S}_\text{speed} = \{v_\text{min}, \frac{2}{3}v_\text{min} + \frac{1}{3}v_\text{max}, \frac{1}{3}v_\text{min} + \frac{2}{3}v_\text{max}, v_\text{max}\}$. All targets had time windows of length 108 s. Each segment of $\tilde{\tau}_i$ had a direction sampled uniformly at random and a speed uniformly randomly sampled from the range [0.5, 1] m/s.

\subsubsection{Robot Arm MT-TSP}\label{sec:ramt_tsp_instance_gen}
For the Robot Arm MT-TSP, we generated instances for a 7 DOF arm with the same geometry and joint angle limits as the KUKA iiwa, and we set $\mathcal{Q}_{\textnormal{a},0} = \Qagt$. All joints had the same max speed, denoted as $v_\text{max}$, in all instances, with a default value of 5 rad/s. All targets had time windows of length 3 s. Each segment of $\tilde{\tau}_i$ necessarily had a position component and orientation component. Each position component had a direction sampled uniformly at random and a speed uniformly random sampled from the range [0.5, 1] m/s. Each orientation component had a world-frame angular velocity whose $x$, $y$, and $z$ components were each sampled from the standard normal distribution.

\subsection{Tuning the Planners}\label{sec:tuning}
We tuned three values for each IRG-based planner for each problem variant: $n_\text{rand, init}$, used in Alg. \ref{alg:InitialTourGen}, $n_\text{rand}$, used in Alg. \ref{alg:IRG_PGLNS} and Alg. \ref{alg:PCG}, and $n_\text{term}$, used in Alg. \ref{alg:PGLNS}. GLNS and PGLNS have other tunable parameters, which we set equal to their fast mode values \cite{smith2017glns}.

\subsubsection{Tuning $n_\textnormal{rand, init}$}\label{subsubsec:tuning_nrand_init}
$n_\text{rand, init}$ is common to all planners within a problem variant, since they all use the same initial tour generation routine. We determined $n_\text{rand,init}$ by first generating a set of 30 \text{tuning instances} using default parameters, then we changed the target radius to zero in the Close-Enough MT-TSP, $v_\text{min} = 5$ in the Variable-Speed Dubins MT-TSP, and $v_\text{max} = 4.1$ for the Robot Arm MT-TSP, to ensure that $n_\text{rand,init}$ was tuned for the most constrained instances. We used a different set of random seeds for generating the tuning instances than the instances in later sections, so that none of the tuning instances were used in subsequent experiments.

For each variant, we ran \GenerateInitialTour on the tuning instances, setting $n_\text{rand,init} = 2$, setting the time limit to 1 hour so that a solution was found in all instances. After finding a solution for all 30 instances, we set $n_\text{rand,init}$ equal to the largest final value of $|\mathcal{S}_1|$ over all instances for that variant. The tuned $n_\text{rand, init}$ values for the Close-Enough MT-TSP, Variable-Speed Dubins MT-TSP, and Robot Arm MT-TSP were 8, 44, and 38, respectively.

\subsubsection{Tuning $n_\textnormal{rand}$ and $n_\textnormal{term}$}
To tune $n_\text{rand}$ and $n_\text{term}$, we generated another set of 10 tuning instances per problem variant using default parameters. Using the same strategy as \cite{smith2017glns}, we set $n_\text{term} = \alpha_\text{term} n_\text{cluster}$, where $n_\text{cluster}$ is the number of clusters in the GTSP. In all GTSPs we solved, $n_\text{cluster} = \ntar + 1$, since we have a cluster per target and a singleton cluster $\{(\qagtzero, 0)\}$. For each planner and problem variant, we ran the planner until the time budget on each tuning instance, for each combination of $n_\text{rand} \in \{1, 2, 4, 8, 16, 32, 64\}$ and $\alpha_\text{term} \in \{1, 2, 4, 8, 16, 32, 64\}$. For each instance and parameter setting, we computed the area under the curve (AUC) of the solution cost vs. planning time curve, as shown in Fig. \ref{fig:auc}. We chose the combination of $n_\text{rand}$ and $n_\text{term}$ that minimized the median AUC over the 10 tuning instances.
Table \ref{table:tuned_param_vals} shows the tuned $n_\text{rand}$ and $\alpha_\text{term}$ values.


For all planners, the tuned value of $\alpha_\text{term}$ is smaller than the original value of 60 in GLNS fast mode, except for IRG-GLNS in the Variable-Speed Dubins MT-TSP. The smaller $\alpha_\text{term}$ leads planners to resample more frequently and spend less time solving any particular GTSP than if they were using the original $\alpha_\text{term}$ from GLNS’s fast mode.

\begin{wrapfigure}[13]{r}{0.25\textwidth}
    \centering
    \vspace{-0.2cm}
    \includegraphics[width=0.25\textwidth]{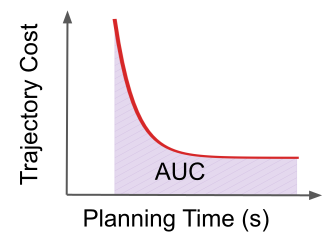}
    \vspace{-0.7cm}
    \caption{Performance metric: area under the curve (AUC). Small AUC is desirable, as it indicates fast convergence to a low-cost solution.}
    \label{fig:auc}
\end{wrapfigure}

\renewcommand{\arraystretch}{1.8}
\begin{table}
\vspace{-0.2cm}
\caption{Tuned parameters, shown as $(n_\text{rand}, \alpha_\textnormal{term})$, for IRG methods}\label{table:tuned_param_vals}
\vspace{-0.2cm}
\centering
\begin{tabular}{|c|c|c|c|}
\hline
 & \begin{tabular}{@{}c@{}}Close-Enough\vspace{-0.2cm}\\MT-TSP\end{tabular} & \begin{tabular}{@{}c@{}}Variable-Speed\vspace{-0.2cm}\\Dubins MT-TSP\end{tabular} & \begin{tabular}{@{}c@{}}Robot Arm\vspace{-0.2cm}\\MT-TSP\end{tabular}\\
\hline
\begin{tabular}{@{}c@{}}IRG-GLNS (ablation)\end{tabular} & (32, 4) & (32, 64) & (32, 2)\\
\hline
\begin{tabular}{@{}c@{}}PDG (ablation)\end{tabular} & (8, 1) & (2, 4) & (32, 4)\\
\hline
\begin{tabular}{@{}c@{}}IRG-PGLNS\end{tabular} & (16, 4) & (32, 32) & (32, 32)\\
\hline
\begin{tabular}{@{}c@{}}PCG\end{tabular} & (16, 4) & (4, 8) & (16, 4)\\
\hline
\end{tabular}
\vspace{-0.5cm}
\end{table}

\subsubsection{Tuning the Memetic Algorithm Baseline}\label{sec:memetic_tuning}
As described in Appendix \ref{appendix:memetic_modifications}, the memetic algorithm baseline, based on \cite{ding2022memetic}, performs several iterations, where each iteration performs operations on a population of solutions to generate a new population. We use the algorithm parameters from \cite{ding2022memetic}, except for the population size, since we found that with the population size from \cite{ding2022memetic}, in instances with 200 targets, the algorithm would spend most of its time constructing its initial population and little time improving the solutions in the population. \cite{ding2022memetic} did not have this issue because they only tested instances with up to 40 targets, and they did not consider time windows. Time windows make generating an initial population more computationally expensive, as discussed in Appendix \ref{appendix:memetic_modifications}. Therefore, we tune the population size as follows.

Using the strategy from \cite{ding2022memetic}, we set the population size equal to $\alpha_\text{pop}n_\text{tar}$ and tuned $\alpha_\text{pop}$. For the Close-Enough MT-TSP, we ran the baseline until the time budget on each tuning instance with $\alpha_\text{pop} \in \{0.01, 0.1, 1, 10\}$, then chose the $\alpha_\text{pop}$ that minimized the median AUC. We did the same for the Variable-Speed Dubins MT-TSP, but with $\alpha_\text{pop} \in \{0.01, 0.02, 0.03, 0.04\}$. We chose the set of attempted $\alpha_\text{pop}$ values for each variant so that the largest attempted $\alpha_\text{pop}$ resulted in the entire time budget being spent initializing the population. The tuned $\alpha_\text{pop}$ was $0.1$ for the Close-Enough MT-TSP and $0.04$ for the Variable-Speed Dubins MT-TSP.

\subsection{Varying the Number of Targets}\label{sec:vary_num_targets}
In this experiment, we varied the number of targets from 50 to 200 for all problem variants, with other instance parameters at their default values.
The AUC values for all numbers of targets are given in Table \ref{table:vary_num_targets_auc}, and the cost vs. time curves for 200 targets are shown in Fig. \ref{fig:cost_vs_time_200_targets}. 
All of the IRG-based planners outperform the memetic algorithm in min, median, and max AUC. For the Close-Enough MT-TSP, PCG has the smallest median AUC for all numbers of targets (Table \ref{table:vary_num_targets_auc} (a)). In the Variable-Speed Dubins MT-TSP, IRG-PGLNS has the smallest median AUC for 100 or more targets (Table \ref{table:vary_num_targets_auc} (b)). Finally, for the Robot Arm MT-TSP, IRG-PGLNS has the smallest median AUC for all numbers of targets (Table \ref{table:vary_num_targets_auc} (c)).

The memetic algorithm performs poorly largely due to the greedy optimization it uses to compute an agent trajectory for any fixed sequence of targets, i.e. the ``transformation" procedure described in Appendix \ref{appendix:memetic_modifications}. We found that this procedure often fails to generate feasible trajectories in the presence of time windows. The reason PCG noticeably outperforms IRG-PGLNS in the Close-Enough MT-TSP, but not for the other variants, is that parallelizing operations on a single sample point graph (sampling, computing edge costs, and solving a GTSP), as IRG-PGLNS does, yields smaller reductions in median iteration time for the Close-Enough MT-TSP than in the other variants. Section \ref{sec:vary_num_cores} discusses this further.

\begin{figure}
    \centering
    \includegraphics[width=0.5\textwidth]{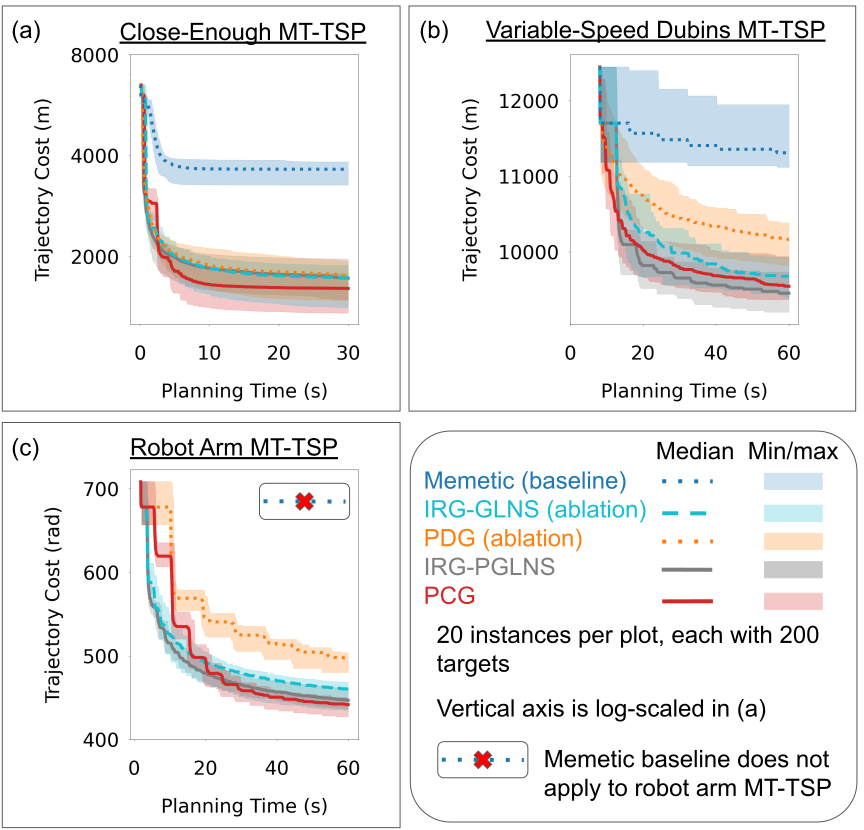}
    \vspace{-0.7cm}
    \caption{Cost vs. time for three MT-TSP variants, in instances with 200 targets. IRG-PGLNS and PCG find notably lower-cost solutions than the baseline and ablations, except for the Close-Enough MT-TSP (a), where IRG-PGLNS outperforms the memetic baseline while performing similarly to the ablations.}
    \label{fig:cost_vs_time_200_targets}
    \vspace{-0.7cm}
\end{figure}

\renewcommand{\arraystretch}{1.8}
\setlength{\tabcolsep}{3pt}
\begin{table}
\vspace{-0.2cm}
\caption{AUC values for MT-TSP variants. Entries are shown as min$|$median$|$max. Best min, median, and max in each column are bolded. Statistics are taken over 20 instances for each $\ntar$.}\label{table:vary_num_targets_auc}
\vspace{-0.2cm}
\centering
\begin{tabular}{|c|c|c|c|c|}
\hline
\multicolumn{5}{|c|}{(a) AUC (m $\cdot$ s) for Close-Enough MT-TSP}\\
\hline
\begin{tabular}{@{}c@{}}\end{tabular} & \begin{tabular}{@{}c@{}} $n_\text{tar} = 50$\end{tabular} & \begin{tabular}{@{}c@{}} $n_\text{tar} = 100$\end{tabular} & \begin{tabular}{@{}c@{}} $n_\text{tar} = 150$\end{tabular} & \begin{tabular}{@{}c@{}} $n_\text{tar} = 200$\end{tabular}\\
\hline
\begin{tabular}{@{}c@{}}Memetic\vspace{-0.1cm}\\(baseline)\end{tabular} & 18.8$|$26.2$|$30.7 & 47.8$|$54.4$|$60.8 & 74.0$|$82.4$|$95.5 & 103$|$112$|$119\\
\hline
\begin{tabular}{@{}c@{}}IRG-GLNS\vspace{-0.1cm}\\(ablation)\end{tabular} & 8.92$|$14.0$|$16.9 & 21.2$|$25.9$|$32.0 & 35.2$|$39.8$|$46.1 & 48.7$|$57.3$|$61.9\\
\hline
\begin{tabular}{@{}c@{}}PDG\vspace{-0.1cm}\\(ablation)\end{tabular} & 8.62$|$13.5$|$\textbf{15.9} & 20.9$|$25.6$|$32.3 & 36.1$|$40.2$|$46.0 & 49.6$|$58.5$|$64.2\\
\hline
\begin{tabular}{@{}c@{}}IRG-PGLNS\end{tabular} & 8.69$|$13.8$|$16.7 & 21.9$|$26.3$|$31.8 & 34.7$|$40.4$|$50.0 & 49.1$|$56.6$|$62.2\\
\hline
\begin{tabular}{@{}c@{}}PCG\end{tabular} & \textbf{8.62}$|$\textbf{12.5}$|$16.5 & \textbf{19.3}$|$\textbf{24.2}$|$\textbf{30.5} & \textbf{32.4}$|$\textbf{37.4}$|$\textbf{43.0} & \textbf{46.0}$|$\textbf{53.6}$|$\textbf{58.4}\\
\hline

\end{tabular}

\vspace{0.1cm}

\begin{tabular}{|c|c|c|c|c|}
\hline
\multicolumn{5}{|c|}{(b) AUC (m $\cdot$ s) for Variable-Speed Dubins MT-TSP}\\
\hline
\begin{tabular}{@{}c@{}}\end{tabular} & \begin{tabular}{@{}c@{}} $n_\text{tar} = 50$\end{tabular} & \begin{tabular}{@{}c@{}} $n_\text{tar} = 100$\end{tabular} & \begin{tabular}{@{}c@{}} $n_\text{tar} = 150$\end{tabular} & \begin{tabular}{@{}c@{}} $n_\text{tar} = 200$\end{tabular}\\
\hline
\begin{tabular}{@{}c@{}}Memetic\vspace{-0.1cm}\\(baseline)\end{tabular} & 143$|$169$|$185 & 283$|$329$|$350 & 437$|$468$|$500 & 579$|$591$|$624\\
\hline
\begin{tabular}{@{}c@{}}IRG-GLNS\vspace{-0.1cm}\\(ablation)\end{tabular} & \textbf{133}$|$\textbf{137}$|$146 & 265$|$272$|$\textbf{280} & 392$|$403$|$414 & 503$|$524$|$531\\
\hline
\begin{tabular}{@{}c@{}}PDG\vspace{-0.1cm}\\(ablation)\end{tabular} & 138$|$145$|$151 & 279$|$287$|$300 & 408$|$425$|$438 & 525$|$545$|$563\\
\hline
\begin{tabular}{@{}c@{}}IRG-PGLNS\end{tabular} & 134$|$138$|$\textbf{145} & \textbf{264}$|$\textbf{271}$|$281 & \textbf{388}$|$\textbf{396}$|$\textbf{409} & \textbf{495}$|$\textbf{509}$|$\textbf{519}\\
\hline
\begin{tabular}{@{}c@{}}PCG\end{tabular} & 136$|$142$|$150 & 269$|$278$|$293 & 390$|$404$|$419 & 499$|$516$|$528\\
\hline
\end{tabular}

\vspace{0.1cm}

\begin{tabular}{|c|c|c|c|c|}
\hline
\multicolumn{5}{|c|}{(c) AUC (rad $\cdot$ s) for Robot Arm MT-TSP}\\
\hline
\begin{tabular}{@{}c@{}}\end{tabular} & \begin{tabular}{@{}c@{}} $n_\text{tar} = 50$\end{tabular} & \begin{tabular}{@{}c@{}} $n_\text{tar} = 100$\end{tabular} & \begin{tabular}{@{}c@{}} $n_\text{tar} = 150$\end{tabular} & \begin{tabular}{@{}c@{}} $n_\text{tar} = 200$\end{tabular}\\
\hline
\begin{tabular}{@{}c@{}}IRG-GLNS\vspace{-0.1cm}\\(ablation)\end{tabular} & 6.44$|$6.79$|$7.39 & 13.2$|$13.9$|$14.4 & 20.4$|$21.0$|$21.4 & \textbf{27.2}$|$28.5$|$29.2\\
\hline
\begin{tabular}{@{}c@{}}PDG\vspace{-0.1cm}\\(ablation)\end{tabular} & 6.54$|$7.05$|$7.46 & 14.2$|$14.8$|$15.4 & 22.2$|$23.1$|$23.4 & 30.9$|$31.7$|$32.1\\
\hline
\begin{tabular}{@{}c@{}}IRG-PGLNS\end{tabular} & \textbf{6.22}$|$\textbf{6.64}$|$\textbf{7.07} & \textbf{12.9}$|$\textbf{13.5}$|$\textbf{13.9} & \textbf{19.9}$|$\textbf{20.7}$|$\textbf{21.1} & 27.4$|$\textbf{27.9}$|$\textbf{28.4}\\
\hline
\begin{tabular}{@{}c@{}}PCG\end{tabular} & 6.28$|$6.72$|$7.29 & 12.9$|$13.7$|$14.1 & 20.0$|$20.9$|$21.3 & 27.6$|$28.4$|$29.1\\
\hline
\end{tabular}
\vspace{-0.5cm}
\end{table}

\subsection{Varying the Number of Cores}\label{sec:vary_num_cores}
In this experiment, we varied the number of cores used by the IRG-based algorithms and the memetic baseline, denoted in plots as $n_\text{core}$. For IRG-GLNS, $n_\text{core}$ is the number of threads used for computing edge costs, as well as the number of processes used for sampling in the Robot Arm MT-TSP. For PDG and PCG, $n_\text{core}$ is $n_\text{proc}$. For IRG-PGLNS, $n_\text{core}$ is the number of threads used for edge cost computations, the number of processes used for sampling in the Robot Arm MT-TSP, and $n_\text{thread}$ in PGLNS. For the memetic algorithm, $n_\text{core}$ is the number of threads used in initial population construction and when generating a new population from a previous one (see Appendix \ref{appendix:memetic_modifications}).

For each algorithm, the first time that Alg. \ref{alg:InitialTourGen} is invoked, we use all the cores on our computer, so that the initial solution is found at approximately the same time in all experiments. This avoids counterintuitive trends where using fewer cores causes the initial solution computation to finish later and the AUC to decrease, which would give an impression that using fewer cores improves performance. In the IRG-based algorithms, Alg. \ref{alg:InitialTourGen} is only called once, whereas in the memetic baseline, Alg. \ref{alg:InitialTourGen} is called once for each member of the initial population; thus, the memetic baseline uses all cores to generate its first population member, then $n_\text{core}$ cores for subsequent members.

First, to elucidate why IRG-PGLNS underperformed PCG for the Close-Enough MT-TSP in Section \ref{sec:vary_num_targets}, we plot each component of IRG-PGLNS's computation time per iteration in Fig. \ref{fig:irg_pglns_timing_breakdown_vs_num_cores}. The total speedup when going from 1 to 10 cores is small for the Close-Enough MT-TSP when compared to the Variable-Speed Dubins and Robot Arm MT-TSPs, making it more useful to consider several sample point graphs at once, as PCG does, rather than applying parallelization to a single sample point graph, as IRG-PGLNS does. The reason that IRG-PGLNS has a small total speedup for the Close-Enough MT-TSP is that computing edge costs, which is embarrassingly parallel, is less expensive than solving the GTSP, which is not embarrassingly parallel, as shown in Fig. \ref{fig:irg_pglns_timing_breakdown_vs_num_cores} (a). In all the variants, parallelizing the GTSP solve tends to provide sublinear speedups, and thus in the Close-Enough MT-TSP, where the GTSP solve is the bottleneck, the overall speedup for IRG-PGLNS is small. Meanwhile, in the Variable-Speed Dubins MT-TSP, the bottleneck is computing edge costs (Fig. \ref{fig:irg_pglns_timing_breakdown_vs_num_cores} (b)), and in the Robot Arm MT-TSP, the bottleneck is sampling (Fig. \ref{fig:irg_pglns_timing_breakdown_vs_num_cores} (c)). Both of these bottlenecks are embarrassingly parallel, and parallelizing them leads to large reductions in computation time, leading IRG-PGLNS to perform well on the Variable-Speed Dubins and Robot Arm MT-TSPs.

\begin{figure}
    \centering
    \includegraphics[width=0.5\textwidth]{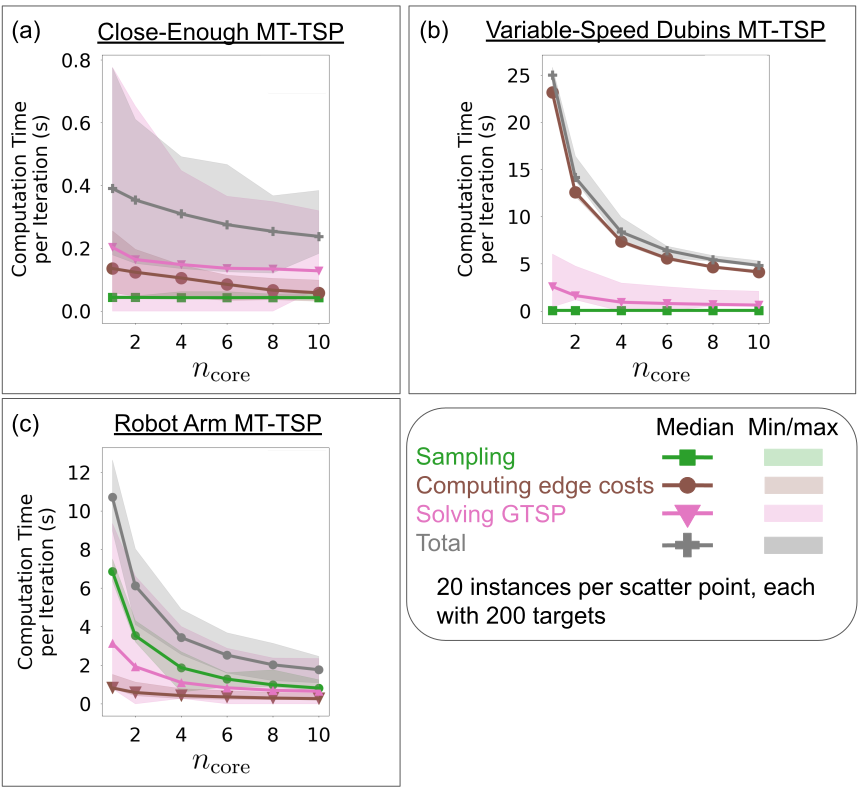}
    \vspace{-0.7cm}
    \caption{Components of runtime per iteration of IRG-PGLNS as we vary the number of cores it uses. Increasing the number of cores tends to provide a smaller reduction in runtime for the Close-Enough MT-TSP than in other variants, as discussed in Section \ref{sec:vary_num_cores}.}
    \label{fig:irg_pglns_timing_breakdown_vs_num_cores}
    \vspace{-0.5cm}
\end{figure}

Next, we examine the AUC for all planners as we vary the number of cores in Fig. \ref{fig:vary_num_cores_auc}. For all planners and problem variants, except for PDG, AUC tends to decrease with the number of cores used. PDG's displays little trend with respect to the number of cores used, highlighting the importance of communication between processes in PCG.

\begin{figure}
    \centering
    \includegraphics[width=0.5\textwidth]{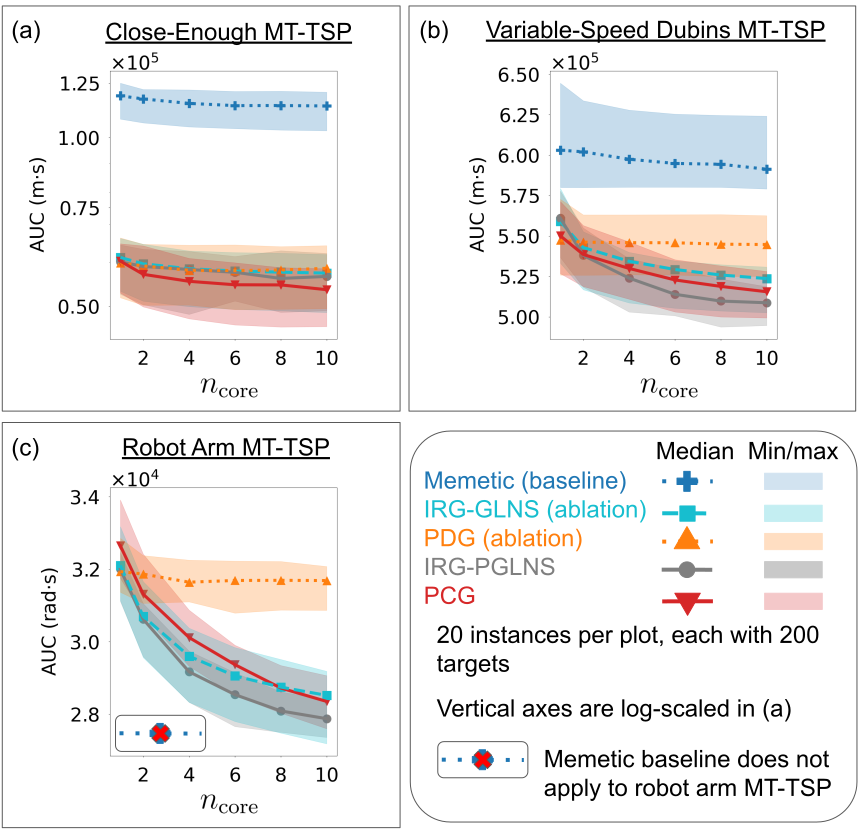}
    \vspace{-0.7cm}
    \caption{Varying the number of cores used by each algorithm. Methods' AUC values differ when they all use 1 core because they use different values of $n_\text{rand}$ and $\alpha_\text{term}$. Each planner's AUC tends to decrease as we increase the number of cores, with PDG and the memetic algorithm showing smaller decreases.}
    \label{fig:vary_num_cores_auc}
    \vspace{-0.5cm}
\end{figure}

\subsection{Close-Enough MT-TSP: Varying the Target Radius}\label{sec:cemt_tsp_vary_target_radius}
In this experiment, we varied the target radius in the Close-Enough MT-TSP from 0 to 12 m, leaving other instance parameters at their default values. The results are in Fig. \ref{fig:problem_specific_auc} (a). As the target radius becomes larger, the AUCs become smaller for all planners, since the instances become less constrained and have smaller optimal costs. For all target radii, PCG has the smallest min, median, and max AUC.

\subsection{Variable-Speed Dubins MT-TSP: Varying the Min Speed}\label{sec:vsdmt_tsp_vary_vmin}
In this experiment, we varied $v_\text{min}$ in the Variable-Speed Dubins MT-TSP from 2 m/s to 5 m/s, with other instance parameters at their default values. When $v_\text{min} = 5$, $v_\text{min}$ equals $v_\text{max}$ and we have a fixed-speed Dubins MT-TSP. The results are in Fig. \ref{fig:problem_specific_auc} (b). As $v_\text{min}$ increases, the median AUC increases for all planners, since the instances become more constrained and have higher optimal costs.

Planners' AUC values tend to become more similar as we increase $v_\text{min}$. This is expected, since the gap in minimum and maximum trajectory cost decreases as $v_\text{min}$ approaches $v_\text{max}$. That is, if the latest starting time window begins at $\underline{t}$, and the latest ending time window begins at $\overline{t}$, the minimum cost for an instance is $\underline{c} = v_\text{min}\underline{t}$, and the maximum cost is $\overline{c} = v_\text{max}\overline{t}$. As we increase $v_\text{min}$, $\underline{c}$ becomes closer to $\overline{c}$.

\subsection{Robot Arm MT-TSP: Varying the Max Joint Speed}\label{sec:ramt_tsp_vary_vmax}
In this experiment, we varied the arm's max joint speed in the Robot Arm MT-TSP from 4.1 rad/s to 5.0 rad/s, leaving other instance parameters at their default values. The results are in Fig. \ref{fig:problem_specific_auc} (c). As $v_\text{max}$ increases, the median AUC tends to decrease for all planners, since the instances become less constrained and have smaller optimal costs. IRG-PGLNS has the smallest median and max AUC for all values of $v_\text{max}$.

\begin{figure}
    \centering
    \includegraphics[width=0.5\textwidth]{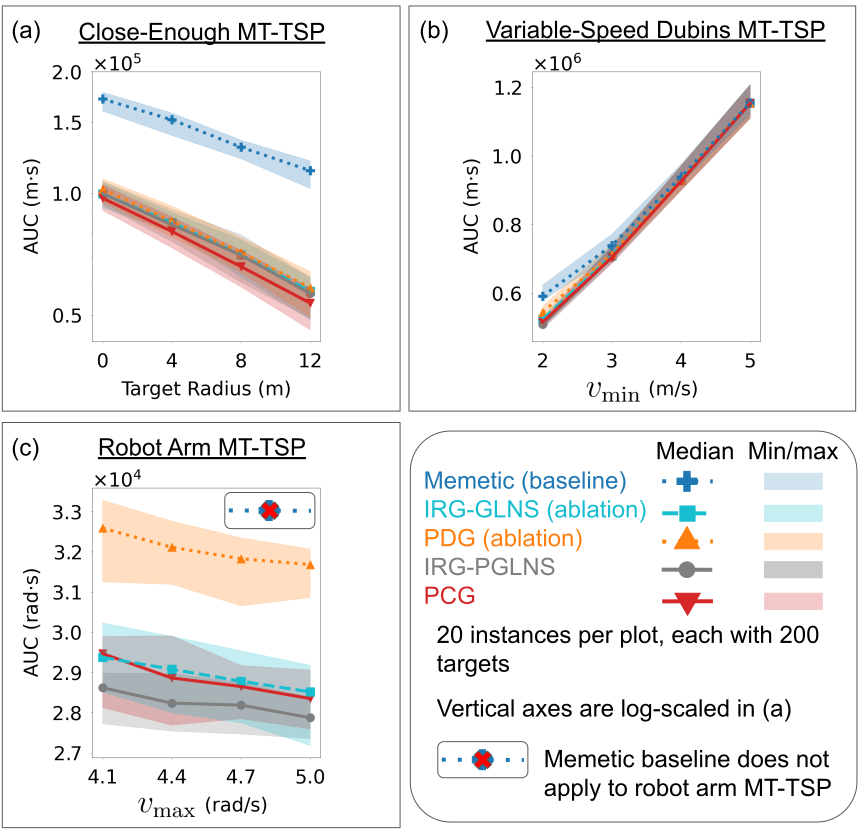}
    \vspace{-0.7cm}
    \caption{(a) Varying target radius in Close-Enough MT-TSP. PCG has smaller min, median, and max AUC than all other planners for all radii. (b) Varying $v_\text{min}$ in Variable-Speed Dubins MT-TSP. IRG-PGLNS and PCG perform similarly to each other and have smaller min, median, and max AUC than baselines and ablations for $v_\text{min} \leq 4$. (c) Varying $v_\text{max}$ in Robot Arm MT-TSP. IRG-PGLNS has smaller median and max AUC than all other planners for all $v_\text{max}$.}
    \label{fig:problem_specific_auc}
    \vspace{-0.5cm}
\end{figure}

\subsection{Linear MT-TSP: Evaluating Asymptotic Optimality}\label{sec:mt_tsp_eval_optimality}
In this section, we evaluate how quickly IRG-PGLNS and PCG converge to the optimal cost for the Linear MT-TSP, as defined in the beginning of Section \ref{sec:numerical_results}. In particular, we generated instances with 10 to 40 targets using the steps described in Section \ref{sec:generating_instances}, but we replaced the two-segment piecewise-linear trajectories with single-segment trajectories, and we set $\tau_i = \tilde{\tau}_i$ rather than fitting a B-spline to $\tilde{\tau}_i$. The instances used the same parameters as the instances for the Close-Enough MT-TSP, except we made the time window length 54 s rather than 108 s so that we could find the optimum using the optimal Mixed Integer Conic Program (MICP) \cite{philip2024mixedinteger} for up to 20 targets within a 1 minute time budget. We then ran the optimal MICP, IRG-PGLNS, and PCG on each instance for 1 minute. We also ran a planner we call IRG-IP, where IP stands for integer program; IRG-IP is identical to IRG-PGLNS, but it finds optimal GTSP solutions by solving the integer program from \cite{laporte1987generalized} using Gurobi. We show IRG-IP to see whether using an optimal GTSP solver, rather than PGLNS, practically accelerates convergence to the optimum.

For IRG-PGLNS and PCG, we modified PGLNS and GLNS by allowing the solvers to remove up to $n_\text{cluster} - 1$ nodes from a tour in each iteration, as opposed to the maximum of $0.1n_\text{cluster}$ removals used in GLNS fast mode. We did so because we found that for small numbers of clusters, only allowing $0.1n_\text{cluster}$ removals led to convergence to notably suboptimal solutions. We did not make this change for other experiments because for 100 or more targets, allowing $n_\text{cluster} - 1$ removals did not provide significant benefit. 

The results are in Fig. \ref{fig:sampling_vs_micp_gcs}. The optimal MICP certified that it found the optimal solution in all instances with 10 to 20 targets, but not in any instance with 30 to 40 targets. IRG-PGLNS and PCG both converged to solutions of similar quality to the optimal MICP. Also, both IRG-PGLNS and PCG exhibit faster convergence than IRG-IP. This demonstrates that the computational complexity of solving GTSPs with integer programming in the IRG framework outweighs the benefits of finding optimal GTSP solutions.

\begin{figure}
    \centering
    \includegraphics[width=0.5\textwidth]{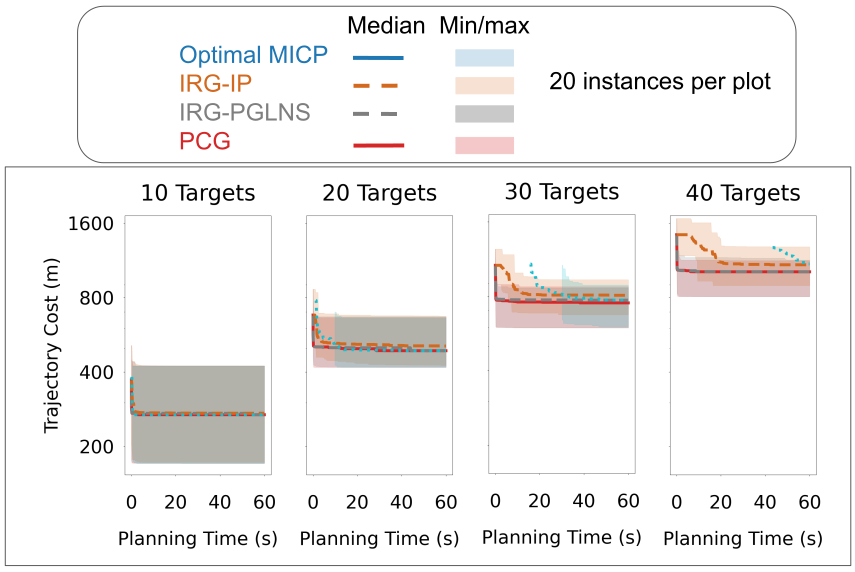}
    \vspace{-0.7cm}
    \caption{Testing IRG variations against optimal MICP solver \cite{philip2024mixedinteger} on ``linear" MT-TSP instances, as defined at the beginning of Section \ref{sec:numerical_results}. For instances with 10 and 20 targets, the optimal MICP found the optimal solution within the 1 min time limit, but on the instances with 30 and 40 targets, it did not certify that it found the optimum in any instance. IRG-PGLNS and PCG converge closely to the optimum for 10 and 20 targets and scale better with the number of targets than the optimal MICP and IRG-IP.}
    \label{fig:sampling_vs_micp_gcs}
    \vspace{-0.5cm}
\end{figure}

\def\EFAT{{\textnormal{EFAT}}}
Finally, we show that Assumption \ref{assumption:N_existence}, required for asymptotic optimality, holds for all Linear MT-TSP instances where the MICP found the optimum. In the Linear MT-TSP, the cost of a solution only depends on the interception times at the targets, so the set $\mathcal{H}$ from Section \ref{sec:theoretical_analysis} can be set arbitrarily, e.g. $\mathcal{H} = \{0\}$. We can then let $\mathcal{L}_\mathcal{H} = \prod\limits_{i = 1}^\ntar [\underline{t}_i, \overline{t}_i]$, ignoring $\mathcal{H}$. 
Let $S$ be the optimal target sequence, computed by the MICP, where $S[k]$ is the $k$th target visited. For convenience, we define a fictitious target 0 with $\tau_0(t) = q_{\text{a}, 0}$ for all $t$, and we let $S[0] = 0$.
The optimal interception times for $S$ can be found by solving the second-order cone program (SOCP) given in \eqref{optprob:fixed_target_seq_socp}, where the interior of the feasible set is a set $\mathcal{N}$ satisfying Assumption \ref{assumption:N_existence}.
\SetKwFunction{EFAT}{EFAT}
\begin{mini!}
{\{t_{i}\}_{i = 1}^{\ntar}}{\sum\limits_{k = 1}^{\ntar}\|\tau_{S[k]}(t_{S[k]}) - \tau_{S[k - 1]}(t_{S[k - 1]})\|\label{eqn:fixed_target_seq_objective}}{\label{optprob:fixed_target_seq_socp}}{}
\addConstraint{\underline{t}_i \leq t_i \; \forall i \in \mathcal{I}\label{eqn:fixed_target_seq_tw_lower}}{}
\addConstraint{t_{i} \leq \overline{t}_{i} \; \forall i \in \mathcal{I}\label{eqn:fixed_target_seq_tw_upper}}{}
\addConstraint{\|(\tau_{S[k]}(t_{S[k]}) - \tau_{S[k - 1]}(t_{S[k - 1]}))\| \leq \nonumber}{}
\addConstraint{}{\quad v_\text{max}(t_{S[k]} - t_{S[k - 1]}) \; \forall k \in \{1, 2, \dots, \ntar\}\label{eqn:fixed_target_seq_speed_limit}}
\end{mini!}
The decision variables of \eqref{optprob:fixed_target_seq_socp} are the interception times $t_i$ for each target $i$. \eqref{eqn:fixed_target_seq_objective} minimizes distance traveled, \eqref{eqn:fixed_target_seq_tw_lower}-\eqref{eqn:fixed_target_seq_tw_upper} enforce time window constraints, and \eqref{eqn:fixed_target_seq_speed_limit} enforces the speed limit. Let $\mathcal{P}$ be \eqref{optprob:fixed_target_seq_socp}'s feasible set. Let $\mathcal{N}$ be the interior of $\mathcal{P}$. To satisfy Assumption \ref{assumption:N_existence}, we need $\mathcal{N} \neq \emptyset$. In general, the optimal solution will not be in $\mathcal{N}$, because optimal solutions often lie on the boundary of a feasible set rather than the interior.
Thus, we used the following procedure to find an element of $\mathcal{N}$. We initialized a value $\delta = 10$. Then we iterated from $k = 1$ to $\ntar$, and for each $k$, set $t_{S[k]}$ to $\EFAT(S[k - 1], t_{S[k - 1]}, S[k]) + \delta$. $\EFAT(S[k - 1], t_{S[k - 1]}, S[k])$ denotes the earliest time at which the agent can intercept $S[k]$ after departing $S[k - 1]$ at time $t_{S[k - 1]}$, and can be computed using the method from \cite{philip2025CStar}. If any iteration of this loop failed due to $t_{S[k]} > \overline{t}_{S[k]}$, we divided $\delta$ by 2 and restarted.

Using the computed $t_i$ values for each $i \in \mathcal{I}$, we computed the distance of the LHS from the RHS for each constraint in \eqref{optprob:fixed_target_seq_socp}. All distances were larger than 0.039, i.e. far enough above machine precision to consider them nonzero. This means in each instance, $(t_1, t_2, \dots, t_\ntar) \in \mathcal{N}$, so $\mathcal{N} \neq \emptyset$.

Since $\mathcal{P}$ is convex with nonempty interior $\mathcal{N}$, $\overline{\mathcal{N}} = \mathcal{P}$ (Theorem 2.13 in \cite{Mordukhovich2022Convex}), so $\overline{\mathcal{N}}$ contains the optimal $l_\mathcal{H} \in \mathcal{L}_\mathcal{H}$ for $S$. The optimality of $S$ then implies Assumption \ref{assumption:N_existence} (i). All lists in $\overline{\mathcal{P}}$ correspond to the same target sequence $S$, satisfying Assumption \ref{assumption:N_existence} (ii). $\mathcal{N} \subseteq \mathcal{P}$ implies Assumption \ref{assumption:N_existence} (iii). Thus, Assumption \ref{assumption:N_existence} is satisfied in practice for the Linear MT-TSP. We cannot validate Assumption \ref{assumption:N_existence} in this way for the other MT-TSP variants because we do not know the optimal target sequence.
However, Assumption \ref{assumption:N_existence}'s validity for the Linear MT-TSP suggests that it is reasonable for the other variants.

\subsection{Evaluating Initial Tour Generation}\label{sec:mt_tsp_eval_init_tour_gen}
In this section, we evaluate the effectiveness of the DAG-DFS in Alg. \ref{alg:InitialDFS} for solving the GTSP in Alg. \ref{alg:InitialTourGen}, when generating the initial tour in IRG. All methods we compare against run Alg. \ref{alg:InitialTourGen}, but use a different method of solving the GTSP. Our first baseline, called ``Integer Program," solves the GTSP using an integer program \cite{laporte1987generalized}. Our second baseline, called ``GLNS," solves the GTSP using GLNS \cite{smith2017glns}. Our third baseline, called ``PGLNS," solves the GTSP using PGLNS. We also have an ablation called ``DAG-DFS-no-prune," which uses DAG-DFS (Alg. \ref{alg:InitialDFS}), but without the BEFORE sets for pruning. Finally, DAG-DFS uses Alg. \ref{alg:InitialDFS} as is.
All methods stop when they find a feasible solution.

In Section \ref{subsec:tour_via_gtsp}, we set the cost of infeasible edges for GLNS and PGLNS using the cost of the best tour generated so far. Since we do not have a best tour for this experiment--the experiment's purpose is to test methods of generating an initial tour--we set the infeasible edge costs equal to $(\ntar + 1)\overline{c}$, where $\overline{c}$ is the largest feasible edge cost, scaled and rounded as in Section \ref{subsec:tour_via_gtsp}. Since we do not have a seed tour to pass GLNS or PGLNS, we use a ``random insertion tour," as described in \cite{smith2017glns}. If GLNS or PGLNS reaches its termination criteria without finding a feasible tour, and there is time left, we restart the algorithm with a new random insertion tour.

We used the tuned $n_\text{rand,init}$ values for each MT-TSP variant stated in Section \ref{sec:tuning}, and Alg. \ref{alg:InitialTourGen} never needed to sample additional points after sampling the initial set of points. For GLNS and PGLNS, we used the parameters from GLNS's fast mode. We varied the number of targets and set all other instance parameters to their default values. The computation time results are in Fig. \ref{fig:compare_init_tour_methods}. Our DAG-DFS method tends to find initial solutions more quickly than the other methods, particularly for large numbers of targets. Comparisons against DAG-DFS-no-prune show that pruning based on the BEFORE sets is key to achieving low runtimes.

\begin{figure}
    \centering
    \includegraphics[width=0.5\textwidth]{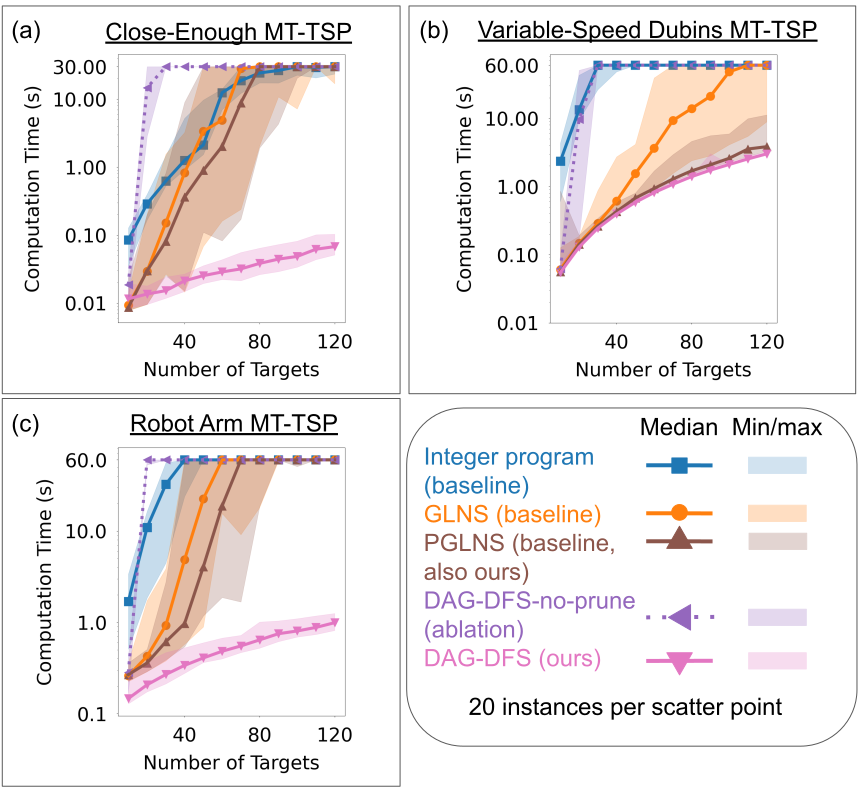}
    \vspace{-0.7cm}
    \caption{Comparison of computation time for methods for solving the GTSP on the first set of sample points. Our DAG-DFS method, used in all the IRG variants, scales better with the number of targets than the other methods.}
    \label{fig:compare_init_tour_methods}
    \vspace{-0.3cm}
\end{figure}

Next, we ran tour improvement via IRG-PGLNS for the remainder of the time budget for each instance, with each initial tour generation method. Table \ref{table:initial_tour_cost_comparison} compares the tour cost after improving tours from the different methods, for instances where the baselines found tours (DAG-DFS found tours in all instances). We do not compare cost against the ablation because DAG-DFS obtained the same initial tour as the ablation wherever the ablation found a tour. Improving DAG-DFS tours gave similar final cost as improving the baseline tours in the median case.
We also computed the statistics in Table \ref{table:initial_tour_cost_comparison} for each specific MT-TSP variant/baseline pair, and the median percent difference was less than 1\% in every case; we omit the numbers for brevity.
Thus, DAG-DFS gives similar solution quality to the baselines on small instances and scales to larger instances.
\renewcommand{\arraystretch}{1.8}
\begin{table}
\caption{Comparison of cost after tour improvement using the DAG-DFS initial solution $(c_\text{DAG-DFS})$ against cost after tour improvement using baseline initial solution $(c_\text{baseline})$.}\label{table:initial_tour_cost_comparison}
\vspace{-0.2cm}
\centering
\begin{tabular}{|c|c|c|c|}
\hline
 & Min & Median & Max \\
\hline
$\frac{c_\text{DAG-DFS} - c_\text{baseline}}{c_\text{baseline}}*100\%$ & -67.1\% & -0.00413\% & 30.2\%\\
\hline
\end{tabular}
\vspace{-0.7cm}
\end{table}

To examine DAG-DFS's scalability, we ran another experiment where we started at 400 targets, then continued to double the number of targets, running DAG-DFS on 20 instances for each number of targets. We recorded the largest number of targets for each MT-TSP variant where DAG-DFS found a solution within the time limit on all 20 instances. This number was 800 for the Close-Enough MT-TSP, 400 for the Variable-Speed Dubins MT-TSP, and 800 for the Robot Arm MT-TSP.

\section{Physical Robot Experiments}\label{sec:physical_robot_experiments}
In this section, we present results from our deployment of the IRG framework on a Turtlebot 4 Standard. The multimedia attachment shows a video of the experimental setup. We considered the Close-Enough MT-TSP. Our planners for the Close-Enough MT-TSP parameterize a trajectory as a tour $((\qagtzero, 0), (q_{\text{a}, 1}, t_1), (q_{\text{a}, 2}, t_2), \dots, (q_{\text{a}, \ntar}, t_\ntar))$, assuming the robot takes a straight-line path between consecutive pairs of points in the tour. Our differential-drive robot tracks such a trajectory as follows. First, the robot initializes itself to bring its estimated position within 0.1 m of $\qagtzero$, then bring its estimated yaw within 0.1 rad of $\pi$. Then at $t = 0$, the robot rotates to face $q_{\text{a}, 1}$, i.e. it rotates so that its estimated heading vector is aligned with the straight-line path to $q_{\text{a}, 1}$, within 0.1 rad. The robot uses proportional control for this. The robot then runs pure pursuit to bring its estimated position within 0.1 m of $q_{\text{a}, 1}$. Next, the robot waits until $t_1$; while waiting, it rotates to face $q_{\text{a}, 2}$. Once time $t_1$ is reached, and the robot is facing $q_{\text{a}, 2}$, the robot moves toward $q_{\text{a}, 2}$, and so forth.

In Section \ref{sec:problem_setup}, we defined \TrajExists assuming that the agent could change its motion direction instantaneously. Since this is not true on our robot, we modified \TrajExists($\qagt, t, \qagt', t')$ to check if $\|\qagt - \qagt'\| \leq v_\text{max}(t' - t - t_\text{rot})$, where $t_\text{rot}$ is the maximum time we expect the robot needs to rotate at one interception point to face the next one.
Based on our characterization of the robot, we set $t_\text{rot} = 3.5$ s.

We generated 5 instances with 50 targets each, using the method in Section \ref{sec:generating_instances}. We set $\mathcal{Q}_{\text{a},0} = [-1.746375 \text{ m}, 1.746375 \text{ m}] \times [-0.9225 \text{ m}, 0.9225 \text{ m}]$ and $v_\text{max} = 0.46$ m/s (the Turtlebot's maximum speed). Each segment of $\tilde{\tau}_i$ had a direction sampled uniformly at random and a speed sampled uniformly at random from the interval [0.046, 0.092] m/s. Each time window had length 20 s.

Due to localization error and trajectory-tracking error, when the robot is commanded to move to point $(q_{\text{a},k}, t_k)$, it generally will not reach the $q_{\text{a}, k}$ exactly.
Thus, when planning, we must reduce each target's disc radius by the maximum distance that we expect the robot to be from $q_{\text{a},k}$ at $t_k$.
We set the original disc radius for each target to 0.51 m, then applied a 0.1 m reduction due to the controller tolerance described above, as well as a 0.13 m reduction equal to the maximum position estimation error we have observed with respect to ground truth for our robot. Thus, the planner used a disc radius of 0.28 m.

Note that our radius reduction did not account for execution latency. In particular, consider a robot that takes significant time to stop after being commanded to stop. Then we may command the robot to stop when it gets within 0.1 m of $q_{\text{a}, k}$, but the robot may be more than 0.1 m from $q_{\text{a}, k}$ after actually stopping. For such a system, we would need to further reduce the disc radius by the maximum distance we expect the robot to travel after being commanded to stop. In our case, however, even without such a reduction, the estimated trajectory of the robot intercepted all targets in every experiment.


We planned a trajectory for each instance using PCG, IRG-GLNS, PDG, and the Memetic baseline offline with a 1 s planning time budget, and then tracked each planned trajectory on the robot. We did not consider IRG-PGLNS, since it showed little advantage for the Close-Enough MT-TSP in Section \ref{sec:numerical_results}, and was more useful in the other variants. Table \ref{table:robot_experiment_results} shows the results. PCG outperformed the other planners in both planned and executed trajectory costs.

\renewcommand{\arraystretch}{1.8}
\setlength{\tabcolsep}{3pt}
\begin{table}
\vspace{-0.2cm}
\caption{Costs (distance traveled) of planned trajectories, and trajectories executed on robot, in meters. Entries are shown as min$|$median$|$max. Best min, median, and max in each column are bolded. Statistics are taken over 5 instances.}\label{table:robot_experiment_results}
\vspace{-0.2cm}
\centering
\begin{tabular}{|c|c|c|}
\hline
\begin{tabular}{@{}c@{}}\end{tabular} & \begin{tabular}{@{}c@{}} Planned\end{tabular} & \begin{tabular}{@{}c@{}} Executed\end{tabular}\\
\hline
Memetic (baseline) & 25.4$|$29.9$|$32.8 & 27.1$|$32.5$|$34.7\\
\hline
IRG-GLNS (ablation) & 21.9$|$24.5$|$26.8 & 23.5$|$26.2$|$29.3\\
\hline
PDG (ablation) & 22.0$|$22.9$|$27.3 & 23.3$|$25.5$|$29.2\\
\hline
PCG & \textbf{20.4}$|$\textbf{21.5}$|$\textbf{26.4} & \textbf{21.6}$|$\textbf{23.6}$|$\textbf{28.6}\\
\hline
\end{tabular}
\vspace{-0.5cm}
\end{table}

\section{Conclusion}\label{sec:conclusion}
In this paper, we introduced the IRG framework for variants of the MT-TSP, as well as two parallel algorithms in this framework: IRG-PGLNS and PCG. We proved our algorithms' asymptotic optimality and demonstrated their advantages over a baseline and two ablations in numerical experiments and physical robot experiments. One direction for future work is to incorporate informed sampling methods, such as \cite{faigl2017solution}, into IRG, to accelerate convergence toward optimal solutions. Another direction, as we mention in Section \ref{sec:theoretical_analysis}, is to prove stronger finite-time performance guarantees for IRG.

\bibliographystyle{IEEEtran}
%
\bibliography{refs}

\begin{thebibliography}{10}
\providecommand{\url}[1]{#1}
\csname url@samestyle\endcsname
\providecommand{\newblock}{\relax}
\providecommand{\bibinfo}[2]{#2}
\providecommand{\BIBentrySTDinterwordspacing}{\spaceskip=0pt\relax}
\providecommand{\BIBentryALTinterwordstretchfactor}{4}
\providecommand{\BIBentryALTinterwordspacing}{\spaceskip=\fontdimen2\font plus
\BIBentryALTinterwordstretchfactor\fontdimen3\font minus \fontdimen4\font\relax}
\providecommand{\BIBforeignlanguage}[2]{{%
\expandafter\ifx\csname l@#1\endcsname\relax
\typeout{** WARNING: IEEEtran.bst: No hyphenation pattern has been}%
\typeout{** loaded for the language `#1'. Using the pattern for}%
\typeout{** the default language instead.}%
\else
\language=\csname l@#1\endcsname
\fi
#2}}
\providecommand{\BIBdecl}{\relax}
\BIBdecl

\bibitem{cook2011traveling}
W.~J. Cook, D.~L. Applegate, R.~E. Bixby, and V.~Chvatal, \emph{The traveling salesman problem: a computational study}.\hskip 1em plus 0.5em minus 0.4em\relax Princeton university press, 2011.

\bibitem{gutin2006traveling}
G.~Gutin and A.~P. Punnen, \emph{The traveling salesman problem and its variations}.\hskip 1em plus 0.5em minus 0.4em\relax Springer Science \& Business Media, 2006, vol.~12.

\bibitem{barnes2004solving}
J.~W. Barnes, V.~D. Wiley, J.~T. Moore, and D.~M. Ryer, ``Solving the aerial fleet refueling problem using group theoretic tabu search,'' \emph{Mathematical and computer modelling}, vol.~39, no. 6-8, pp. 617--640, 2004.

\bibitem{granado2024fishing}
I.~Granado, L.~Hernando, Z.~Uriondo, and J.~A. Fernandes-Salvador, ``A fishing route optimization decision support system: The case of the tuna purse seiner,'' \emph{European Journal of Operational Research}, vol. 312, no.~2, pp. 718--732, 2024.

\bibitem{groba2015solving}
C.~Groba, A.~Sartal, and X.~H. V{\'a}zquez, ``Solving the dynamic traveling salesman problem using a genetic algorithm with trajectory prediction: An application to fish aggregating devices,'' \emph{Computers \& Operations Research}, vol.~56, pp. 22--32, 2015.

\bibitem{brown2017scheduling}
G.~G. Brown, W.~C. DeGrange, W.~L. Price, and A.~A. Rowe, ``Scheduling combat logistics force replenishments at sea for the us navy,'' \emph{Naval Research Logistics (NRL)}, vol.~64, no.~8, pp. 677--693, 2017.

\bibitem{marlow2007travelling}
D.~Marlow, P.~Kilby, and G.~Mercer, ``The travelling salesman problem in maritime surveillance--techniques, algorithms and analysis,'' in \emph{Proceedings of the international congress on modelling and simulation}.\hskip 1em plus 0.5em minus 0.4em\relax Citeseer, 2007, pp. 684--690.

\bibitem{wang2023moving}
Y.~Wang and N.~Wang, ``Moving-target travelling salesman problem for a helicopter patrolling suspicious boats in antipiracy escort operations,'' \emph{Expert Systems with Applications}, vol. 213, p. 118986, 2023.

\bibitem{de2019experimental}
R.~S. de~Moraes and E.~P. de~Freitas, ``Experimental analysis of heuristic solutions for the moving target traveling salesman problem applied to a moving targets monitoring system,'' \emph{Expert Systems with Applications}, vol. 136, pp. 392--409, 2019.

\bibitem{stieber2022DealingWithTime}
A.~Stieber and A.~F{\"u}genschuh, ``Dealing with time in the multiple traveling salespersons problem with moving targets,'' \emph{Central European Journal of Operations Research}, vol.~30, no.~3, pp. 991--1017, 2022.

\bibitem{smith2021assessment}
C.~D. Smith, \emph{Assessment of genetic algorithm based assignment strategies for unmanned systems using the multiple traveling salesman problem with moving targets}.\hskip 1em plus 0.5em minus 0.4em\relax University of Missouri-Kansas City, 2021.

\bibitem{helvig2003moving}
C.~S. Helvig, G.~Robins, and A.~Zelikovsky, ``The moving-target traveling salesman problem,'' \emph{Journal of Algorithms}, vol.~49, no.~1, pp. 153--174, 2003.

\bibitem{hammar1999}
M.~Hammar and B.~J. Nilsson, ``Approximation results for kinetic variants of tsp,'' in \emph{Automata, Languages and Programming: 26th International Colloquium, ICALP’99 Prague, Czech Republic, July 11--15, 1999 Proceedings 26}.\hskip 1em plus 0.5em minus 0.4em\relax Springer, 1999, pp. 392--401.

\bibitem{savelsbergh1985local}
M.~W. Savelsbergh, ``Local search in routing problems with time windows,'' \emph{Annals of Operations research}, vol.~4, pp. 285--305, 1985.

\bibitem{ding2022memetic}
Y.~Ding, B.~Xin, L.~Dou, J.~Chen, and B.~M. Chen, ``A memetic algorithm for curvature-constrained path planning of messenger uav in air-ground coordination,'' \emph{IEEE Transactions on Automation Science and Engineering}, vol.~19, no.~4, pp. 3735--3749, 2022.

\bibitem{smith2017glns}
S.~L. Smith and F.~Imeson, ``Glns: An effective large neighborhood search heuristic for the generalized traveling salesman problem,'' \emph{Computers \& Operations Research}, vol.~87, pp. 1--19, 2017.

\bibitem{ropke2009palns}
S.~R{\o}pke, ``Palns-a software framework for parallel large neighborhood search,'' in \emph{Metaheuristic International Conference}, 2009.

\bibitem{Dumas1995OptimalAlgorithm}
Y.~Dumas, J.~Desrosiers, E.~Gelinas, and M.~M. Solomon, ``An optimal algorithm for the traveling salesman problem with time windows,'' \emph{Operations research}, vol.~43, no.~2, pp. 367--371, 1995.

\bibitem{philip2025CStar}
A.~G. Philip, Z.~Ren, S.~Rathinam, and H.~Choset, ``C$^{*}$: A new bounding approach for the moving-target traveling salesman problem,'' \emph{IEEE Transactions on Robotics}, vol.~41, pp. 4663--4678, 2025.

\bibitem{Li2019RendezvousPlanning}
B.~Li, B.~R. Page, J.~Hoffman, B.~Moridian, and N.~Mahmoudian, ``Rendezvous planning for multiple auvs with mobile charging stations in dynamic currents,'' \emph{IEEE Robotics and Automation Letters}, vol.~4, no.~2, pp. 1653--1660, 2019.

\bibitem{mathew2015multirobot}
N.~Mathew, S.~L. Smith, and S.~L. Waslander, ``Multirobot rendezvous planning for recharging in persistent tasks,'' \emph{IEEE Transactions on Robotics}, vol.~31, no.~1, pp. 128--142, 2015.

\bibitem{philip2024mixedinteger}
A.~G. Philip, Z.~Ren, S.~Rathinam, and H.~Choset, ``A mixed-integer conic program for the moving-target traveling salesman problem based on a graph of convex sets,'' \emph{arXiv preprint arXiv:2403.04917}, 2024.

\bibitem{stieber2022multiple}
A.~Stieber, ``The multiple traveling salesperson problem with moving targets,'' Ph.D. dissertation, BTU Cottbus-Senftenberg, 2022.

\bibitem{philip2025mixed}
A.~G. Philip, Z.~Ren, S.~Rathinam, and H.~Choset, ``A mixed-integer conic program for the multi-agent moving-target traveling salesman problem,'' \emph{arXiv preprint arXiv:2501.06130}, 2025.

\bibitem{bourjolly2006orbit}
J.-M. Bourjolly, O.~Gurtuna, and A.~Lyngvi, ``On-orbit servicing: a time-dependent, moving-target traveling salesman problem,'' \emph{International Transactions in Operational Research}, vol.~13, no.~5, pp. 461--481, 2006.

\bibitem{choubey2013moving}
N.~S. Choubey, ``Moving target travelling salesman problem using genetic algorithm,'' \emph{International Journal of Computer Applications}, vol.~70, no.~2, 2013.

\bibitem{zhang2025two}
Z.~Zhang, C.~Chen, L.~Wang, Y.~Ding, and F.~Deng, ``A two-phase planner for messenger routing problem in uav-ugv coordination systems,'' \emph{IEEE Transactions on Automation Science and Engineering}, vol.~22, pp. 16\,948--16\,963, 2025.

\bibitem{gulczynski2006close}
D.~J. Gulczynski, J.~W. Heath, and C.~C. Price, ``The close enough traveling salesman problem: A discussion of several heuristics,'' \emph{Perspectives in Operations Research: Papers in Honor of Saul Gass’ 80 th Birthday}, pp. 271--283, 2006.

\bibitem{coutinho2016branch}
W.~P. Coutinho, R.~Q.~d. Nascimento, A.~A. Pessoa, and A.~Subramanian, ``A branch-and-bound algorithm for the close-enough traveling salesman problem,'' \emph{INFORMS Journal on Computing}, vol.~28, no.~4, pp. 752--765, 2016.

\bibitem{zhang2023results}
W.~Zhang, J.~J. Sauppe, and S.~H. Jacobson, ``Results for the close-enough traveling salesman problem with a branch-and-bound algorithm,'' \emph{Computational Optimization and Applications}, vol.~85, no.~2, pp. 369--407, 2023.

\bibitem{carrabs2017improved}
F.~Carrabs, C.~Cerrone, R.~Cerulli, and C.~D’Ambrosio, ``Improved upper and lower bounds for the close enough traveling salesman problem,'' in \emph{Green, Pervasive, and Cloud Computing: 12th International Conference, GPC 2017, Cetara, Italy, May 11-14, 2017, Proceedings 12}.\hskip 1em plus 0.5em minus 0.4em\relax Springer, 2017, pp. 165--177.

\bibitem{carrabs2020adaptive}
F.~Carrabs, C.~Cerrone, R.~Cerulli, and B.~Golden, ``An adaptive heuristic approach to compute upper and lower bounds for the close-enough traveling salesman problem,'' \emph{INFORMS Journal on Computing}, vol.~32, no.~4, pp. 1030--1048, 2020.

\bibitem{wang2019steiner}
X.~Wang, B.~Golden, and E.~Wasil, ``A steiner zone variable neighborhood search heuristic for the close-enough traveling salesman problem,'' \emph{Computers \& Operations Research}, vol. 101, pp. 200--219, 2019.

\bibitem{di2022genetic}
A.~Di~Placido, C.~Archetti, and C.~Cerrone, ``A genetic algorithm for the close-enough traveling salesman problem with application to solar panels diagnostic reconnaissance,'' \emph{Computers \& Operations Research}, vol. 145, p. 105831, 2022.

\bibitem{savla2005point}
K.~Savla, E.~Frazzoli, and F.~Bullo, ``On the point-to-point and traveling salesperson problems for dubins' vehicle,'' in \emph{Proceedings of the 2005, American Control Conference, 2005.}\hskip 1em plus 0.5em minus 0.4em\relax IEEE, 2005, pp. 786--791.

\bibitem{dubins1957curves}
L.~E. Dubins, ``On curves of minimal length with a constraint on average curvature, and with prescribed initial and terminal positions and tangents,'' \emph{American Journal of mathematics}, vol.~79, no.~3, pp. 497--516, 1957.

\bibitem{manyam2018tightly}
S.~G. Manyam and S.~Rathinam, ``On tightly bounding the dubins traveling salesman's optimum,'' \emph{Journal of Dynamic Systems, Measurement, and Control}, vol. 140, no.~7, p. 071013, 2018.

\bibitem{ny2011dubins}
J.~Ny, E.~Feron, and E.~Frazzoli, ``On the dubins traveling salesman problem,'' \emph{IEEE Transactions on Automatic Control}, vol.~57, no.~1, pp. 265--270, 2011.

\bibitem{cohen2017discretized}
I.~Cohen, C.~Epstein, and T.~Shima, ``On the discretized dubins traveling salesman problem,'' \emph{IISE Transactions}, vol.~49, no.~2, pp. 238--254, 2017.

\bibitem{oberlin2010today}
P.~Oberlin, S.~Rathinam, and S.~Darbha, ``Today's traveling salesman problem,'' \emph{IEEE robotics \& automation magazine}, vol.~17, no.~4, pp. 70--77, 2010.

\bibitem{faigl2020fast}
J.~Faigl, P.~V{\'a}{\v{n}}a, and J.~Drchal, ``Fast sequence rejection for multi-goal planning with dubins vehicle,'' in \emph{2020 IEEE/RSJ International Conference on Intelligent Robots and Systems (IROS)}.\hskip 1em plus 0.5em minus 0.4em\relax IEEE, 2020, pp. 6773--6780.

\bibitem{ma2006receding}
X.~Ma and D.~A. Castanon, ``Receding horizon planning for dubins traveling salesman problems,'' in \emph{Proceedings of the 45th IEEE Conference on Decision and Control}.\hskip 1em plus 0.5em minus 0.4em\relax IEEE, 2006, pp. 5453--5458.

\bibitem{drchal2020wism}
J.~Drchal, J.~Faigl, and P.~V{\'a}{\v{n}}a, ``Wism: Windowing surrogate model for evaluation of curvature-constrained tours with dubins vehicle,'' \emph{IEEE Transactions on Cybernetics}, vol.~52, no.~2, pp. 1302--1311, 2020.

\bibitem{kuvcerova2021variable}
K.~Ku{\v{c}}erov{\'a}, P.~V{\'a}{\v{n}}a, and J.~Faigl, ``Variable-speed traveling salesman problem for vehicles with curvature constrained trajectories,'' in \emph{2021 IEEE/RSJ International Conference on Intelligent Robots and Systems (IROS)}.\hskip 1em plus 0.5em minus 0.4em\relax IEEE, 2021, pp. 4714--4719.

\bibitem{wilson2025generalized}
J.~P. Wilson, S.~Gupta, and T.~A. Wettergren, ``Generalized multi-speed dubins motion model,'' \emph{IEEE Transactions on Robotics}, 2025.

\bibitem{alatartsev2015robotic}
S.~Alatartsev, S.~Stellmacher, and F.~Ortmeier, ``Robotic task sequencing problem: A survey,'' \emph{Journal of intelligent \& robotic systems}, vol.~80, pp. 279--298, 2015.

\bibitem{abdel1990application}
L.~L. Abdel-Malek and Z.~Li, ``The application of inverse kinematics in the optimum sequencing of robot tasks,'' \emph{The International Journal Of Production Research}, vol.~28, no.~1, pp. 75--90, 1990.

\bibitem{dubowsky1989planning}
S.~Dubowsky and T.~Blubaugh, ``Planning time-optimal robotic manipulator motions and work places for point-to-point tasks,'' \emph{IEEE Transactions on Robotics and Automation}, vol.~5, no.~3, pp. 377--381, 1989.

\bibitem{wurll1999multi}
C.~Wurll, D.~Henrich, and H.~W{\"o}rn, ``Multi-goal path planning for industrial robots,'' in \emph{1999 IEEE International Conference on Robotics and Automation (ICRA)}, 1999.

\bibitem{saha2006planning}
M.~Saha, T.~Roughgarden, J.-C. Latombe, and G.~S{\'a}nchez-Ante, ``Planning tours of robotic arms among partitioned goals,'' \emph{The International Journal of Robotics Research}, vol.~25, no.~3, pp. 207--223, 2006.

\bibitem{ruiz2018robotsp}
F.~Suárez-Ruiz, T.~S. Lembono, and Q.-C. Pham, ``Robotsp – a fast solution to the robotic task sequencing problem,'' in \emph{2018 IEEE International Conference on Robotics and Automation (ICRA)}, 2018, pp. 1611--1616.

\bibitem{zacharia2013task}
P.~T. Zacharia, E.~K. Xidias, and N.~A. Aspragathos, ``Task scheduling and motion planning for an industrial manipulator,'' \emph{Robotics and computer-integrated manufacturing}, vol.~29, no.~6, pp. 449--462, 2013.

\bibitem{verhoeven1995parallel}
M.~Verhoeven, E.~H. Aarts, and P.~Swinkels, ``A parallel 2-opt algorithm for the traveling salesman problem,'' \emph{Future Generation Computer Systems}, vol.~11, no.~2, pp. 175--182, 1995.

\bibitem{manfrin2006parallel}
M.~Manfrin, M.~Birattari, T.~St{\"u}tzle, and M.~Dorigo, ``Parallel ant colony optimization for the traveling salesman problem,'' in \emph{Ant Colony Optimization and Swarm Intelligence: 5th International Workshop, ANTS 2006, Brussels, Belgium, September 4-7, 2006. Proceedings 5}.\hskip 1em plus 0.5em minus 0.4em\relax Springer, 2006, pp. 224--234.

\bibitem{schneider1996searching}
J.~Schneider, C.~Froschhammer, I.~Morgenstern, T.~Husslein, and J.~M. Singer, ``Searching for backbones—an efficient parallel algorithm for the traveling salesman problem,'' \emph{Computer Physics Communications}, vol.~96, no. 2-3, pp. 173--188, 1996.

\bibitem{romanuke2024deep}
V.~V. Romanuke, ``Deep clustering of the traveling salesman problem to parallelize its solution,'' \emph{Computers \& Operations Research}, vol. 165, p. 106548, 2024.

\bibitem{fiechter1994parallel}
C.-N. Fiechter, ``A parallel tabu search algorithm for large traveling salesman problems,'' \emph{Discrete Applied Mathematics}, vol.~51, no.~3, pp. 243--267, 1994.

\bibitem{gurobi}
\BIBentryALTinterwordspacing
{Gurobi Optimization, LLC}, ``{Gurobi Optimizer Reference Manual},'' 2023. [Online]. Available: \url{https://www.gurobi.com}
\BIBentrySTDinterwordspacing

\bibitem{chen2023elongation}
Z.~Chen, K.~Wang, and H.~Shi, ``Elongation of curvature-bounded path,'' \emph{Automatica}, vol. 151, p. 110936, 2023.

\bibitem{faria2018position}
C.~Faria, F.~Ferreira, W.~Erlhagen, S.~Monteiro, and E.~Bicho, ``Position-based kinematics for 7-dof serial manipulators with global configuration control, joint limit and singularity avoidance,'' \emph{Mechanism and Machine Theory}, vol. 121, pp. 317--334, 2018.

\bibitem{he2021analytical}
Y.~He and S.~Liu, ``Analytical inverse kinematics for franka emika panda--a geometrical solver for 7-dof manipulators with unconventional design,'' in \emph{2021 9th International Conference on Control, Mechatronics and Automation (ICCMA)}.\hskip 1em plus 0.5em minus 0.4em\relax IEEE, 2021, pp. 194--199.

\bibitem{singh2010analytical}
G.~K. Singh and J.~Claassens, ``An analytical solution for the inverse kinematics of a redundant 7dof manipulator with link offsets,'' in \emph{2010 IEEE/RSJ International Conference on Intelligent Robots and Systems}.\hskip 1em plus 0.5em minus 0.4em\relax IEEE, 2010, pp. 2976--2982.

\bibitem{lee2012smooth}
J.~M. Lee and J.~M. Lee, \emph{Smooth manifolds}.\hskip 1em plus 0.5em minus 0.4em\relax Springer, 2012.

\bibitem{grimmett2020probability}
G.~Grimmett and D.~Stirzaker, \emph{Probability and random processes}.\hskip 1em plus 0.5em minus 0.4em\relax Oxford university press, 2020.

\bibitem{suli2003introduction}
E.~Suli and D.~F. Mayers, \emph{An introduction to numerical analysis}.\hskip 1em plus 0.5em minus 0.4em\relax Cambridge university press, 2003.

\bibitem{bhat2024AComplete}
A.~Bhat, G.~Gutow, B.~Vundurthy, Z.~Ren, S.~Rathinam, and H.~Choset, ``A complete algorithm for a moving target traveling salesman problem with obstacles,'' in \emph{International Workshop on the Algorithmic Foundations of Robotics}.\hskip 1em plus 0.5em minus 0.4em\relax Springer, 2024.

\bibitem{laporte1987generalized}
G.~Laporte, H.~Mercure, and Y.~Nobert, ``Generalized travelling salesman problem through n sets of nodes: the asymmetrical case,'' \emph{Discrete Applied Mathematics}, vol.~18, no.~2, pp. 185--197, 1987.

\bibitem{Mordukhovich2022Convex}
B.~S. Mordukhovich and M.~N. Nguyen, \emph{\BIBforeignlanguage{eng}{Convex Analysis and Beyond : Volume I: Basic Theory}}, 1st~ed., ser. Springer Series in Operations Research and Financial Engineering.\hskip 1em plus 0.5em minus 0.4em\relax Cham: Springer International Publishing, 2022.

\bibitem{faigl2017solution}
J.~Faigl, P.~V{\'a}{\v{n}}a, M.~Saska, T.~B{\'a}{\v{c}}a, and V.~Spurn{\`y}, ``On solution of the dubins touring problem,'' in \emph{2017 European Conference on Mobile Robots (ECMR)}.\hskip 1em plus 0.5em minus 0.4em\relax IEEE, 2017, pp. 1--6.

\end{thebibliography}

{\appendices
\section{Modification of Memetic Algorithm}\label{appendix:memetic_modifications}
In this section, we review the memetic algorithm from \cite{ding2022memetic} for the Dubins Close-Enough MT-TSP. We then explain how we extend \cite{ding2022memetic} to deal with time windows, parallelize \cite{ding2022memetic}, and specialize \cite{ding2022memetic} to the Close-Enough MT-TSP and the Variable-Speed Dubins MT-TSP, so that we can use it as a baseline.

\subsection{Review of Unmodified Memetic Algorithm}\label{subsec:memetic_review}
\cite{ding2022memetic} encodes a solution as a sequence $G = (g_1, g_2, \dots, g_{n_\text{tar}})$, where $g_k = (i_k, \theta_k, \Delta t_k)$. $i_k \in \mathcal{I}$ is a target index. $\theta_k \in [0, 2\pi]$ is an angle parameterizing a position along target $i_k$'s disc boundary, and $\Delta t_k$ is the agent's travel duration from target $i_{k - 1}$ to target $i_k$ (or for $k = 1$, the travel duration from $\qagtzero$ to $i_1$). 
Together, $\theta_k$ and $\Delta t_k$ indicate that the agent should intercept target $k$ at time $t_k = \sum\limits_{l=1}^{k}\Delta t_l$ and position $p_k = \tau_{i_k}(t_k) + r_{i_k}[\cos\theta_k, \sin\theta_k]^T$.
While \cite{ding2022memetic} considers a Dubins car, the solution encoding does not specify the agent's heading angle at each target and thus could map to several different agent trajectories. To decode a sequence $G = (g_1, g_2, \dots, g_{n_\text{tar}})$ into a single agent trajectory, \cite{ding2022memetic} begins with a zero-duration trajectory starting at $\qagtzero$ and time $0$. \cite{ding2022memetic} then iterates from $k = 1$ to $n_\text{tar}$, where each iteration appends a new trajectory segment to the existing trajectory, where the new segment ends at some configuration $q_{\text{a}, k}$. To generate this segment and $q_{\textnormal{a}, k}$, \cite{ding2022memetic} computes a path from $q_{\text{a}, k - 1}$ to $p_k$, with free terminal heading, curvature no larger than $\frac{1}{\rho}$ and length $v\Delta t_k$, where $v$ and $\rho$ are the fixed speed and fixed minimum turning radius considered by \cite{ding2022memetic}. The trajectory segment for iteration $k$ is obtained by moving the agent along the computed path with speed $v$. Letting the terminal heading of the path be $\phi_{\text{a}, k}$, we have $q_{\text{a}, k} = (p_k, \phi_{\text{a}, k})$.

\cite{ding2022memetic} begins by initializing a population of $n_\text{pop}$ solutions, $P = \{G_1, G_2, \dots G_{n_\text{pop}}\}$, then at each iteration, generates a new population by applying the following operations to each solution $G_m$ in the current population, in order: crossover, mutation, repair, and transformation. Crossover randomly selects another solution $G_n$ from $P$, then constructs a solution ${}^\text{crs}G_m$ using portions of $G_m$ and $G_n$. Mutation randomly perturbs ${}^\text{crs}G_m$ to generate a solution ${}^\text{mut}G_m$. Since crossover and mutation may produce an infeasible solution (i.e. where no curvature-bounded path with length $v\Delta t_k$ exists from $q_{\text{a}, k}$ to $p_k$ in the decoding procedure above), repair is applied to ${}^\text{mut}G_m$ to generate a feasible solution ${}^\text{rep} G_m$. 

To repair a solution ${}^\text{mut}G_m = (g_1, g_2, \dots, g_{n_\text{tar}})$, \cite{ding2022memetic} iterates from $k = 1$ to $n_\text{tar}$, where each iteration $k$ finds a $\Delta t_k$ such that a curvature-bounded path with length $v\Delta t_k$ exists from $q_{\text{a}, k - 1}$ to $p_k$. We refer to a $\Delta t_k$ satisfying these conditions as \emph{kinematically feasible}. \cite{ding2022memetic} finds $\Delta t_k$ by applying Newton's method to the root-finding problem $v\Delta t_k - d(\Delta t_k) = 0$, where $d(\Delta t_k)$ is the length of the shortest Dubins path with free terminal heading from $q_{\text{a}, k - 1}$ to $\tau_{i_k}(t_{k - 1} + \Delta t_k)$. \cite{ding2022memetic} terminates Newton's method when it finds a kinematically feasible $\Delta t_k$, which is not necessarily a root.

After generating a solution ${}^\text{rep} G_m$ via repair, \cite{ding2022memetic} performs a transformation step, which aims to modify the $\Delta t_k$ values in ${}^\text{rep} G_m$ to produce a lower-cost solution ${}^\text{trans} G_m$. The transformation iterates from $k = 1$ to $n_\text{tar}$ and attempts to minimize $\Delta t_k$ with other values in the encoding fixed. The minimization is formulated as the same root-finding problem as the repair step, but now \cite{ding2022memetic} iterates until $|v\Delta t_k - d(\Delta t_k)| < 0.01$. After generating ${}^\text{trans} G_m$. for each $G_m$, \cite{ding2022memetic} applies local search methods to improve the $\theta_k$ values of a subset of solutions, randomly selected from the top 50\%.

\subsection{Modification to Handle Time Windows}\label{sec:memetic_time_windows}
The first part of \cite{ding2022memetic} that we modify to handle time windows is the initial population construction. To construct an initial population, \cite{ding2022memetic} generates $n_\text{pop}$ solutions by randomly sampling different sequences of $g_k$ values. We found that in the presence of time windows, this method often generated infeasible solutions, so we instead generate the initial population by running Alg. \ref{alg:InitialTourGen} $n_\text{pop}$ times with different random seeds. 

We additionally modify the repair step to handle time windows. In particular, when using Newton's method for root-finding at iteration $k$, we clip $\Delta t_k$ iterates to the range $[\underline{t}_{i_k} - t_{k - 1}, \overline{t}_{i_k} - t_{k - 1}]$. It is possible that there is no kinematically feasible $\Delta t_k \in [\underline{t}_{i_k} - t_{k - 1}, \overline{t}_{i_k} - t_{k - 1}]$. To handle this complication, before running Newton's method, we check if $t_{k - 1} > \overline{t}_{i_k}$, and if so, we terminate the repair step. We discuss additional termination conditions for the repair that are specific to each problem variant in Appendix \ref{sec:memetic_ce} and \ref{sec:memetic_vs_dubins}. If we terminate the repair without obtaining a kinematically feasible $\Delta t_k \in [\underline{t}_{i_k} - t_{k - 1}, \overline{t}_{i_k} - t_{k - 1}]$, we skip the transformation step and set ${}^\text{trans}G_m$ equal to $G_m$. We also modify the transformation step to handle time windows in ways specific to each problem variant, discussed in Appendix \ref{sec:memetic_ce} and \ref{sec:memetic_vs_dubins}.

\subsection{Parallelization}
We parallelize the initial population construction of \cite{ding2022memetic} by parallelizing Alg. \ref{alg:InitialTourGen}, using the methods described in Section \ref{sec:numerical_results}. We also perform crossover, mutation, repair, and transformation on different solutions $G_k$ in different threads. Finally, we parallelize the local search as follows. In one of its local search methods, \cite{ding2022memetic} chooses some $k \in \mathcal{I}$ at random, samples different candidate values $\tilde{\theta}_k^1, \tilde{\theta}_k^2, \dots, \tilde{\theta}_k^{n_\text{sample}} \in [0, 2\pi]$, then sets $\theta_k$ to the sample giving the largest cost improvement. We evaluate the cost improvement for all the samples in parallel.

\subsection{Specialization to Close-Enough MT-TSP}\label{sec:memetic_ce}
Our specialization of \cite{ding2022memetic} to the Close-Enough MT-TSP relies on the fact that in all instances we generated, targets' speeds are no larger than the agent's maximum speed. We checked this condition for each instance as follows. First, we computed the control points of the velocity B-spline for each target. Next, we computed the norm of each control point, and we let $\bar{v}_i$ be the maximum of these norms for target $i$.
We verified that $\bar{v}_i \leq v_\text{max}$ for all targets, which ensures that targets move with speed no faster than $v_\text{max}$.

Before running Newton's method during iteration $k$ of the repair step, we check if the agent can meet target $i_k$ at the end of its time window, i.e. we check if $\|\tau_{i_k}(\overline{t}_{i_k}) - q_{\text{a}, k - 1}\| \leq v_\text{max}(\overline{t}_{i_k} - t_{k - 1})$. If not, $\bar{v}_{i_k} \leq v_\text{max}$ implies that the agent cannot meet target $i_k$ within its time window (see \cite{philip2025CStar}, Theorem 2), and we terminate the repair step. On the other hand, if $\|\tau_{i_k}(\overline{t}_{i_k}) - q_{\text{a}, k - 1}\| \leq v_\text{max}(\overline{t}_{i_k} - t_{k - 1})$, we run Newton's method, starting at the value of $\Delta t_k$ from ${}^\text{mut}G_m$, until we find a kinematically feasible $\Delta t_k \in [\underline{t}_{i_k} - t_{k - 1}, \overline{t}_{i_k} - t_{k - 1}]$.

Finally, we modified the transformation procedure for the Close-Enough MT-TSP. The transformation from \cite{ding2022memetic} attempts to minimize $\Delta t_k$, which only minimizes distance traveled for a fixed-speed agent. To handle a variable-speed agent, we use projected gradient descent to find $\Delta t_k$ that minimizes the straight-line distance from $q_{\text{a}, k - 1}$ to $\tau_{i_k}(t_{k - 1} + \Delta t_k)$.

\subsection{Specialization to Variable-Speed Dubins MT-TSP}\label{sec:memetic_vs_dubins}
To specialize \cite{ding2022memetic} to the Variable-Speed Dubins MT-TSP, we remove the $\theta_k$ values from the solution encoding, and thus we remove the local search on the $\theta_k$ values as well. Additionally, during iteration $k$ of the repair step, we perform the root-finding described in Appendix \ref{subsec:memetic_review} for each $v \in \mathcal{S}_\text{speed}$, rather than for a single speed. We run Newton's method until either (i) $|v\Delta t_k - d(\Delta t_k)| < 0.01$, (ii) two consecutive $\Delta t_k$ iterates are the same, or (iii) we reach a maximum number of iterations $n_\text{rep}$. To determine $n_\text{rep}$, we ran our modification of \cite{ding2022memetic} on 10 instances with 50 targets, initially with $n_\text{rep} = 1000$. Newton's method never needed more than 276 iterations to succeed, so we rounded to the nearest hundred and set $n_\text{rep} = 300$.

After performing a Newton solve for each $v$ at iteration $k$ of a repair, we have up to one $\Delta t_k$ for each $v$. If we have a $\Delta t_k$ for any $v$, we choose the smallest $\Delta t_k$. This procedure attempts to minimize $\Delta t_k$, rather than the method from \cite{ding2022memetic}, which only seeks a kinematically feasible $\Delta t_k$. While we initially attempted methods that stopped the $k$th repair iteration as soon as they found a kinematically feasible $\Delta t_k$, this resulted in fewer repairs succeeding and larger solution costs. This is in contrast to the Close-Enough MT-TSP, where we found that minimizing $\Delta t_k$ up to a tolerance resulted in higher cost than stopping at a kinematically feasible $\Delta t_k$.

We also modified the transformation as follows. During iteration $k$, while \cite{ding2022memetic} attempts to minimize $\Delta t_k$ for a fixed speed, we perform minimization for each $v \in \mathcal{S}_\text{speed}$, then choose the $v$ giving the minimum $v\Delta t_k$, aiming to minimize distance traveled. We solve the underlying root-finding problem with Newton's method and use the same clipping and termination conditions as in the repair step to handle time windows.
}

\end{document}